\newtheorem{definition}{Definition}
\newtheorem{theorem}{Theorem}[section]
\newtheorem{lemma}{Lemma}
\newtheorem{cor}[theorem]{Corollary}
\newcommand{\EE}{\mathbb{E}}
\newcommand{\RR}{\mathbb{R}}
\newcommand{\NN}{\mathcal{N}}
\newcommand{\PP}{\mathbb{P}}
\newcommand{\FF}{\mathcal{F}}
\newcommand{\XX}{\mathcal{X}}
\newcommand{\YY}{\mathcal{Y}}
\newcommand{\ZZ}{\mathcal{Z}}
\newcommand{\Scal}{\mathcal{S}}
\newcommand{\sign}{\text{sign}}
\newcommand{\inner}[1]{\langle #1\rangle}
\newcommand{\norm}[1]{\left\lVert#1\right\rVert}
\newcommand{\maxnorm}[1]{\left\lVert#1\right\rVert_{\max}}
\newcommand{\sm}{\text{sim}}
\newcommand{\removed}[1]{}
\newcommand{\ewp}{\odot}
\icmltitlerunning{On Symmetric and Asymmetric LSHs for Inner Product Search}
\begin{document} 

\twocolumn[
\icmltitle{On Symmetric and Asymmetric LSHs for Inner Product Search}

% It is OKAY to include author information, even for blind
% submissions: the style file will automatically remove it for you
% unless you've provided the [accepted] option to the icml2015
% package.
\icmlauthor{Behnam Neyshabur}{bneyshabur@ttic.edu}
\icmlauthor{Nathan Srebro}{nati@ttic.edu}
\icmladdress{Toyota Technological Institute at Chicago, Chicago, IL 60637, USA}

% You may provide any keywords that you 
% find helpful for describing your paper; these are used to populate 
% the "keywords" metadata in the PDF but will not be shown in the document
\icmlkeywords{Locality Sensitive Hashing, MIPS, Asymmetric LSH}

\vskip 0.3in]

%%%%%%%%%%%%%%%%%%%%%%%%%%%%
%%%%%%%%%%%%%%%%%%%%%%%%%%%%			ABSTRACT
%%%%%%%%%%%%%%%%%%%%%%%%%%%%
\begin{abstract}
  We consider the problem of designing locality sensitive hashes (LSH)
  for inner product similarity, and of the power of asymmetric hashes
  in this context.  \citet{shrivastava14} argue that there is no
  {\em symmetric} LSH for the problem and propose an {\em asymmetric}
  LSH based on different mappings for query and database points.
  However, we show there {\em does} exist a simple {\em symmetric} LSH
  that enjoys stronger guarantees and better empirical performance
  than the asymmetric LSH they suggest.  We also show a variant of the
  settings where asymmetry is in-fact needed, but there a different
  asymmetric LSH is required.
\end{abstract}

%%%%%%%%%%%%%%%%%%%%%%%%%%%%
%%%%%%%%%%%%%%%%%%%%%%%%%%%%			1. INTRODUCTION
%%%%%%%%%%%%%%%%%%%%%%%%%%%%
\section{Introduction}
Following \citet{shrivastava14}, we consider the problem of Maximum
Inner Product Search (MIPS): given a collection of ``database'' vectors
$\Scal\subset\RR^d$ and a query $q\in\RR^d$, find a data vector
maximizing the inner product with the query:
\begin{equation}\label{eq:MIPS}
p = \arg\max_{x\in \Scal}\;\; q^\top x
\end{equation}
MIPS problems of the form \eqref{eq:MIPS} arise, e.g.~when using
matrix-factorization based recommendation systems \citep{yehuda09,
  srebro05b, cremonesi10}, in multi-class prediction
\citep{dean13,jain09} and structural SVM \citep{joachims06,joachims09}
problems and in vision problems when scoring filters based on their
activations \citep{dean13} \citep[see][for more about
MIPS]{shrivastava14}.  In order to efficiently find approximate MIPS
solutions, \citet{shrivastava14} suggest constructing a Locality
Sensitive Hash (LSH) for inner product ``similarity''.

Locality Sensitive Hashing \citep{indyk98} is a popular tool for
approximate nearest neighbor search and is also widely used in other
settings \citep{gionis99,datar04, charikar02}.  An LSH is a random
mapping $h(\cdot)$ from objects to a small, possibly binary, alphabet,
where collision probabilities $\PP[h(x)=h(y)]$ relate to the desired
notion of similarity $\sm(x,y)$.  An LSH can in turn be used to
generate short hash words such that hamming distances between hash
words correspond to similarity between objects.  Recent studies have
also explored the power of asymmetry in LSH and binary hashing, where
two different mappings $f(\cdot),g(\cdot)$ are used to approximate
similarity, $\sm(x,y)\approx \PP[h(x)=g(y)]$ \citep{neyshabur13,
  neyshabur14}.  \citeauthor{neyshabur14} showed that even when the
similarity $\sm(x,y)$ is entirely symmetric, asymmetry in the hash may
enable obtaining an LSH when a symmetric LSH is not possible, or
enable obtaining a much better LSH yielding shorter and more accurate
hashes. 

Several tree-based methods have also been proposed for inner product search~\citep{ram12,koenigstein12,curtin13}.
\citet{shrivastava14} argue that tree-based methods, such as cone trees, are impractical in
high dimensions while the performance of LSH-based methods is in a way independent of dimension of the data. Although the exact regimes
under which LSH-based methods are superior to tree-based methods and vice versa are not fully established yet, the goal of this paper is to analyze different LSH methods and compare them with each other, rather than comparing to tree-based methods, so as to understand which LSH to use and why, in those regimes where tree-based methods are not practical.

Considering MIPS, \citet{shrivastava14} argue that there is no
symmetric LSH for inner product similarity, and propose two distinct
mappings, one of database objects and the other for queries, which
yields an asymmetric LSH for MIPS.  But the caveat is that they
consider different spaces in their positive and negative results: they
show nonexistence of a symmetric LSH over the entire space $\RR^d$,
but their asymmetric LSH is only valid when queries are normalized and
data vectors are bounded.  Thus, they do {\em not} actually show a
situation where an asymmetric hash succeeds where a symmetric hash is
not possible.  In fact, in Section \ref{sec:mips} we show a simple
{\em symmetric} LSH that is also valid under the same assumptions, and
it
even enjoys improved theoretical guarantees and empirical
performance!  This suggests that asymmetry might
actually not be required nor helpful for MIPS.

Motivated by understanding the power of asymmetry, and using this
understanding to obtain the simplest and best possible LSH for MIPS,
we conduct a more careful study of LSH for inner product similarity.
A crucial issue here is what is the space of vectors over which we
would like our LSH to be valid.  First, we show that over the entire
space $\RR^d$, not only is there no symmetric LSH, but there is also
no asymmetric LSH either (Section \ref{sec:rd}).  Second, as mentioned
above, when queries are normalized and data is bounded, a symmetric
LSH is possible and there is no need for asymmetry.  But when queries
and data vectors are bounded and queries are not normalized, we do
observe the power of asymmetry: here, a symmetric LSH is not possible,
but an asymmetric LSH exists (Section \ref{sec:sphere}).

As mentioned above, our study also yields an LSH for MIPS, which we
refer to as \textsc{simple-lsh}, which is not only symmetric but also
parameter-free and enjoys significantly better theoretical and
empirical compared to \textsc{l2-alsh(sl)} proposed by
\citet{shrivastava14}.  In Appendix~\ref{supp:sign} we show that all
of our theoretical observations about \textsc{l2-alsh(sl)} apply also
to the alternative hash \textsc{sign-lsh(sl)} put forth by
\citet{shrivastava14b}.

The transformation at the root of
\textsc{simple-lsh}  was also recently proposed by \citet{bachrah14},
who used it in a PCA-Tree data structure for speeding up the Xbox recommender system.
Here, we study the transformation as part of an LSH scheme,
investigate its theoretical properties, and compare
it to \textsc{ls-alsh(sl)}.

%%%%%%%%%%%%%%%%%%%%%%%%%%%%
%%%%%%%%%%%%%%%%%%%%%%%%%%%%			2. Locality Sensitive Hashing
%%%%%%%%%%%%%%%%%%%%%%%%%%%%
\section{Locality Sensitive Hashing}\label{sec:lsh}

A {\em \bf hash} of a set $\ZZ$ of objects is a random mapping from $\ZZ$
to some alphabet $\Gamma$, i.e.~a distribution over functions
$h:\ZZ\rightarrow\Gamma$.  The hash is sometimes thought of as a
``family'' of functions, where the  distribution over the
family is implicit.

When studying hashes, we usually study the behavior when comparing any
two points $x,y\in\ZZ$.  However, for our study here, it will be
important for us to make different assumptions about $x$ and
$y$---e.g., we will want to assume w.l.o.g.~that queries are
normalized but will not be able to make the same assumptions on
database vectors.  To this end, we define what it means for a hash to
be an LSH over a pair of constrained subspaces $\XX,\YY\subseteq\ZZ$.
Given a similarity function $\sm:\ZZ\times\ZZ\rightarrow\RR$, such as
inner product similarity $\sm(x,y)=x^\top y$, an LSH is defined as
follows\footnote{This is a formalization of the definition given by
  \citet{shrivastava14}, which in turn is a modification of the
  definition of LSH for distance functions \citep{indyk98}, where we
  also allow different constraints on $x$ and $y$.  Even
  though inner product similarity could be negative, this definition
  is only concerned with the positive values.}:
%%%%%%%%%%%%%%%%%%%%%%%%%%%%			LSH Definition
\begin{definition}[Locality Sensitive Hashing (LSH)]\label{def:lsh2}
  A hash is said to be a {\bf $(S,cS,p_1,p_2)$-LSH} for a
  similarity function $\sm$ {\bf over the pair of spaces
    $\XX,\YY\subseteq\ZZ$} if for any $x\in \XX$ and $y\in \YY$:
\begin{itemize}[noitemsep,topsep=0pt,parsep=0pt]
\item if $\sm(x,y) \geq S$ then $\PP_{h}[h(x)=h(y)] \geq p_1$,
\item if $\sm(x,y) \leq cS$ then $\PP_{h}[h(x)=h(y)] \leq p_2$.
\end{itemize}
When $\XX=\YY$, we say simply ``{\bf over the space $\XX$}''.
\end{definition}
Here $S>0$ is a threshold of interest, and for efficient approximate
nearest neighbor search, we need $p_1 > p_2$ and $c <1$.  In
particular, given an $(S,cS,p1,p2)$-LSH, a data structure for finding
$S$-similar objects for query points when $cS$-similar objects exist in
the database can be constructed in time $O(n^{\rho} \log n)$ and space
$O(n^{1+\rho})$ where $\rho = \frac{\log p_1}{\log p_2}$.  This
quantity $\rho$ is therefore of particular interest, as we are
interested in an LSH with minimum possible $\rho$, and we refer to it
as the {\em hashing quality}.

In Definition \ref{def:lsh2}, the hash itself is still symmetric,
i.e.~the same function $h$ is applied to both $x$ and $y$.  The only
asymmetry allowed is in the problem definition, as we allow requiring
the property for differently constrained $x$ and $y$.  This should be
contrasted with a truly asymmetric hash, where two different functions
are used, one for each space.  Formally, an {\em \bf asymmetric hash}
for a pair of spaces $\XX$ and $\YY$ is a joint distribution over
pairs of mappings $(f,g)$, $f:\XX\rightarrow\Gamma$,
$g:\YY\rightarrow\Gamma$.  The asymmetric hashes we consider will be
specified by a pair of deterministic mappings $P:\XX\rightarrow\ZZ$
and $Q:\YY\rightarrow\ZZ$ and a single random mapping
(i.e.~distribution over functions) $h:\ZZ\rightarrow\Gamma$, where
$f(x)=h(P(x))$ and $g(y)=h(Q(y))$.  Given a similarity function
$\sm:\XX\times\YY\rightarrow\RR$ we define\removed{\footnote{This is a
    formalization of the definition given by
    \citet{shrivastava14}, where we have made the distinction
    between the spaces $\XX$ and $\YY$, and the quantifiers on $x,y$,
    explicit.  We also distinguish between an asymmetric hash and an
    asymmetric notion of an LSH.}}:
%%%%%%%%%%%%%%%%%%%%%%%%%%%%			ALSH Definition
\begin{definition}[Asymmetric Locality Sensitive Hashing (ALSH)]\label{def:alsh}
  An asymmetric hash is said to be an {\bf $(S,cS,p_1,p_2)$-ALSH} for a
  similarity function $\sm$ {\bf over $\XX,\YY$}
  if for any $x\in \XX$ and $y\in \YY$:
\begin{itemize}[noitemsep,topsep=0pt,parsep=0pt]
\item if $\sm(x,y) \geq S$ then $\PP_{(f,g)}[f(x)=g(y)] \geq p_1$,
\item if $\sm(x,y) \leq cS$ then $\PP_{(f,g)}[f(x)=g(y)] \leq p_2$.
\end{itemize}
\end{definition}
Referring to either of the above definitions, we also say that a hash
is an {\bf $(S,cS)$-LSH} (or {\bf ALSH}) if there exists $p_2>p_1$ such
that it is an $(S,cS,p_1,p_2)$-LSH (or ALSH).  And we say it is a {\bf
  universal LSH} (or {\bf ALSH}) if for every $S>0,0<c<1$ it is an
$(S,cS)$-LSH (or ALSH).

%%%%%%%%%%%%%%%%%%%%%%%%%%%%
%%%%%%%%%%%%%%%%%%%%%%%%%%%%			3. No ALSH over R^d
%%%%%%%%%%%%%%%%%%%%%%%%%%%%
\section{No ALSH over $\RR^d$}\label{sec:rd}
Considering the problem of finding an LSH for inner product
similarity, \citet{shrivastava14} first observe that for any
$S>0,0<c<1$, there is no {\em symmetric} $(S,cS)$-LSH for $\sm(x,y)=x^\top
y$ over the entire space $\XX=\RR^d$, which prompted them to consider
asymmetric hashes.  In fact, we show that asymmetry doesn't help here,
as there also isn't any ALSH over the entire space:
%%%%%%%%%%%%%%%%%%%%%%%%%%%%			Theroem 1:	No ALSH over R^d
\begin{theorem}
For any $d\geq 2$, $S>0$ and $0<c<1$ there is no asymmetric hash that is an $(S,cS)$-ALSH for inner product similarity over $\XX=\YY=\RR^d$.
\end{theorem}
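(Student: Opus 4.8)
The plan is to argue by contradiction. Suppose an asymmetric hash, given by maps $f,g$ drawn jointly on a common probability space, is an $(S,cS,p_1,p_2)$-ALSH with $p_1>p_2$. It suffices to show that over $\XX=\YY=\RR^d$ one cannot separate the ``near'' collision probabilities from the ``far'' ones, i.e. that
$\inf\{\PP[f(x)=g(y)]:x^\top y\ge S\}\le\sup\{\PP[f(x)=g(y)]:x^\top y\le cS\}$,
since this already makes any choice $p_1>p_2$ impossible. The whole difficulty is that the hash is otherwise completely arbitrary (arbitrary, possibly infinite, alphabet), so the only structural handle is the collision/disagreement structure itself.

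The one tool I would use is that, because all the outputs $f(x),g(y)$ live on a single probability space, the disagreement $D(u,v)=\PP[u\ne v]$ satisfies the triangle inequality; in particular $\PP[f(x)\ne f(x')]\le \PP[f(x)\ne g(y)]+\PP[g(y)\ne f(x')]$, and symmetrically for the $g$'s. The feature special to $\RR^d$ that I would exploit is the unboundedness of the inner product in \emph{both} directions: a data point $Ru$ of large norm is ``near'' to an entire open half-space of queries (those with $u^\top y>0$) and ``far'' from the opposite half-space, because $(Ru)^\top y=R(u^\top y)\to\pm\infty$; moreover any two directions at an acute angle can be rescaled to realize the inner product exactly $S$ (a near pair) \emph{or} exactly $cS$ (a far pair, since $cS>0$). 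The hypothesis $d\ge 2$ enters here: it gives a two-dimensional plane in which I can choose a data direction and queries with independently prescribed positive, zero, or negative projections. On a bounded domain none of this scaling is available, which is precisely why a symmetric LSH exists there but not here.

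The main obstacle is that the statement rules out \emph{every} gap $p_1>p_2$, including arbitrarily small ones, and this defeats any finite argument. Chaining the triangle inequality along a path of near pairs connecting the two endpoints of a far pair only yields $\PP[f(x)=g(y)]\ge 1-L(1-p_1)$ for a path of length $L$, which contradicts the far constraint $\PP[f(x)=g(y)]\le p_2$ only when $L<\frac{1-p_2}{1-p_1}$. For a small gap the right-hand side is just above $1$, forcing $L=1$, i.e. a single pair that is simultaneously near and far, which is impossible since a pair has one inner product. Equivalently, any finite configuration of constraints is satisfiable (assign distinct alphabet symbols along a matching), so no finite chain, and no bounded number of triangle-inequality hops, can work.

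I therefore expect the real work to be a limiting/continuous construction rather than a finite one. The clean way to certify gap-independence is to pass to the dual view: $\PP[f(x)=g(y)]$ is a convex combination of agreement indicators $\mathbf{1}_{F(x)=G(y)}$ for deterministic colorings $F,G$, each constant on the blocks of a partition. Infeasibility for all gaps is then witnessed by two probability measures, $\mu$ supported on near pairs and $\nu$ on far pairs, satisfying $\int \mathbf{1}_{F(x)=G(y)}\,d\mu\le\int \mathbf{1}_{F(x)=G(y)}\,d\nu$ for \emph{every} coloring pair; averaging gives $p_1\le\int\PP[f(x)=g(y)]\,d\mu\le\int\PP[f(x)=g(y)]\,d\nu\le p_2$, the contradiction. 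Building $\mu$ and $\nu$ is the heart of the proof: I would spread mass over directions and over scalings so that, against any fixed partition, the far pairs are at least as likely to stay ``uncut'' as the near pairs, using the fact that the same acute angle can be placed at either threshold. Turning this ``same angle can be near or far'' heuristic into a measure-level domination that holds simultaneously against every partition is the step I expect to be genuinely delicate; the triangle inequality and the scaling observations are routine by comparison.
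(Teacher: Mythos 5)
Your diagnosis of the core difficulty is right --- the statement must rule out \emph{every} gap $p_1>p_2$, and a triangle-inequality chain of length $L$ only contradicts the far constraint when $L<(1-p_2)/(1-p_1)$ --- but the conclusion you draw from it, that no finite configuration can work and that the real proof must be a limiting/continuous construction, is where the proposal goes wrong. What is impossible is a \emph{single} finite configuration that defeats all gaps at once; the paper instead exhibits, for each gap $\epsilon=p_1-p_2$, a finite configuration of size $N(\epsilon)$ that defeats that gap. Concretely, it takes $x_i=[-i,1]$ and $y_j=[S(1-c),\,S(1-c)j+S]$ in $\RR^2$, so that $x_i^\top y_j=S(1-c)(j-i)+S$ is $\geq S$ for $j\geq i$ and $\leq cS$ for $j<i$: the near/far pattern is exactly the $N\times N$ triangular sign matrix. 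The quantitative engine is then matrix-analytic rather than combinatorial: any collision probability matrix $P$ has $\maxnorm{P}\leq 1$ (it is an expectation of biclustering matrices $R_fR_g^\top$ with unit-norm rows), so $\maxnorm{(P-\theta)/\epsilon}\leq 2/\epsilon$, while the margin complexity of the triangular sign matrix is $\Omega(\log N)$ by Forster et al., forcing $\epsilon=O(1/\log N)$ and a contradiction once $N$ is large enough. This is precisely the ``soft'' strengthening of your chaining idea that you sensed was needed, but it lives in a growing finite matrix, not in a pair of measures.

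Your proposed dual route --- measures $\mu,\nu$ on near and far pairs with $\int\mathbf{1}_{F(x)=G(y)}\,d\mu\leq\int\mathbf{1}_{F(x)=G(y)}\,d\nu$ for every deterministic coloring pair --- is a legitimate certificate format, but you do not construct the measures, and you yourself flag that construction as the step you cannot carry out; as written the proposal therefore contains no proof of infeasibility. The missing idea is twofold: the identification of the triangular collision pattern inside a two-dimensional subspace (your ``same angle at either threshold'' heuristic made precise by the explicit sequences above), and a complexity invariant of that pattern --- margin complexity pitted against the universal max-norm bound on collision probability matrices --- that grows without bound with the size of the configuration. Without some such growing invariant, no amount of rescaling or angle-placement reasoning will rule out arbitrarily small gaps.
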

\begin{proof}
  Assume for contradiction there exists some $S>0,0<c<1$ and $p_1>p_2$
  for which there exists an $(S,cS,p_1,p_2)$-ALSH $(f,g)$ for inner
  product similarity over $\RR^2$ (an ALSH for inner products over
  $\RR^d$, $d>2$, is also an ALSH for inner products over a two-dimensional
  subspace, i.e.~over $\RR^2$, and so it is enough to consider
  $\RR^2$).  Consider the following
  two sequences of points:
\begin{align*}
x_i &= [-i,1]\\
y_j &= [S(1-c),S(1-c)j+S].
\end{align*}
For any $N$ (to be set later), define the $N\times N$ matrix $Z$ as follows:
\begin{equation}
Z(i,j)=
\begin{cases}
1 & x_i^\top y_j \geq S\\
-1 & x_i^\top y_j \leq cS\\
0 & \text{otherwise}.
\end{cases}
\end{equation}
Because of the choice of $x_i$ and $y_j$, the
matrix $Z$ does not actually contain zeros, and is in-fact triangular
with $+1$ on and above the diagonal and $-1$ below it.  Consider also
the matrix $P\in\RR^{N\times N}$ of collision probabilities
$P(i,j)=\PP_{(f,g)}[f(x_i)=g(x_j)]$.  Setting $\theta=(p_1+p_2)/2<1$ and
$\epsilon=(p_1-p_2)/2>0$, the ALSH property implies that for every $i,j$:
\begin{equation}\label{eq:mc}
Z(i,j) (P(i,j)-\theta)\geq \epsilon
\end{equation}
or equivalently:
\begin{equation}\label{eq:mc}
Z \ewp \frac{P-\theta}{\epsilon}\geq 1
\end{equation}
where $\ewp$ denotes element-wise (Hadamard) product. Now, for a sign
matrix $Z$, the {\em margin complexity} of $Z$ is defined as
$mc(Z)=\inf_{Z\ewp X\geq 1}\maxnorm{X}$ \citep[see][and also for the
definition of the max-norm $\maxnorm{X}$]{srebro05}, and
we know that the margin complexity of an $N \times N$ triangular
matrix is bounded by $mc(Z)=\Omega(\log N)$ \citep{forster03},
implying
\begin{equation}
  \label{eq:maxnormPthetaeps}
 \maxnorm{(P-\theta)/\epsilon}=\Omega(\log N). 
\end{equation}
Furthermore, any collision probability matrix has max-norm
$\maxnorm{P} \leq1$ \citep{neyshabur14}, and shifting the matrix by
$0<\theta<1$ changes the max-norm by at most $\theta$, implying
$\maxnorm{P-\theta}\leq 2$, which combined with
\eqref{eq:maxnormPthetaeps} implies $\epsilon=O(1/\log N)$. For any
$\epsilon=p_1-p_2>0$, selecting a large enough $N$ we get a
contradiction.
\end{proof}
For completeness, we also include in Appendix~\ref{supp:max} a full
definition of the max-norm and margin complexity, as well as the
bounds on the max-norm and margin complexity used in the proof above.

%%%%%%%%%%%%%%%%%%%%%%%%%%%%
%%%%%%%%%%%%%%%%%%%%%%%%%%%%			4. MIPS
%%%%%%%%%%%%%%%%%%%%%%%%%%%%
\section{Maximum Inner Product Search}\label{sec:mips}

We saw that no LSH, nor ALSH, is possible for inner product similarity
over the entire space $\RR^d$.  Fortunately, this is not required for
MIPS.  As pointed out by \citet{shrivastava14}, we can assume the following
without loss of generality:
\begin{itemize*}
\item The query $q$ is normalized: Since given a vector $q$, the norm
  $\norm{q}$ does not affect the argmax in \eqref{eq:MIPS}, we can
  assume $\norm{q}=1$ always.
\item The database vectors are bounded inside the unit
  sphere: We assume $\norm{x} \leq 1$ for all $x\in\Scal$.
  Otherwise we can rescale all vectors without changing the argmax.
\end{itemize*}
We cannot, of course, assume the vectors $x$ are normalized.  This
means we can limit our attention to the behavior of the hash over
$\XX_\bullet=\left\{x\in\RR^d \;\middle|\;\norm{x}\leq 1\right\}$ and
$\YY_\circ=\left\{q\in\RR^d \;\middle|\;\norm{q}=1\right\}$. Indeed,
\citet{shrivastava14} establish the existence of an {\em
  asymmetric} LSH, which we refer to as \textsc{l2-alsh(sl)}, over
this pair of database and query spaces.  Our main result in this
section is to show that in fact there does exists a simple,
parameter-free, universal, {\em symmetric} LSH, which we refer to as
\textsc{simple-lsh}, over $\XX_\bullet,\YY_\circ$.  We see then that
we {\em do} need to consider the hashing property asymmetrically (with
different assumptions for queries and database vectors), but the same
hash function can be used for both the database and the queries and
there is no need for two different hash functions or two different
mappings $P(\cdot)$ and $Q(\cdot)$.

But first, we review \textsc{l2-alsh(sl)} and note that it is not
universal---it depends on three parameters and no setting of the
parameters works for all thresholds $S$.  We also compare our
\textsc{simple-lsh} to \textsc{l2-alsh(sl)} (and to the 
recently suggested \textsc{sign-alsh(sl)}) both in terms of the hashing quality
$\rho$ and empirically of movie recommendation data sets.
%%%%%%%%%%%%%%%%%%%%%%%%%%%%			L2-ALSH(SL)			
\subsection{L2-ALSH(SL)}

For an integer parameter $m$, and real valued parameters $0<U<1$ and
$r>0$, consider the following pair of mappings: 
\begin{equation}
  \label{eq:theirPQ}
  \begin{aligned}
    P(x)&=[Ux;\norm{Ux}^2;\norm{Ux}^4;\dots;\norm{Ux}^{2^m}]\\
    Q(y)&=[y;1/2;1/2;\dots;1/2],
  \end{aligned}
\end{equation}
combined with the standard $L_2$ hash function
\begin{equation}
  \label{eq:l2h}
  h_{a,b}^{L_2}(x) = \bigg\lfloor \frac{a^\top x + b}{r} \bigg\rfloor
\end{equation}
where $a\sim\NN(0,I)$ is a spherical multi-Gaussian random vector,
$b\sim \mathcal{U}(0,r)$ is a uniformly distributed random variable on $[0,r]$.\removed{We know that for any $x,y\in \RR$, the collision probability of the hash $h_{a,b}^{L_2}$ can be written as \citep{datar04}:
\begin{eqnarray}\label{Fr}
\nonumber
\PP\big[h_{a,b}^{L_2}(x)=h_{a,b}^{L_2}(y)\big] &=&1-2\Phi(-r/\delta)-\frac{1-e^{-\frac{1}{2}(\frac{r}{\delta})^2}}{\sqrt{2\pi} (r/\delta)}\\
&=& \FF_r(\delta)
\end{eqnarray}
where $\delta=\norm{x-y}$ and $\Phi(x)=\int_{-\infty}^x
\frac{1}{\sqrt{2\pi}}e^{-x^2/2}dx$ is the cumulative density function
of standard normal distribution.}  The alphabet $\Gamma$ used is the
integers, the intermediate space is $\ZZ=\RR^{d+m}$ and the asymmetric
hash \textsc{l2-alsh(sl)}, parameterized by
$m,U$ and $r$, is then given by
\begin{equation}
  \label{eq:MIPS-ALSH}
(f(x),g(q))=(h_{a,b}^{L_2}(P(x)),h_{a,b}^{L_2}(Q(q))).
\end{equation}
\citet{shrivastava14}
establish\footnote{\citet{shrivastava14} have the scaling by $U$
  as a separate step, and state their hash as an $(S_0,cS_0)$-ALSH
  over $\{\norm{x}\leq U\},\{\norm{q}=1\}$, where the threshold
  $S_0=US$ is also scaled by $U$.  This is equivalent to the
  presentation here which integrates the pre-scaling step, which also
  scales the threshold, into the hash.} that for any $0<c<1$ and
$0<S<1$, there exists $0<U<1$, $r>0$, $m\geq 1$, such that
\textsc{l2-alsh(sl)} is an $(S,cS)$-ALSH over $\XX_\bullet,\YY_\circ$.  They
furthermore calculate the hashing quality $\rho$ as a function of
$m,U$ and $r$, and numerically find the optimal $\rho$ over a grid of
possible values for $m,U$ and $r$, for each choice of $S,c$.

Before moving on to presenting a symmetric hash for the problem, we
note that \textsc{l2-alsh(sl)} is not {\em universal} (as defined at
the end of Section \ref{sec:lsh}).  That is, not only might the
optimal $m,U$ and $r$ depend on $S,c$, but in fact there is no choice
of the parameters $m$ and $U$ that yields an ALSH for all $S,c$, or even for
all ratios $c$ for some specific threshold $S$ or for all thresholds
$S$ for some specific ratio $c$. This is unfortunate, since in MIPS
problems, the relevant threshold $S$ is the maximal inner product
$\max_{x\in \Scal} q^\top x$ (or the threshold inner product if we are
interested in the ``top-$k$'' hits), which typically varies with the
query.  It is therefore desirable to have a single hash that works for
all thresholds.
%%%%%%%%%%%%%%%%%%%%%%%%%%%%			Lemma 1:	L2-ALSH bound
\begin{lemma}\label{lem:l2alshbound}
  For any $m,U,r$, and for any $0<S<1$ and
  $$1-\frac{U^{2^{m+1}-1}(1-S^{2^{m+1}})}{2S}\leq c < 1,$$
  \textsc{l2-alsh(sl)} is not an $(S,cS)$-ALSH for inner product
  similarity over $\XX_\bullet=\left\{ x \middle| \norm{x}\leq 1 \right\}$ and
  $\YY_\circ=\left\{ q \middle| \norm{q}=1 \right\}$.
\end{lemma}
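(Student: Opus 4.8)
The plan is to exploit the standard fact \citep{datar04} that the collision probability of the $L_2$ hash $h_{a,b}^{L_2}$ is a monotonically decreasing function of the Euclidean distance between its two arguments. Under \textsc{l2-alsh(sl)} the collision probability of a database point $x\in\XX_\bullet$ and a query $q\in\YY_\circ$ is therefore a decreasing function of $\norm{P(x)-Q(q)}$ alone. The first step is to compute this distance explicitly; the crucial observation will be that it depends not only on the similarity $x^\top q$ but also, separately, on $\norm{x}$. This extra dependence is exactly what lets us invert the ordering of collision probabilities: I will produce a high-similarity pair whose mapped points are \emph{farther} apart than those of a low-similarity pair, which is incompatible with any $(S,cS,p_1,p_2)$-ALSH guarantee having $p_1>p_2$.

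For the distance computation, using $\norm{q}=1$ and expanding the appended coordinates, I get
\[ \norm{P(x)-Q(q)}^2 = \norm{Ux-q}^2 + \sum_{k=1}^m\Big(\norm{Ux}^{2^k}-\tfrac12\Big)^2. \]
The key simplification is that $\big(\norm{Ux}^{2^k}-\tfrac12\big)^2 = \norm{Ux}^{2^{k+1}}-\norm{Ux}^{2^k}+\tfrac14$, so the sum telescopes and the $\norm{Ux}^2$ term cancels against the one in $\norm{Ux-q}^2$, leaving
\[ \norm{P(x)-Q(q)}^2 = 1 - 2U\,x^\top q + U^{2^{m+1}}\norm{x}^{2^{m+1}} + \frac{m}{4}. \]

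Next I would exhibit two database points sharing a single unit query $q$ (working in a two-dimensional subspace so that the prescribed inner products are realizable): a point $x_1$ with $\norm{x_1}=1$ and $x_1^\top q = S$, so that $\sm(x_1,q)\ge S$, and a point $x_2$ with $\norm{x_2}=S$ and $x_2^\top q = cS$, so that $\sm(x_2,q)\le cS$; both have norm at most $1$ and hence lie in $\XX_\bullet$. Writing $D_1,D_2$ for the corresponding distances and substituting into the formula above gives
\[ D_1^2 - D_2^2 = U^{2^{m+1}}\big(1-S^{2^{m+1}}\big) - 2US(1-c). \]
A one-line rearrangement shows $D_1^2\ge D_2^2$ if and only if $c\ge 1-\frac{U^{2^{m+1}-1}(1-S^{2^{m+1}})}{2S}$, which is precisely the hypothesized range of $c$. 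In that range $D_1\ge D_2$, so by monotonicity the high-similarity pair $(x_1,q)$ collides with probability \emph{no larger} than the low-similarity pair $(x_2,q)$; any valid ALSH would force the former to be $\ge p_1$ and the latter to be $\le p_2$, yielding $p_1\le p_2$ and contradicting $p_1>p_2$. Hence no admissible $p_1>p_2$ exist and \textsc{l2-alsh(sl)} is not an $(S,cS)$-ALSH over $\XX_\bullet,\YY_\circ$.

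I expect the only real content to be the telescoping distance computation and, in particular, spotting the isolated $\norm{x}^{2^{m+1}}$ dependence; once that is in hand, choosing the two norms ($\norm{x_1}=1$ and $\norm{x_2}=S$) to land exactly on the stated threshold is routine algebra. The one point to be mildly careful about is realizability of the chosen inner products, which is why I pass to a two-dimensional subspace.
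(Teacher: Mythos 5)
Your proof is correct and follows essentially the same route as the paper's: both compute $\norm{P(x)-Q(q)}^2 = 1+\tfrac{m}{4}+U^{2^{m+1}}\norm{x}^{2^{m+1}}-2U\,x^\top q$ and exhibit a high-similarity pair and a low-similarity pair whose mapped distances are ordered the wrong way, contradicting monotonicity of the $L_2$-hash collision probability $\FF_r$. The only immaterial difference is your choice of low-similarity witness (norm $S$ with inner product $cS$, which makes the stated threshold on $c$ exactly tight) versus the paper's $y=cSq$ of norm $cS$; both work under the hypothesized range of $c$.
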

\begin{proof}
  Assume for contradiction that it is an $(S,cS)$-ALSH.  For any query
  point $q\in \YY_\circ$, let $x\in \XX_\bullet$ be a vector s.t.~$q^\top x=S$ and
  $\|x\|_2=1$ and let $y=cSq$, so that $q^\top y = cS$. We have that:
\begin{eqnarray*}
p_1\leq\PP\big[h_{a,b}^{L_2}(P(x))=h_{a,b}^{L_2}(Q(q))\big] =\FF_r(\|P(x)-Q(q)\|_2)\\
p_2\geq\PP\big[h_{a,b}^{L_2}(P(y))=h_{a,b}^{L_2}(Q(q))\big] =\FF_r(\|P(y)-Q(q)\|_2)
\end{eqnarray*}
where $\FF_r(\delta)$ is a monotonically decreasing function of
$\delta$ \citep{datar04}.  To get a contradiction it is therefor
enough to show that  $\norm{P(y)-Q(q)}^2\leq \norm{P(x)-Q(q)}^2$. We have:
\begin{align*}
\norm{P(y)-Q(q)}^2 &= 1+\frac{m}{4}+ \norm{y}^{2^{m+1}}-2q^\top y\\
&= 1+\frac{m}{4}+ (cSU)^{2^{m+1}}-2cSU\\
\intertext{using $1-\frac{U^{2^{m+1}-1}(1-S^{2^{m+1}})}{2S}\leq c < 1$:}
&< 1+\frac{m}{4}+ (SU)^{2^{m+1}}-2cSU\\
&\leq 1+\frac{m}{4}+U^{2^{m+1}} -2SU\\
&=\norm{P(x)-Q(q)}^2\qedhere
\end{align*}
\end{proof}
%%%%%%%%%%%%%%%%%%%%%%%%%%%%			Corollary 1 : L2-ALSH not universal
\begin{cor}
  For any $U,m$ and $r$, \textsc{l2-alsh(sl)} is not a universal ALSH
  for inner product similarity over $\XX_\bullet=\left\{ x \middle|
    \norm{x}\leq 1 \right\}$ and $\YY_\circ=\left\{ q \middle| \norm{q}=1
  \right\}$.  Furthermore, for any $c<1$, and any choice of $U,m,r$
  there exists $0<S<1$ for which \textsc{l2-alsh(sl)} is not an
  $(S,cS)$-ALSH over $\XX_\bullet,\YY_\circ$, and for any $S<1$ and any choice of
  $U,m,r$ there exists $0<c<1$ for which \textsc{l2-alsh(sl)} is not
  an $(S,cS)$-ALSH over $\XX_\bullet,\YY_\circ$.
\end{cor}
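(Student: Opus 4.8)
The plan is to read off everything directly from Lemma \ref{lem:l2alshbound}. Define the threshold ratio
\[
c^*(S) = 1 - \frac{U^{2^{m+1}-1}\bigl(1-S^{2^{m+1}}\bigr)}{2S},
\]
so that the lemma says precisely: for fixed parameters $m,U,r$, whenever $0<S<1$ and $c^*(S)\le c<1$, the hash \textsc{l2-alsh(sl)} fails to be an $(S,cS)$-ALSH over $\XX_\bullet,\YY_\circ$. The entire corollary then follows by studying the elementary function $c^*(S)$, using only that the parameters satisfy $0<U<1$ and that $0<S<1$.

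First I would establish the key inequality $c^*(S)<1$ for every $0<S<1$. This is immediate: since $0<U<1$ gives $U^{2^{m+1}-1}>0$, and $0<S<1$ gives $1-S^{2^{m+1}}>0$, the subtracted term is strictly positive, hence $c^*(S)<1$. Consequently the interval of ``bad'' ratios $\bigl[\max\{0,c^*(S)\},\,1\bigr)$ is nonempty for each fixed $S$. This yields the non-universality claim and the third claim at once: for any fixed $S<1$ (and any $U,m,r$) pick any $c$ in this nonempty interval, and the lemma exhibits a pair $(S,cS)$ on which the hash is not an ALSH; in particular no single choice of $U,m,r$ can work for all $(S,c)$, so the hash is not universal.

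For the second claim I would fix $c<1$ and let $S\to 0^{+}$. The numerator of the subtracted term tends to $U^{2^{m+1}-1}>0$ while the denominator $2S\to 0^{+}$, so the subtracted term tends to $+\infty$ and therefore $c^*(S)\to -\infty$. Hence for all sufficiently small $S>0$ we have $c^*(S)\le c$, and choosing any such $S\in(0,1)$ places the fixed $c$ into the bad interval $[c^*(S),1)$; the lemma then shows \textsc{l2-alsh(sl)} is not an $(S,cS)$-ALSH over $\XX_\bullet,\YY_\circ$.

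The argument involves no real obstacle beyond the bookkeeping of the two quantifier orders; the only points deserving care are the limiting behaviour of $c^*(S)$ as $S\to0^+$, which must be checked to guarantee a valid threshold $S\in(0,1)$ (rather than merely the degenerate $S=0$) for the second claim, and the strict inequality $c^*(S)<1$, which ensures a genuinely nonempty range of ratios $c\in(0,1)$ for the first and third claims.
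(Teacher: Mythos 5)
Your proposal is correct and follows exactly the route the paper intends: the corollary is stated without proof as an immediate consequence of Lemma \ref{lem:l2alshbound}, and your analysis of the threshold $c^*(S)$ (strictly below $1$ for every fixed $S\in(0,1)$, and tending to $-\infty$ as $S\to 0^+$ for fixed $c$) is precisely the bookkeeping needed to extract all three claims from that lemma.
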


In Appendix \ref{supp:sign}, we show a similar non-universality
result also for \textsc{sign-alsh(sl)}.
%%%%%%%%%%%%%%%%%%%%%%%%%%%%			SIMPLE-LSH	
\subsection{SIMPLE-LSH}
We propose here a simpler, parameter-free, symmetric LSH, which we
call \textsc{simple-lsh}.

For $x\in\RR^d$, $\norm{x}\leq 1$, define
$P(x)\in \RR^{d+1}$ as follows~\cite{bachrah14}:
\begin{equation}
  \label{eq:simpleP}
  P(x) = \big[x;\sqrt{1-\|x\|_2^2}\big]
\end{equation}
For any $x\in\XX_\bullet$ we have $\norm{P(x)}=1$, and for any
$q\in\YY_\circ$, since $\norm{q}=1$, we have:
\begin{equation}\label{eq:PqPx}
P(q)^\top P(x) = \big[q; 0\big]^\top \big[x;\sqrt{1-\|x\|_2^2}\big] = q^\top x
\end{equation}
Now, to define the hash \textsc{simple-lsh}, take a spherical random
vector $a\sim\NN(0,I)$ and consider the following random mapping into the binary
alphabet $\Gamma=\{\pm 1\}$:
\begin{equation}\label{eq:ha}
h_{a}(x) = \sign(a^\top P(x)).
\end{equation}

%%%%%%%%%%%%%%%%%%%%%%%%%%%%			Figure 1: rho
\begin{figure*}[t!]
\hbox{ \centering
\setlength{\epsfxsize}{0.332\textwidth}
\epsfbox{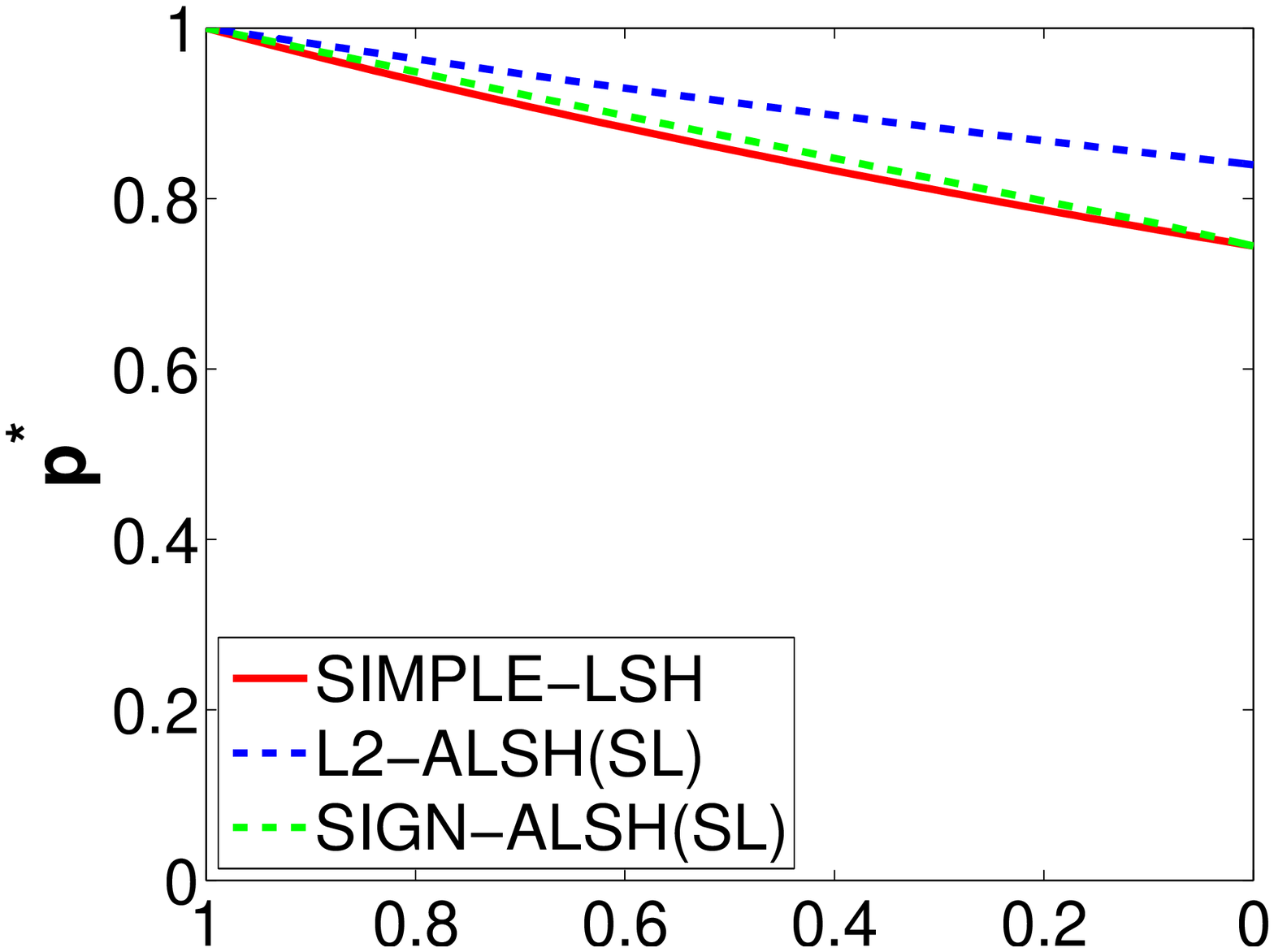}
\setlength{\epsfxsize}{0.31\textwidth}
\epsfbox{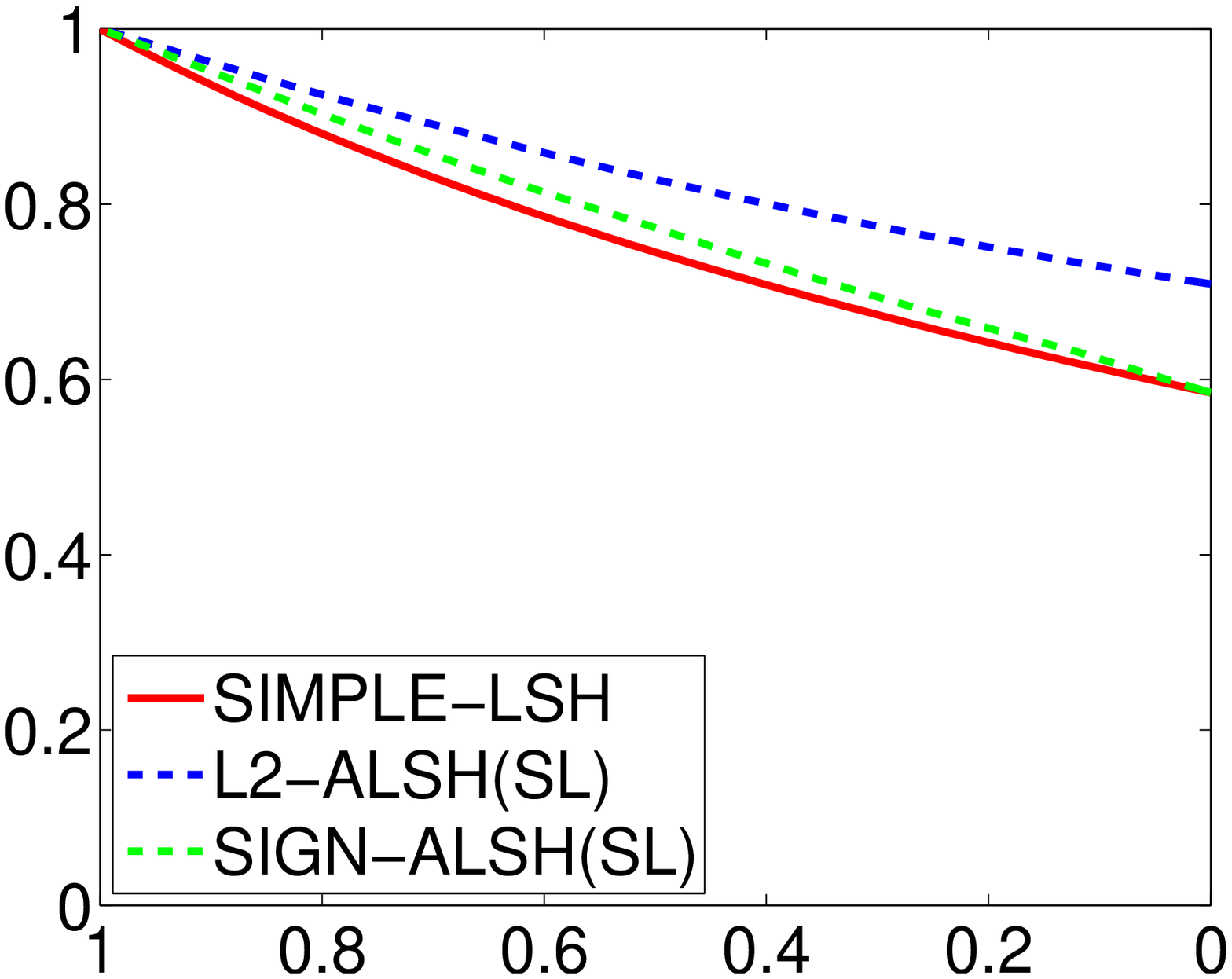}
\setlength{\epsfxsize}{0.31\textwidth}
\epsfbox{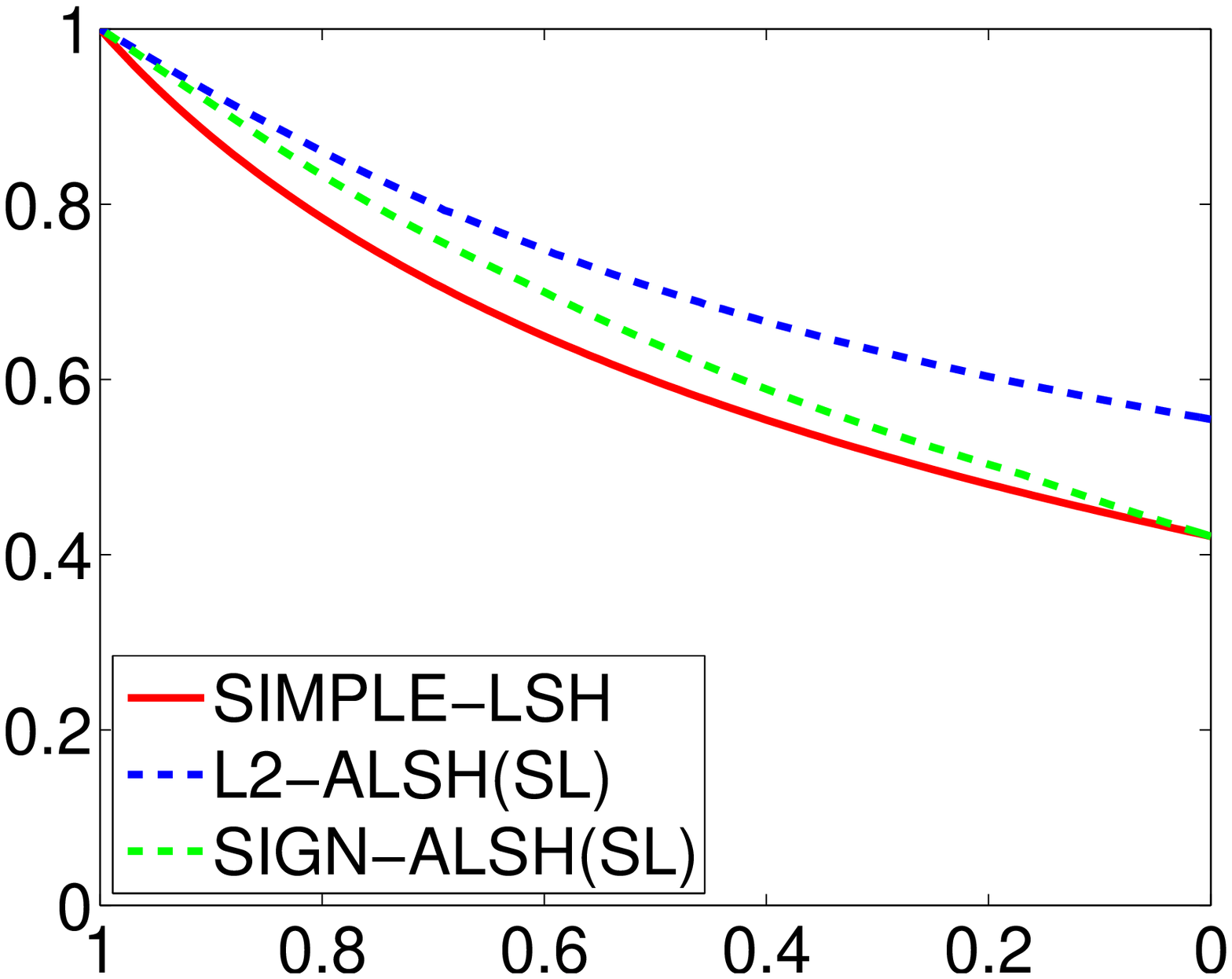}
}
\vspace{0.1in}
\hbox{ \centering 
\setlength{\epsfxsize}{0.332\textwidth}
\epsfbox{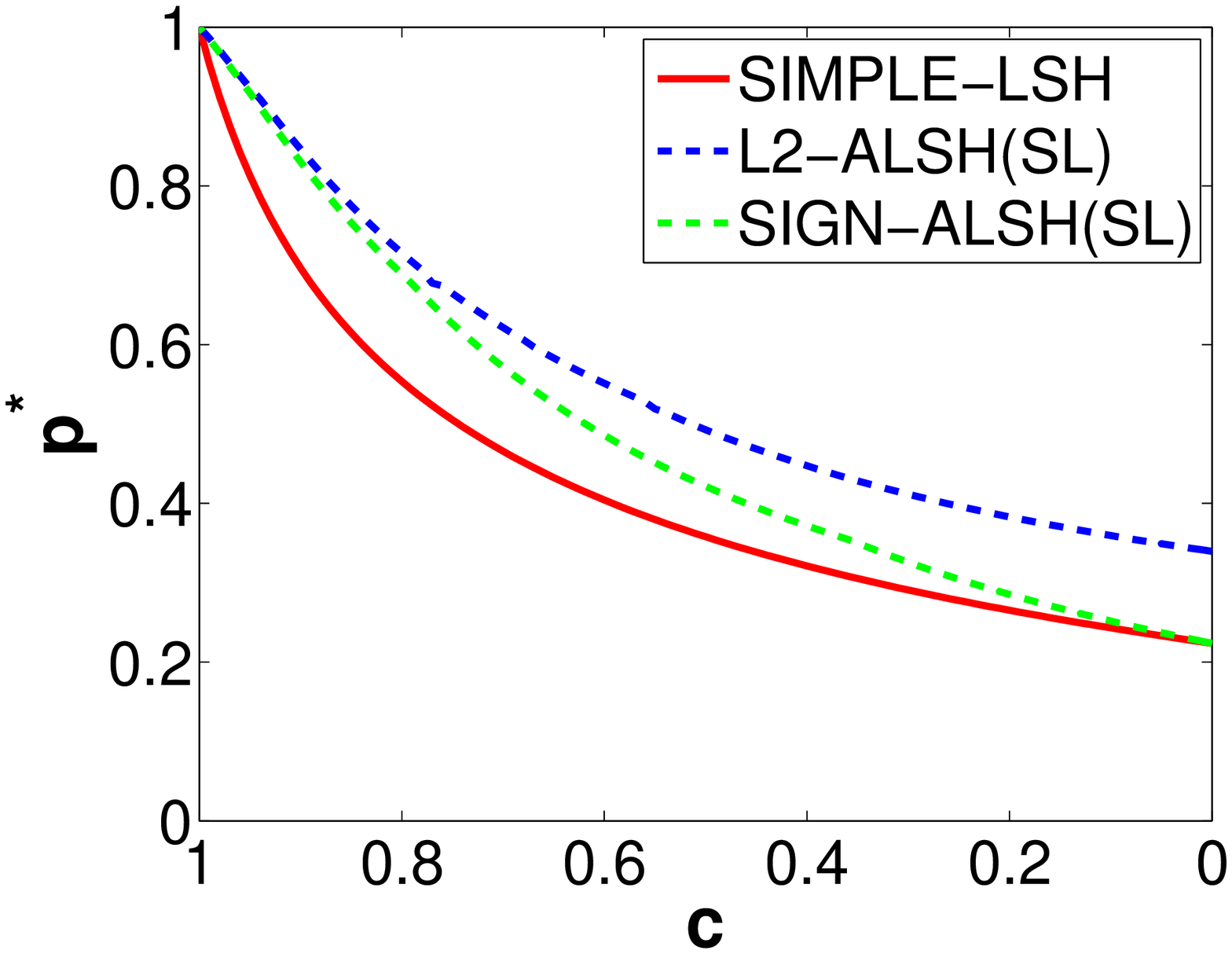}
\setlength{\epsfxsize}{0.31\textwidth}
\epsfbox{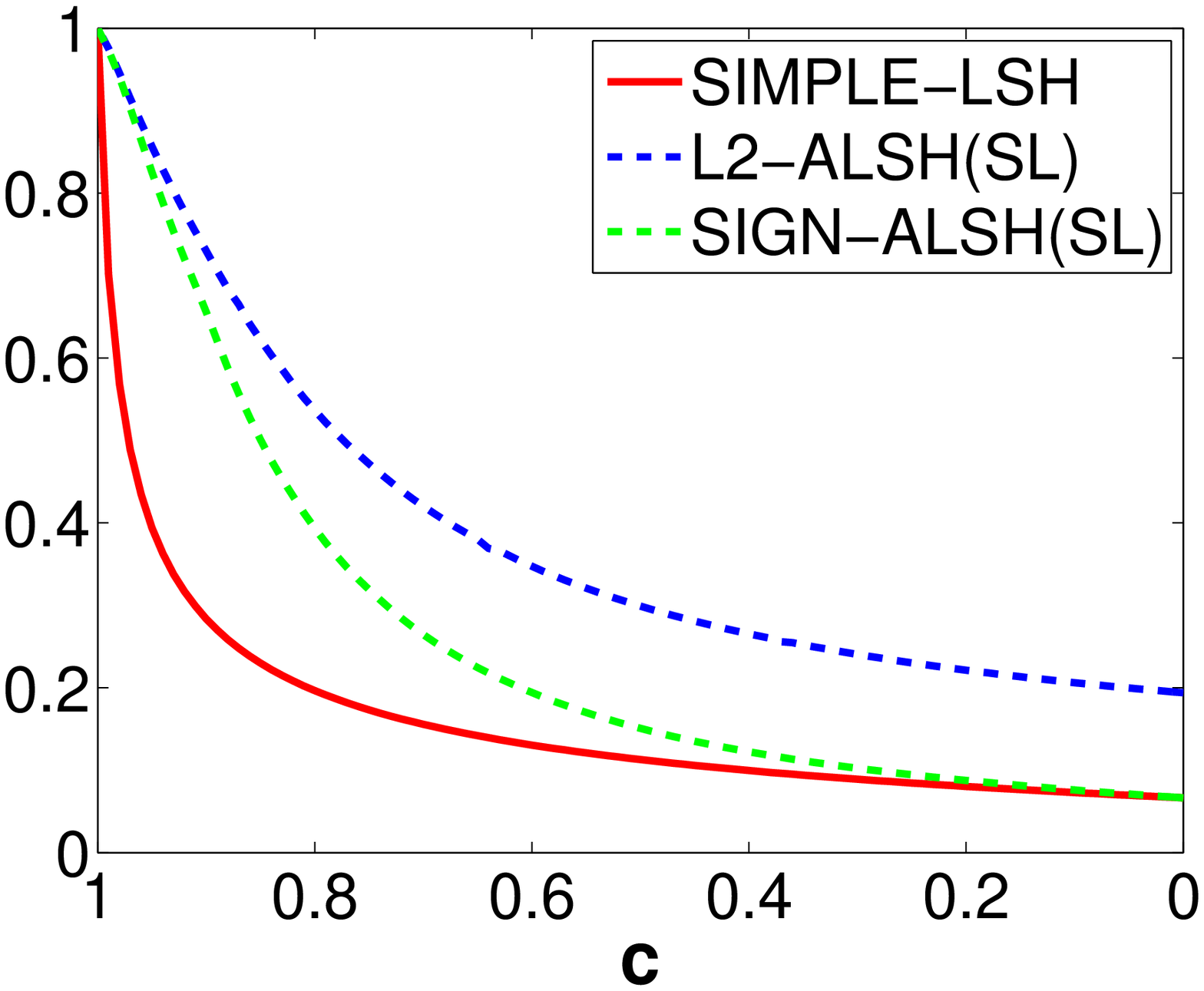}
\setlength{\epsfxsize}{0.31\textwidth}
\epsfbox{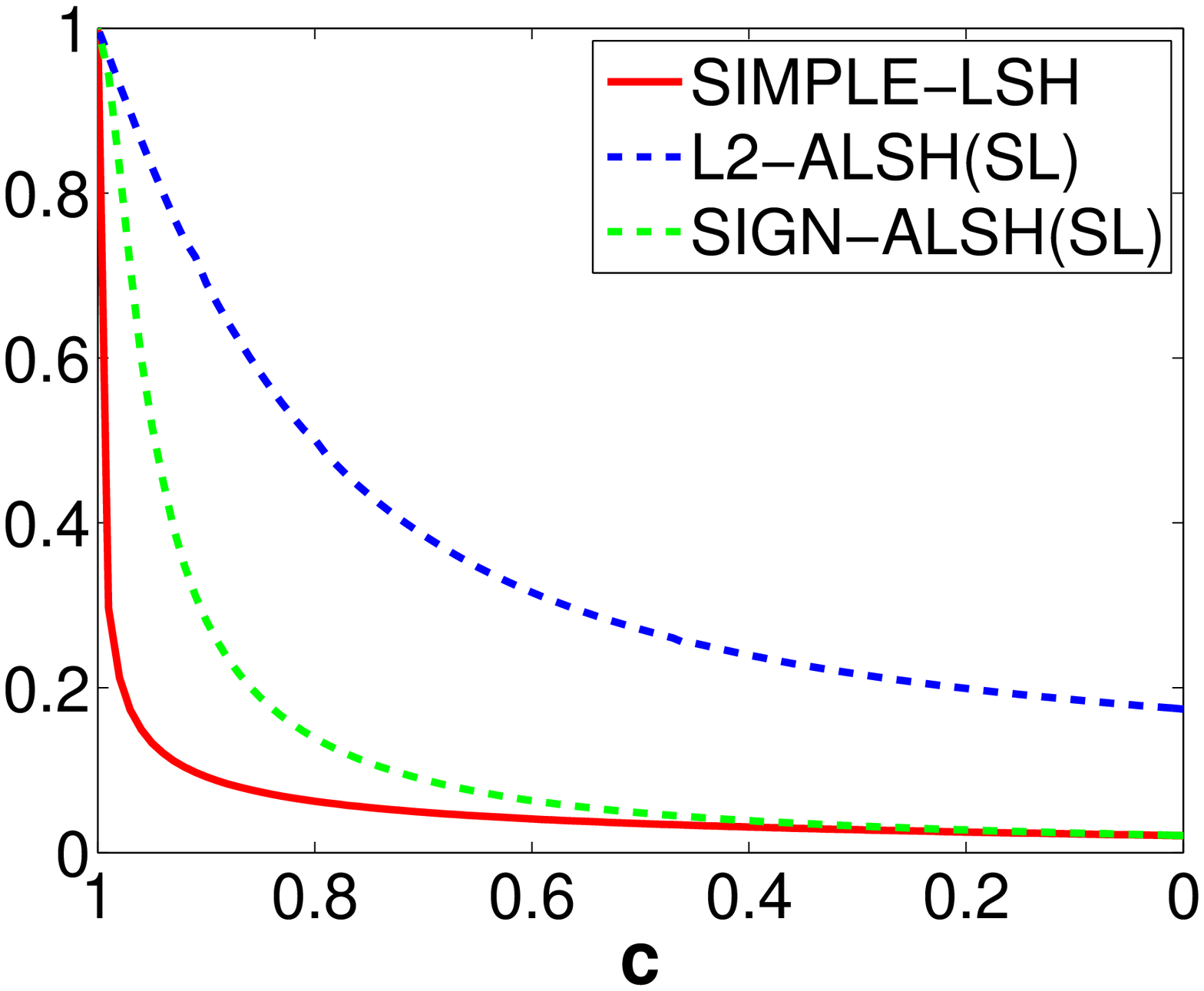}
}
\vspace{0.2in}
\hbox{ \centering 
}
\begin{picture}(0,0)(0,0)
\put(83, 292){$S=0.3$}
\put(235, 292){ $S=0.5$}
\put(394, 292){ $S=0.7$}
\put(80,163){ $S=0.9$}
\put(233,163){ $S=0.99$}
\put(388,163){ $S=0.999$}
\end{picture}
\vspace{-0.5in}
\caption{\small The optimal hashing quality $\rho^*$ for different hashes
(lower is better).  \label{fig:rho}
}
\end{figure*}

%%%%%%%%%%%%%%%%%%%%%%%%%%%%			Theorem 2: SIMPLE-LSH is universal
\begin{theorem}\label{thm:smiple-lsh}
  \textsc{simple-lsh} given in \eqref{eq:ha} is a universal LSH over
  $\XX_\bullet,\YY_\circ$.  That is, for every $0<S<1$ and $0<c<1$, it
  is an $(S,cS)$-LSH over $\XX_\bullet,\YY_\circ$.  Furthermore, it
  has hashing quality:
$$\rho=\tfrac{\log\bigg(1-\tfrac{\cos^{-1}(S)}{\pi}\bigg)}{\log\bigg(1-\tfrac{\cos^{-1}(cS)}{\pi}\bigg)}.$$
\end{theorem}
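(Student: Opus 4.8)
The plan is to reduce the whole statement to the classical sign-random-projection (SimHash) collision formula, since the hash in \eqref{eq:ha} is nothing but a SimHash applied to the transformed points $P(\cdot)$. The only ingredients I need are: (i) that $P$ maps both spaces onto the unit sphere, (ii) that $P$ preserves the inner product of interest, and (iii) the angular collision probability of a random-hyperplane hash. Facts (i) and (ii) are already recorded in the text: for $x\in\XX_\bullet$ we have $\norm{P(x)}^2=\norm{x}^2+(1-\norm{x}^2)=1$, and for $q\in\YY_\circ$ equation \eqref{eq:PqPx} gives $P(q)^\top P(x)=q^\top x$, with $\norm{P(q)}=1$ as well. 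Note the same mapping $P$ and same random function $h_a$ are used for database and query points, so the hash is genuinely symmetric.

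First I would invoke the Goemans--Williamson/Charikar identity: for any two nonzero vectors $u,v$ and $a\sim\NN(0,I)$,
\[
\PP_a[\sign(a^\top u)=\sign(a^\top v)]=1-\frac{1}{\pi}\cos^{-1}\!\left(\frac{u^\top v}{\norm{u}\,\norm{v}}\right),
\]
which follows from the rotational invariance of the spherical Gaussian, so that only the angle between $u$ and $v$ matters. Applying this with $u=P(x)$, $v=P(q)$ and using (i),(ii), the collision probability of \textsc{simple-lsh} is
\[
\PP_a[h_a(x)=h_a(q)]=1-\frac{\cos^{-1}(q^\top x)}{\pi},
\]
a function of the similarity $\sm(x,q)=q^\top x$ alone.

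The LSH property and universality then follow from monotonicity. Since $\cos^{-1}$ is strictly decreasing on $[-1,1]$, the collision probability is a strictly increasing function of $q^\top x$. Hence $\sm(x,q)\ge S$ forces the collision probability to be at least $p_1:=1-\cos^{-1}(S)/\pi$, while $\sm(x,q)\le cS$ forces it to be at most $p_2:=1-\cos^{-1}(cS)/\pi$. Because $0<cS<S<1$ we have $\cos^{-1}(cS)>\cos^{-1}(S)$, so $p_1>p_2$; and this holds for \emph{every} pair $0<S<1$, $0<c<1$ with no tuning of the (parameter-free) hash, which is exactly universality. Substituting these $p_1,p_2$ into $\rho=\log p_1/\log p_2$ yields the stated formula.

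There is no genuine obstacle here; the argument is essentially a change of variables through $P$ followed by a textbook identity. The only points demanding a little care are checking that $p_1,p_2\in(0,1)$, so that $\rho$ is well defined and, since $0<p_2<p_1<1$ gives $\log p_2<\log p_1<0$, so that $\rho<1$ as required for efficient search, and confirming that the collision formula still applies at the degenerate point $x=0$, where $P(0)=[0;1]$ remains a unit vector and the identity $P(q)^\top P(0)=0=q^\top 0$ continues to hold.
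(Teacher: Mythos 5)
Your proposal is correct and follows essentially the same route as the paper: both reduce to the Goemans--Williamson collision identity after observing that $P$ maps both spaces to the unit sphere and preserves the inner product via \eqref{eq:PqPx}, then conclude by monotonicity of $1-\cos^{-1}(\cdot)/\pi$. Your added checks (that $p_1,p_2\in(0,1)$ and that the degenerate point $x=0$ is handled) are minor elaborations the paper leaves implicit.
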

\begin{proof}
For any $x\in\XX_\bullet$ and $q\in\YY_\circ$ we have \citep{goemans95}:
\begin{align}
\PP[h_a(P(q))=h_a(P(x))] &=1-\frac{\cos^{-1}(q^\top x)}{\pi}.
\end{align}
Therefore:
\begin{itemize*}
\item if $q^\top x \geq S$, then
$$
\PP\big[h_{a}(P(q))=h_{a}(P(x))\big] \geq 1-\frac{\cos^{-1}(S)}{\pi}
$$
\item if $q^{\top}x \leq cS$, then
$$
\PP\big[h_{a}(P(q))=h_{a}(P(x))\big] \leq 1-\frac{\cos^{-1}(cS)}{\pi}
$$
\end{itemize*}
Since for any $0\leq x\leq1$, $1-\frac{\cos^{-1}(x)}{\pi}$ is a monotonically increasing function, this gives us an LSH.
\end{proof}

%%%%%%%%%%%%%%%%%%%%%%%%%%%%			Theoretical Comparison
\subsection{Theoretical Comparison}\label{sec:theory}
Earlier we discussed that an LSH with the smallest possible hashing
quality $\rho$ is desirable.  In this Section, we compare the best
achievable hashing quality and show that \textsc{simple-lsh} allows
for much better hashing quality compared to \textsc{l2-alsh(sl)}, as
well as compared to the improved hash \textsc{sign-lsh(sl)}.

For \textsc{l2-alsh(sl)} and \textsc{sign-alsh(sl)}, for each desired
threshold $S$ and ratio $c$, one can optimize over the parameters $m$
and $U$, and for \textsc{l2-alsh(sl)} also $r$, to find the hash with
the best $\rho$.  This is a non-convex optimization problem and
\citet{shrivastava14} suggest using grid search to find a bound
on the optimal $\rho$.  We followed the procedure, and grid,
as suggested by \citet{shrivastava14}\footnote{We actually used a slightly tighter bound---a
  careful analysis shows the denominator in equation 19 of \citet{shrivastava14}
  can be $\log F_r(\sqrt{1+m/2-2cSU + (cSU)^{2^{m+1}})})$}.  For \textsc{simple-lsh}
no parameters need to be tuned, and for each $S,c$ the hashing quality
is given by Theorem \ref{thm:smiple-alsh}.  In Figure \ref{fig:rho} we
compare the optimal hashing quality $\rho$ for the three methods, for
different values of $S$ and $c$.  It is clear that the
\textsc{simple-lsh} dominates the other methods. \removed{ For
  thresholds $S$ far away from 1 (the typically interesting regime),
  even though the standard \textsc{$L_2$-lsh} is a valid LSH,
  \textsc{l2-alsh(sl)} indeed yields significantly better hash quality
  (though \textsc{simple-lsh} dominates both).}

\removed{For \textsc{$L_2$-lsh} the parameters $U$ and $r$ need to be optimized
to obtain the minimal $\rho^*_{\textsc{$L_2$-LSH}}$ for each choice of
$S$ and $c$. We solve the following optimization by a grid search over
$U$ and $r$:
\begin{eqnarray}
&&\rho^*_{\textsc{$L_2$-lsh}}(S,c)=\\
\nonumber
&&\min_{0<r \atop 0<U}\frac{\log \FF_r\big(\sqrt{1-2SU+U^{2}}\big)}{\log \FF_r\big(1-cSU \big)}\\
\nonumber
&&\;\;\text{s.t.} \;\;\;\;U\leq 2S(1-c).
\nonumber
\end{eqnarray}}
%%%%%%%%%%%%%%%%%%%%%%%%%%%%			Figure 2: netflix
\begin{figure*}[t!]
\hbox{ \centering \hspace{-0.2in}\
\setlength{\epsfxsize}{0.25\textwidth}
\epsfbox{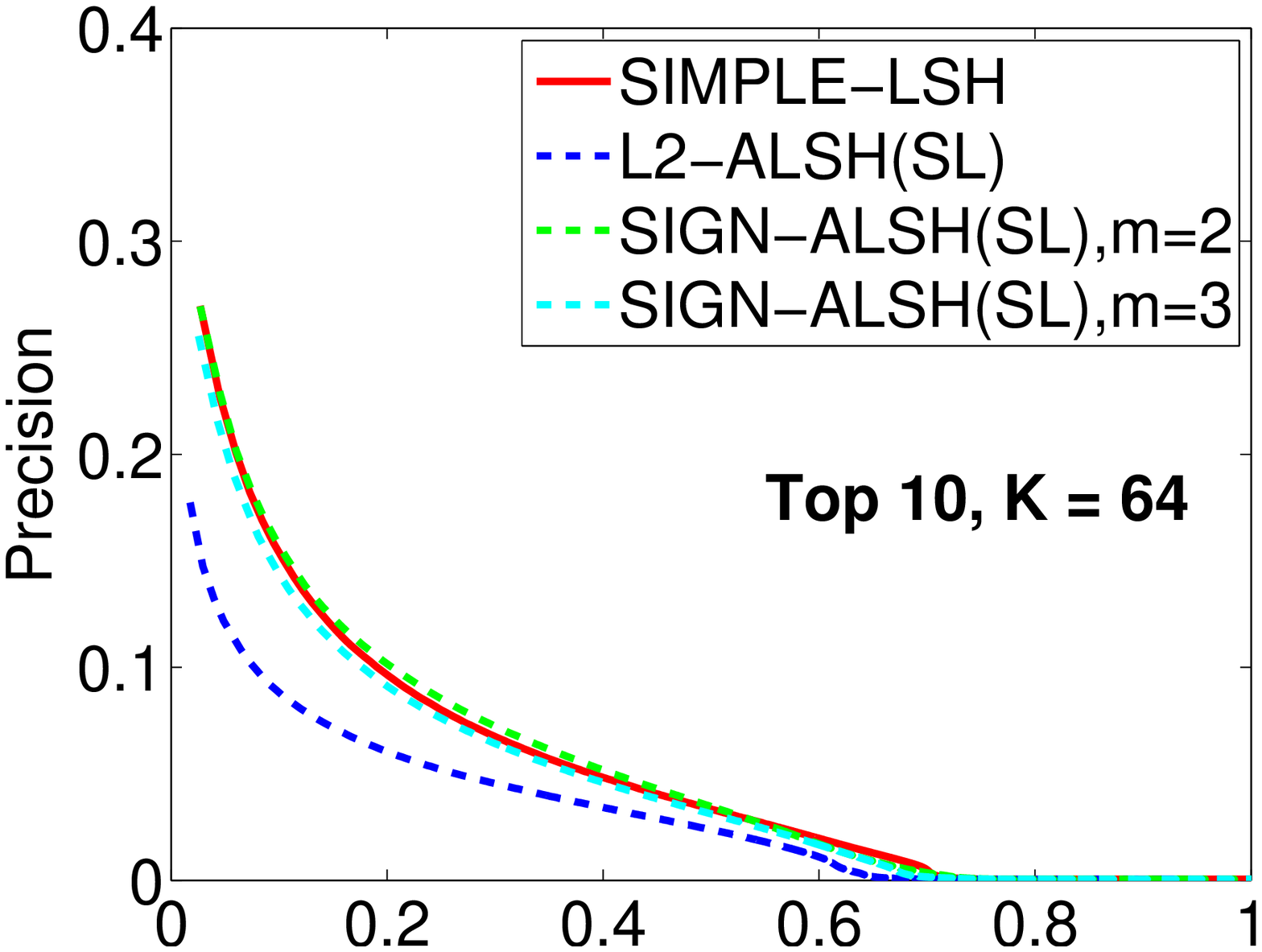}
\setlength{\epsfxsize}{0.235\textwidth}
\epsfbox{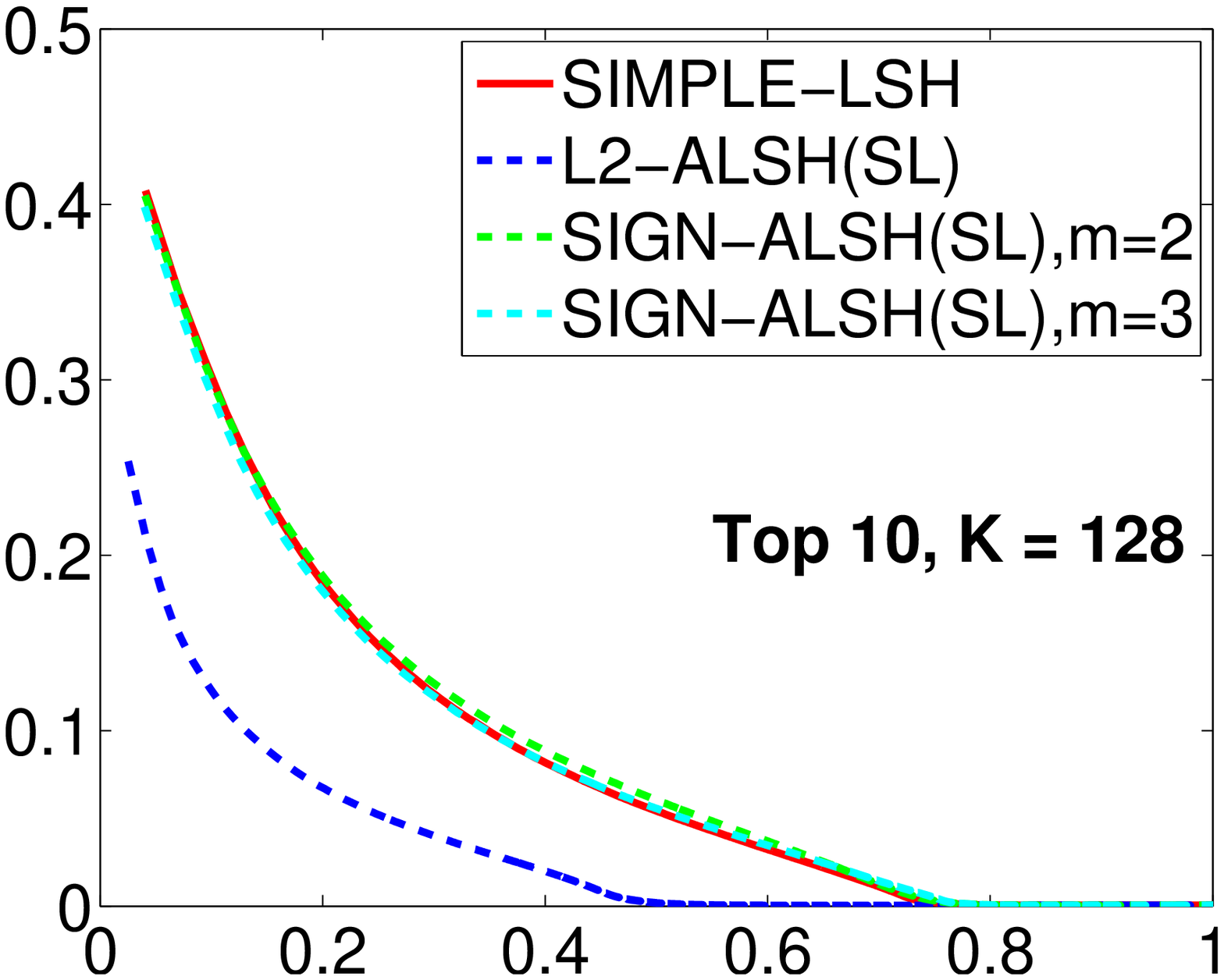}
\setlength{\epsfxsize}{0.235\textwidth}
\epsfbox{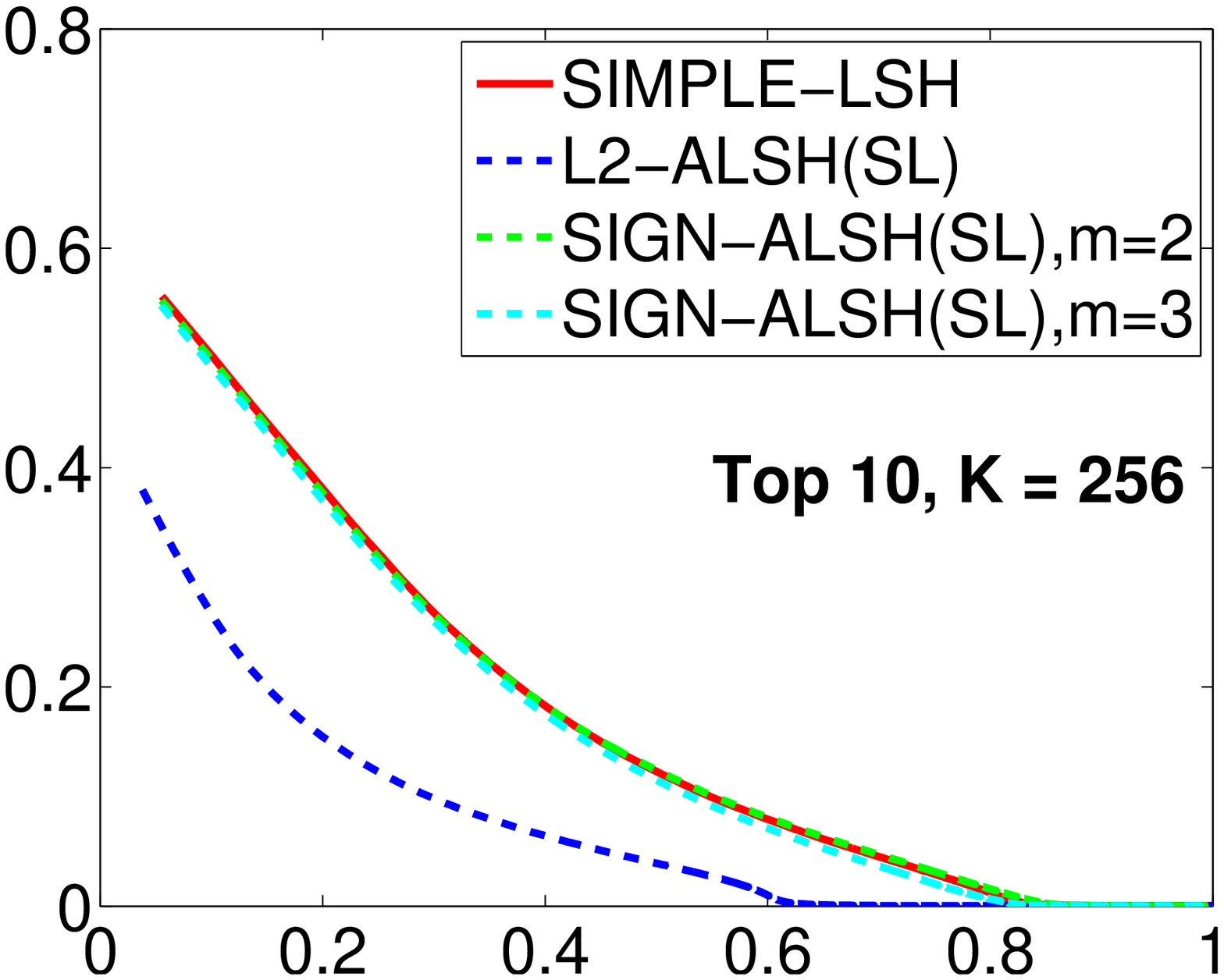}
\setlength{\epsfxsize}{0.235\textwidth}
\epsfbox{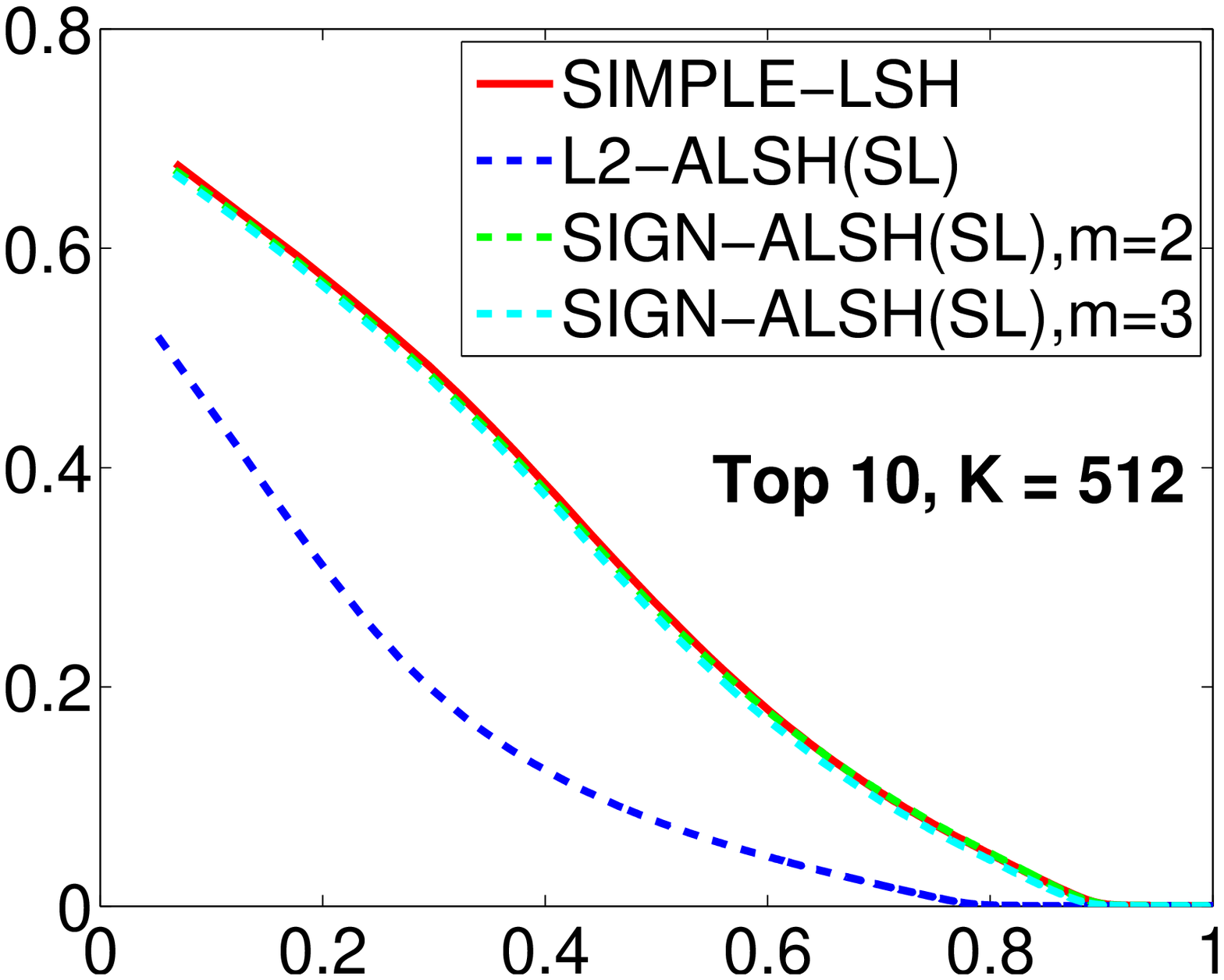}
}
\vspace{0.1in}
\hbox{ \centering \hspace{-0.2in}
\setlength{\epsfxsize}{0.25\textwidth}
\epsfbox{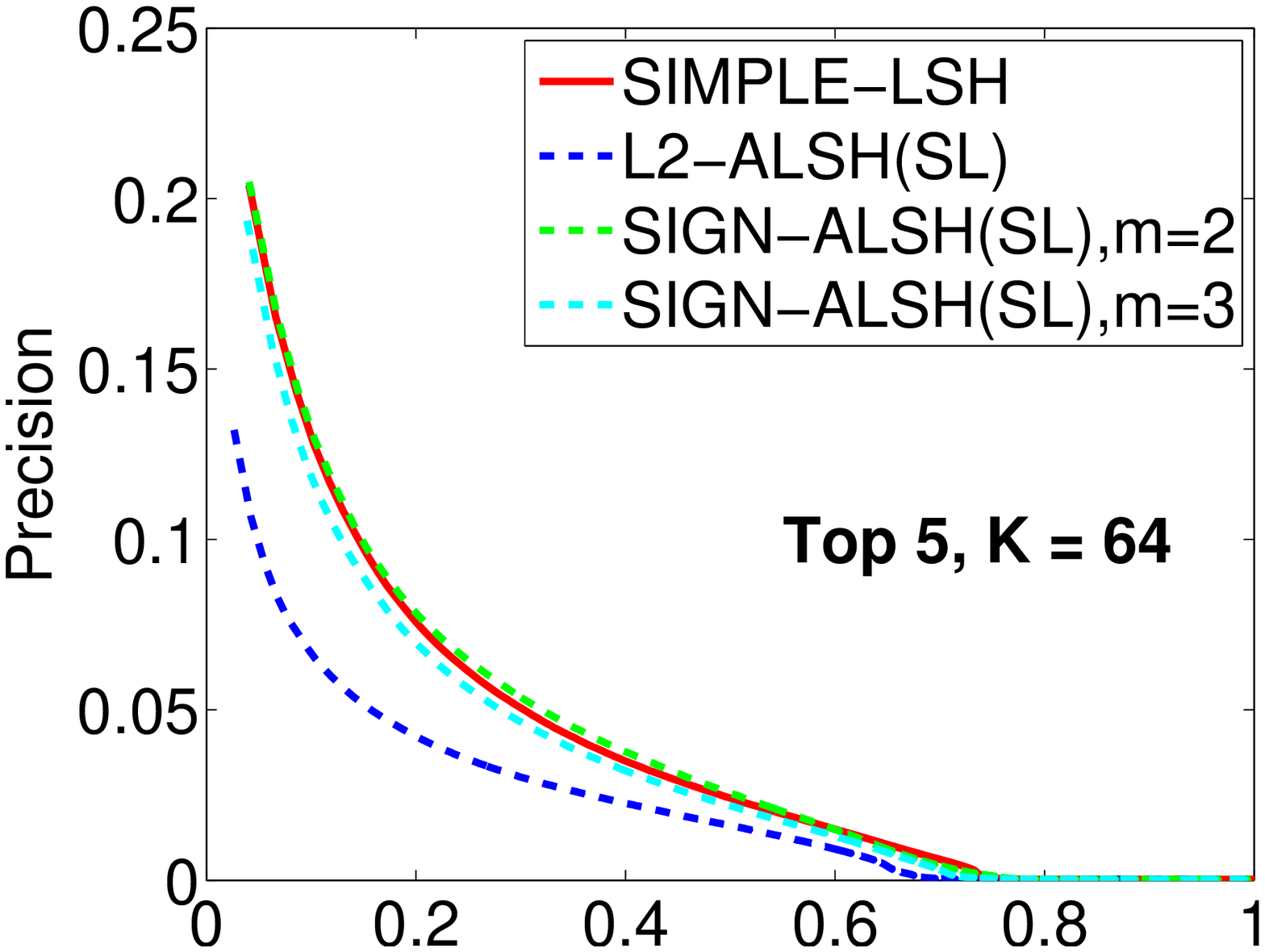}
\setlength{\epsfxsize}{0.235\textwidth}
\epsfbox{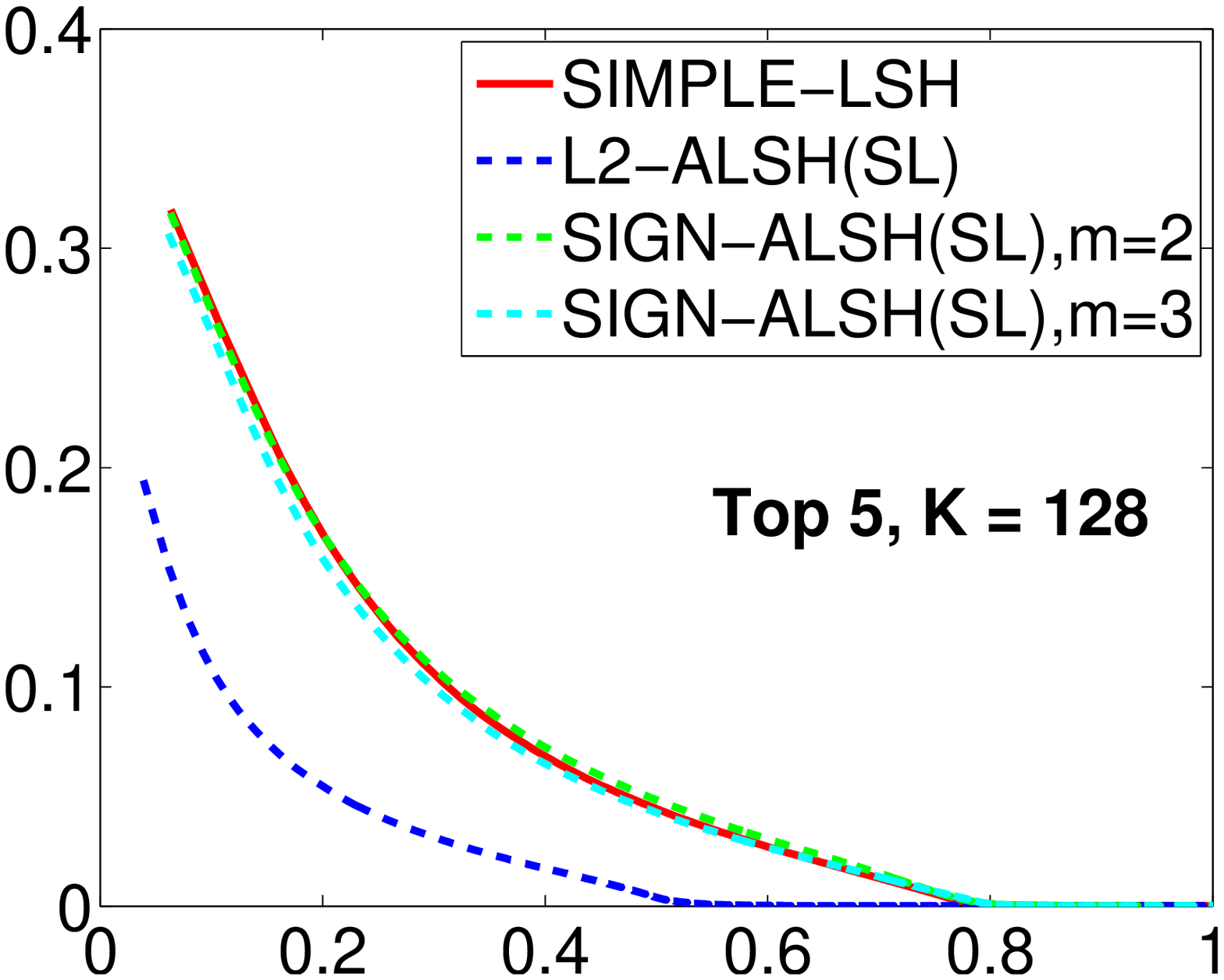}
\setlength{\epsfxsize}{0.235\textwidth}
\epsfbox{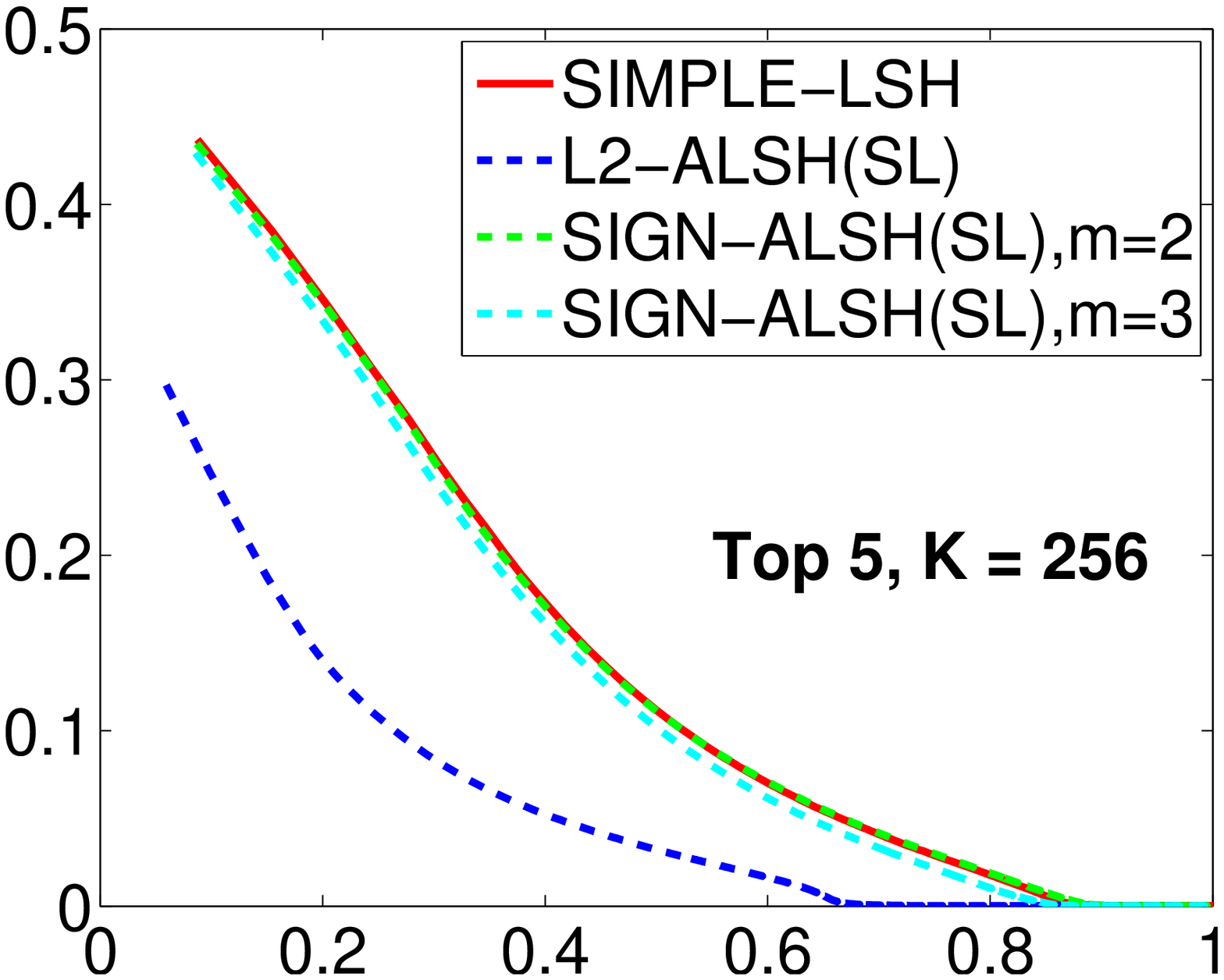}
\setlength{\epsfxsize}{0.235\textwidth}
\epsfbox{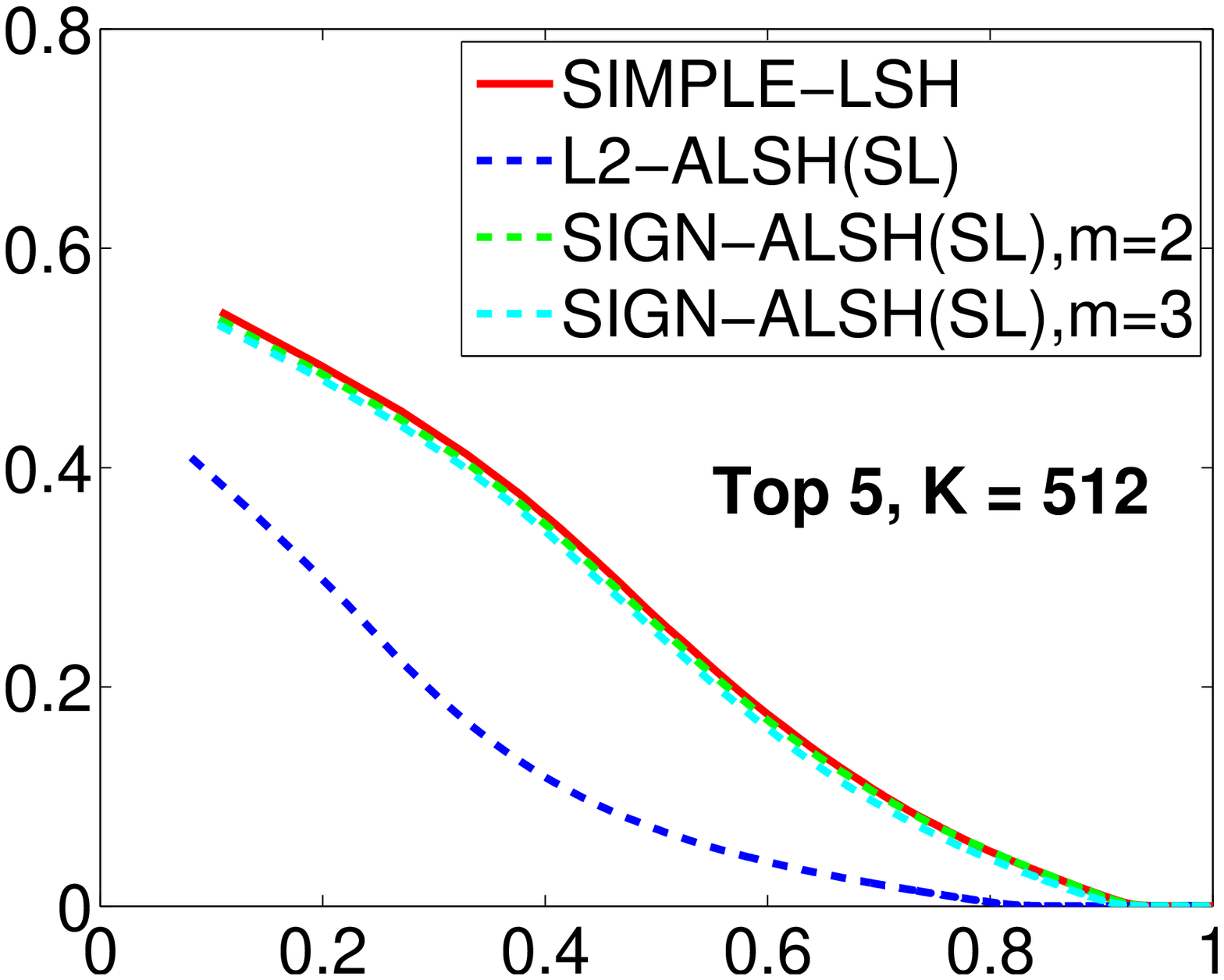}
}
\vspace{0.1in}
\hbox{ \centering \hspace{-0.2in}
\setlength{\epsfxsize}{0.25\textwidth}
\epsfbox{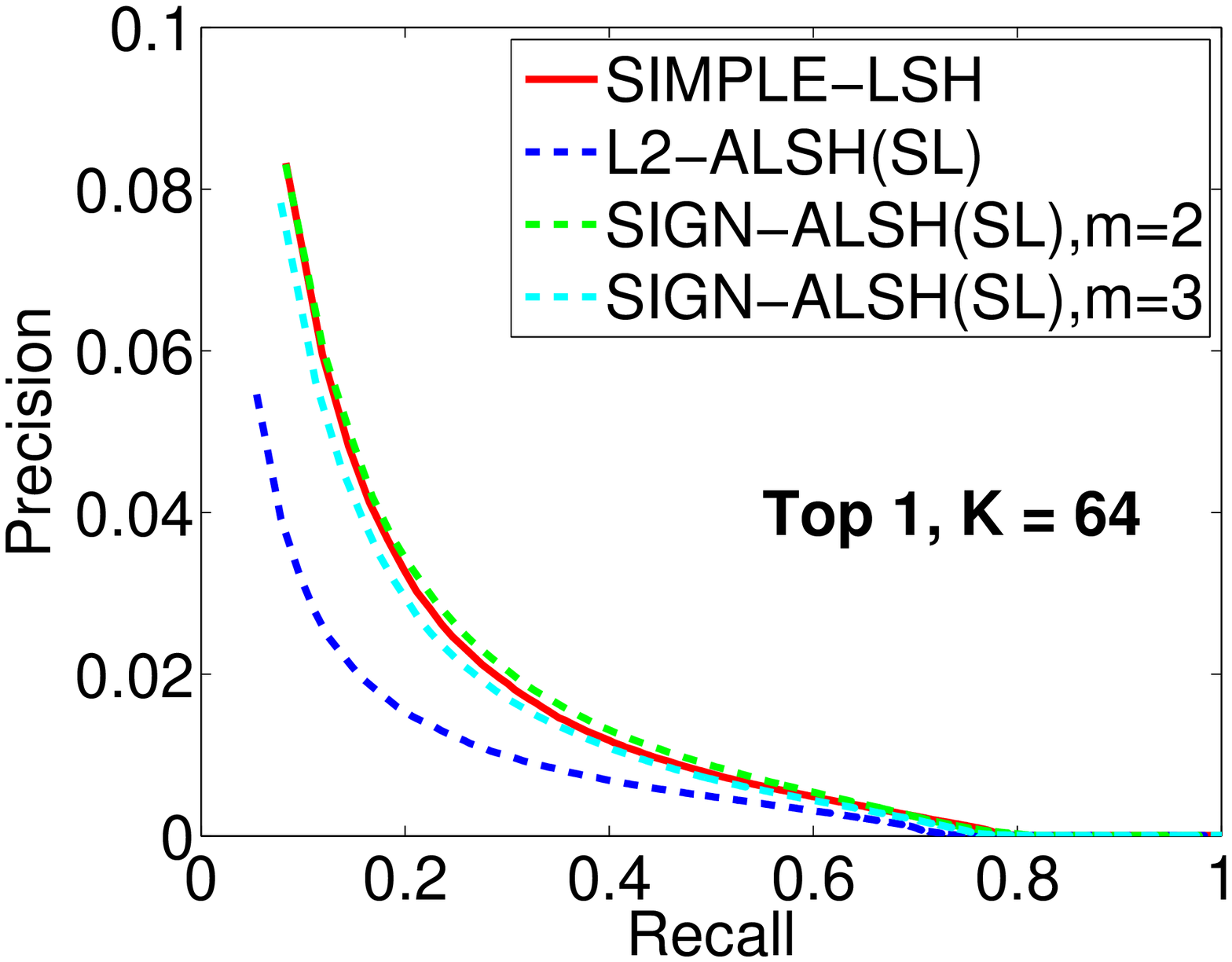}
\setlength{\epsfxsize}{0.235\textwidth}
\epsfbox{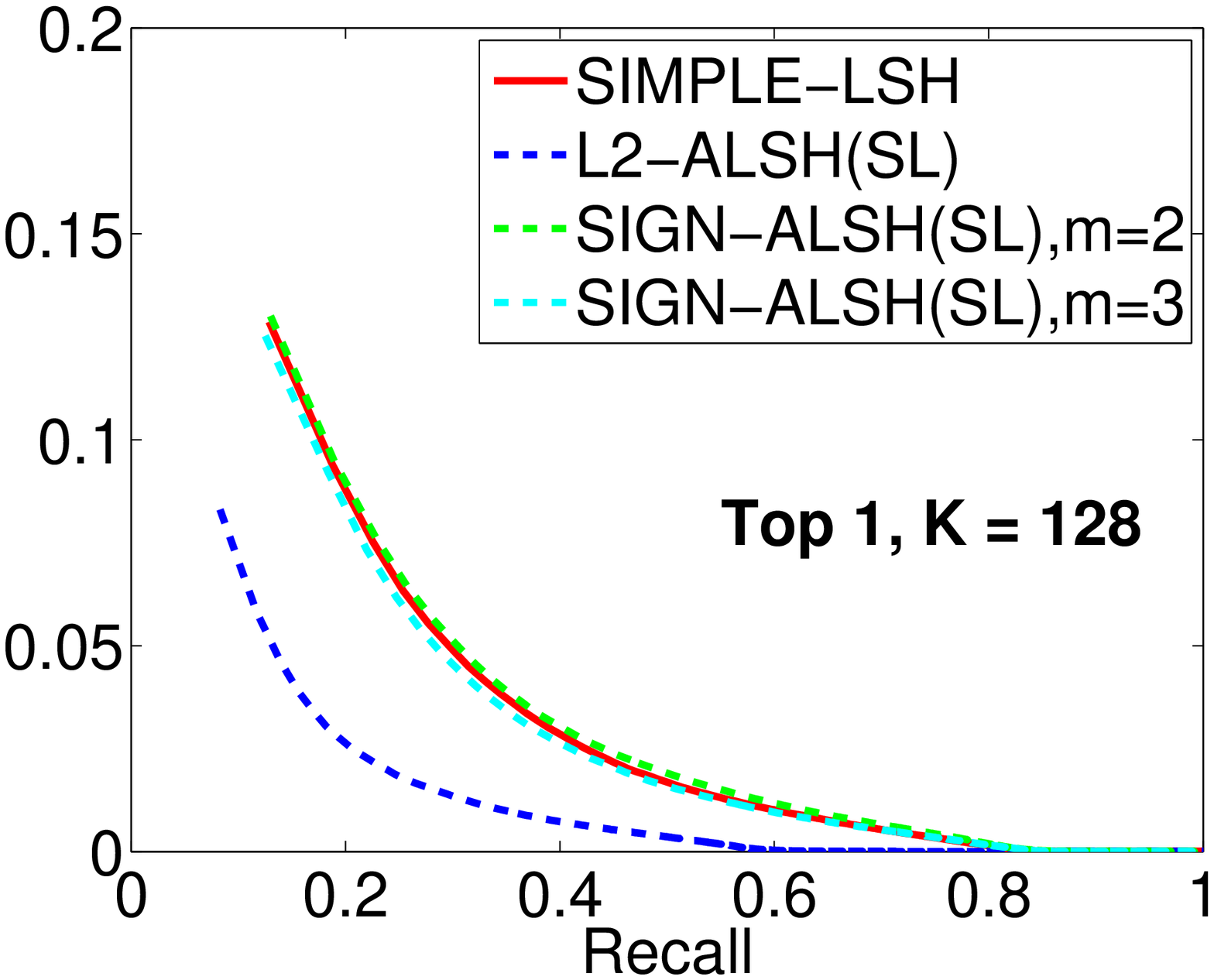}
\setlength{\epsfxsize}{0.235\textwidth}
\epsfbox{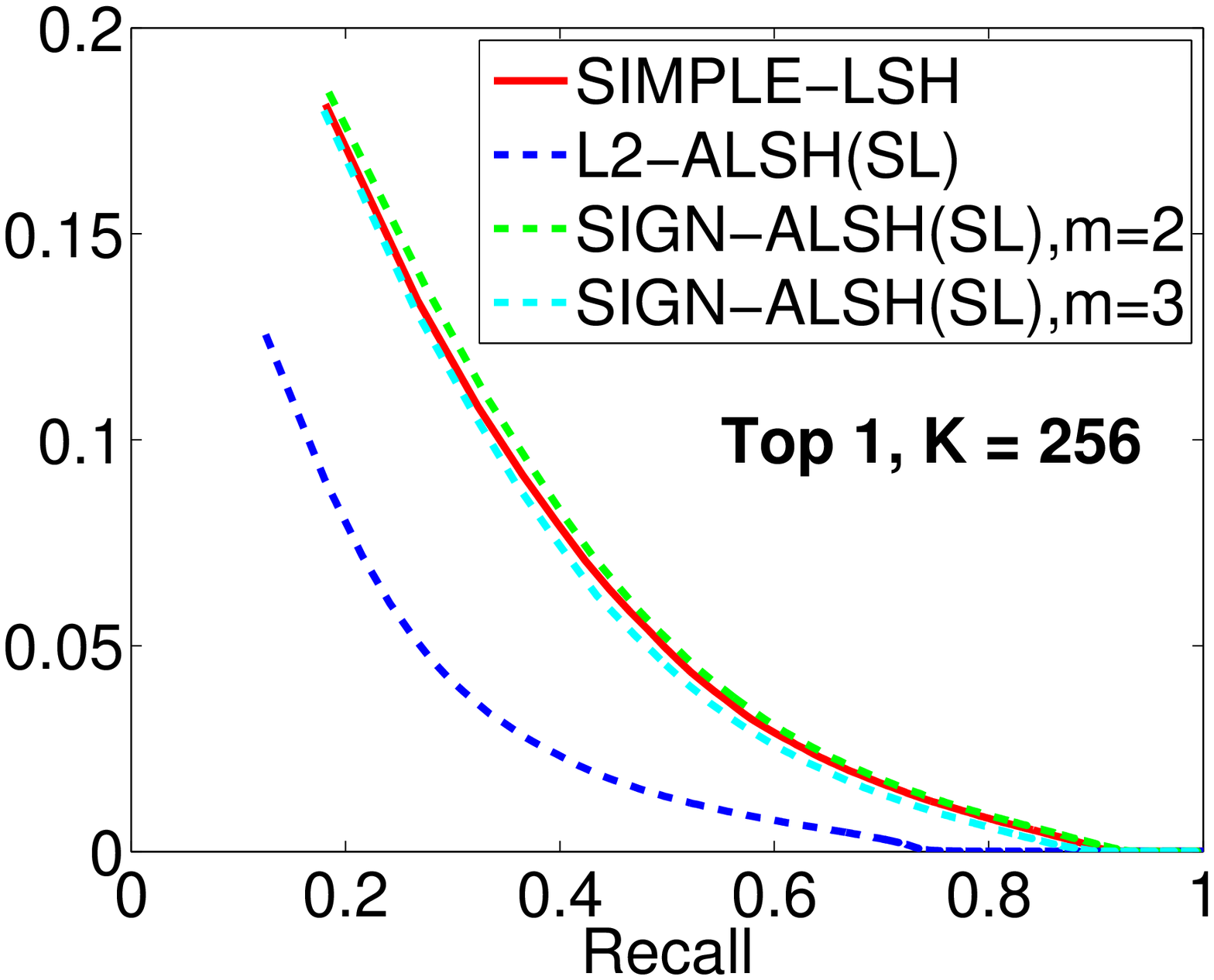}
\setlength{\epsfxsize}{0.235\textwidth}
\epsfbox{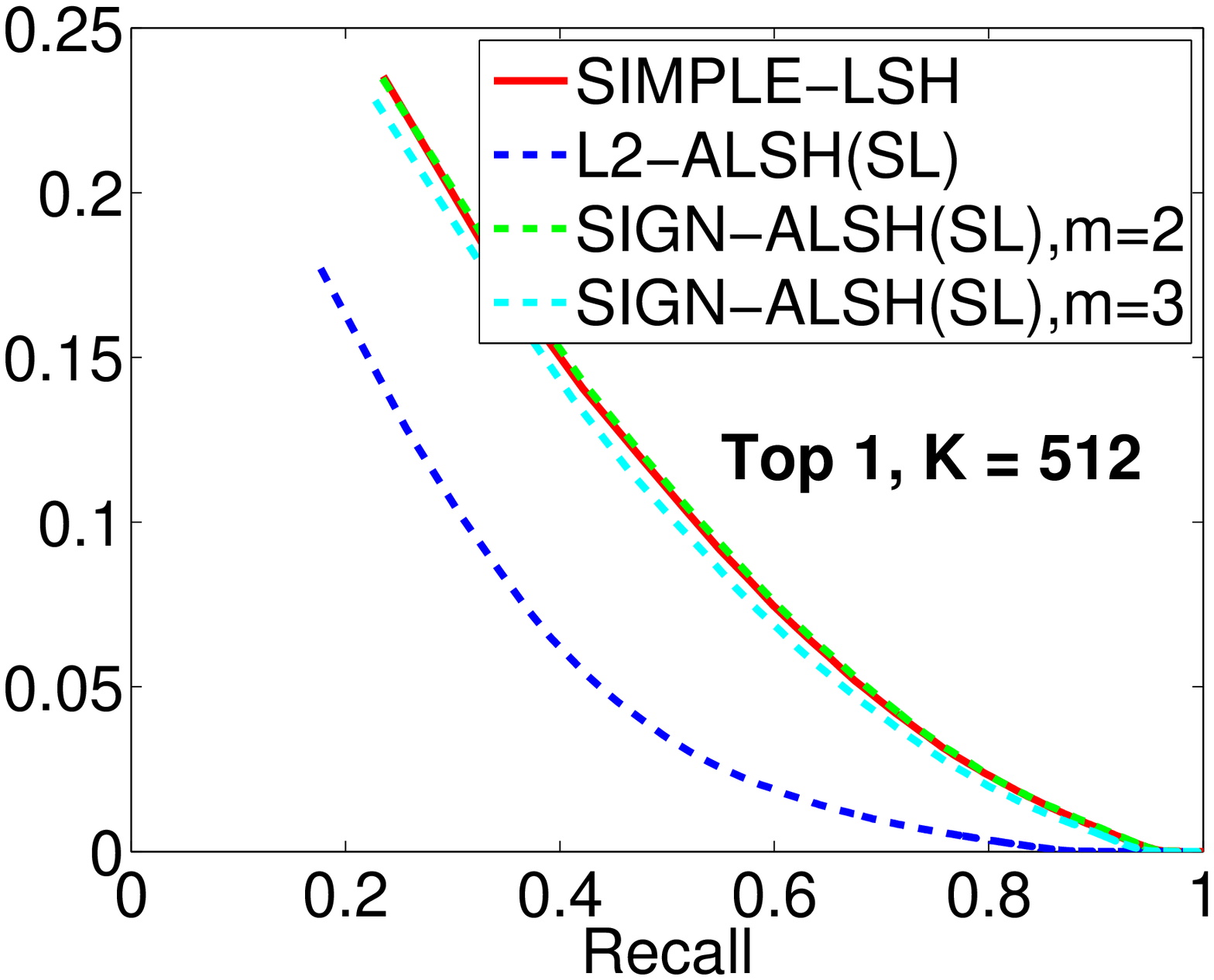}
}
\vspace{0.1in}
\caption{\small\textbf{Netflix}: Precision-Recall curves (higher is better)
  of retrieving top $T$ items by hash code of length $K$. \textsc{simple-lsh}
  is parameter-free.  For \textsc{l2-alsh(sl)}, we fix the parameters
  $m=3$, $U=0.84$, $r=2.5$ and
  for \textsc{sign-alsh(sl)} we used two
different settings of the parameters: $m=2$, $U=0.75$ and $m=3$, $U=0.85$. \label{fig:netflix}
}
%%%%%%%%%%%%%%%%%%%%%%%%%%%%			Figure 3: movielens
\end{figure*}
\begin{figure*}[t!]
\vspace{0.1in}
\hbox{ \centering \hspace{-0.2in}\
\setlength{\epsfxsize}{0.25\textwidth}
\epsfbox{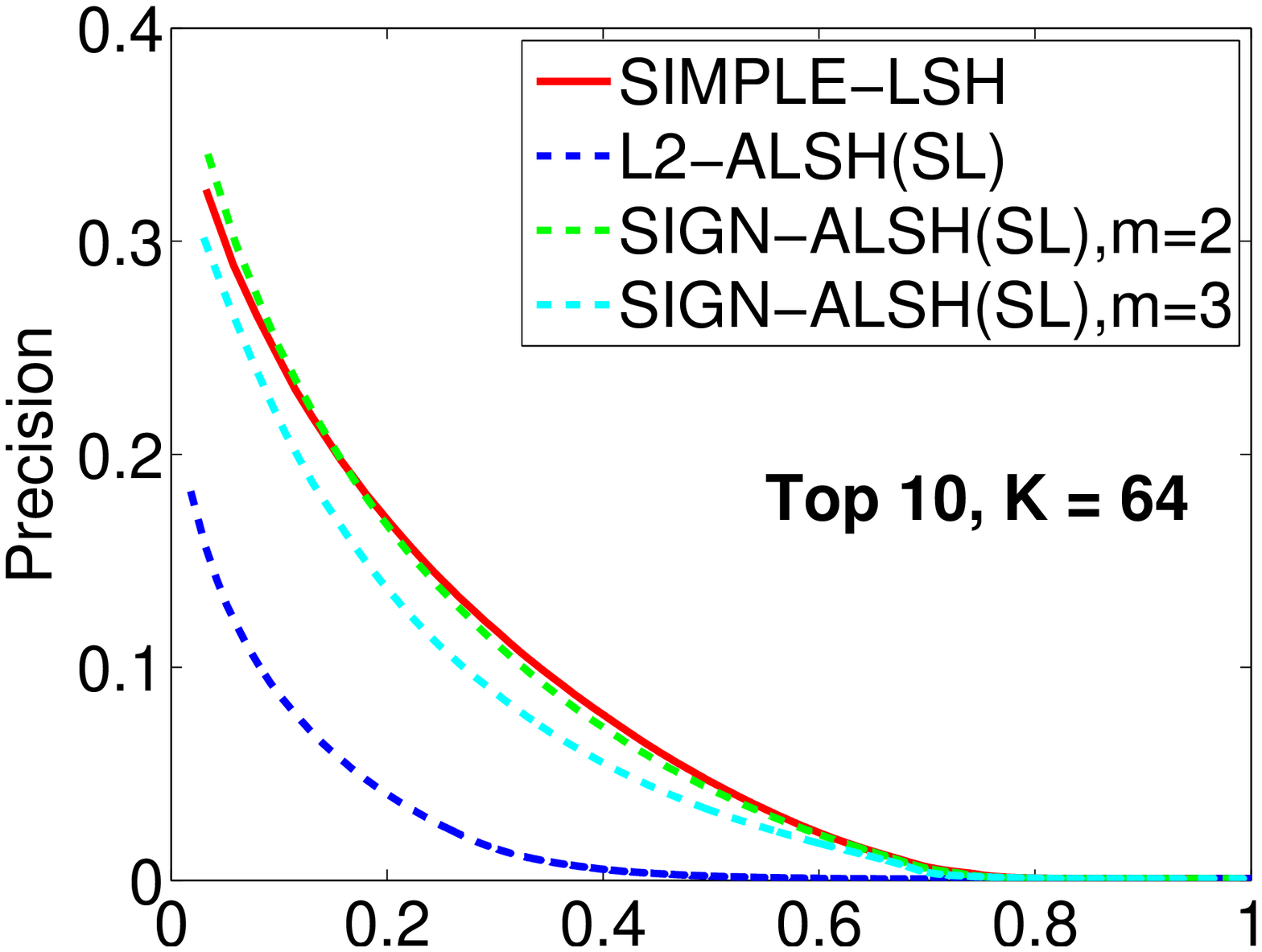}
\setlength{\epsfxsize}{0.235\textwidth}
\epsfbox{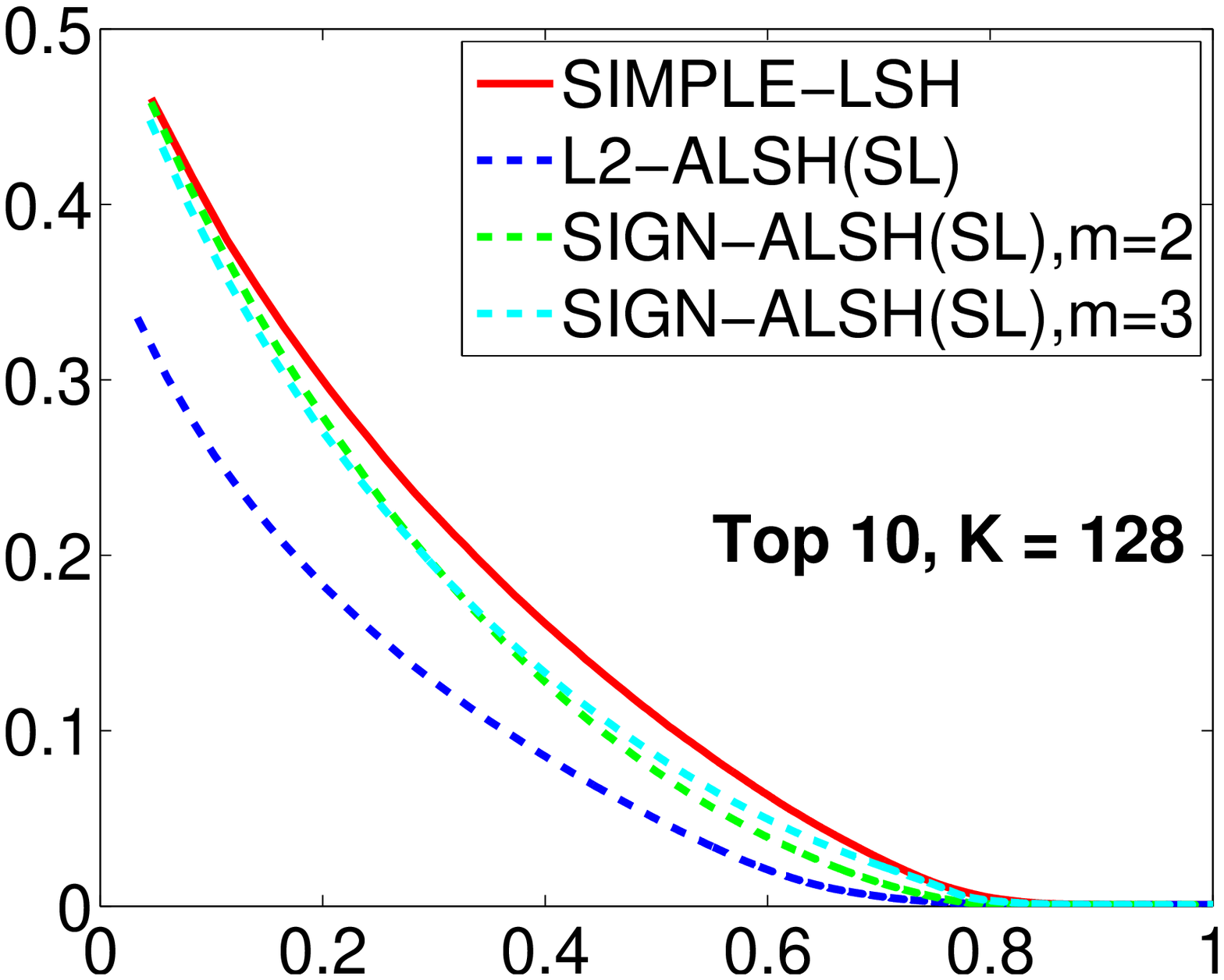}
\setlength{\epsfxsize}{0.235\textwidth}
\epsfbox{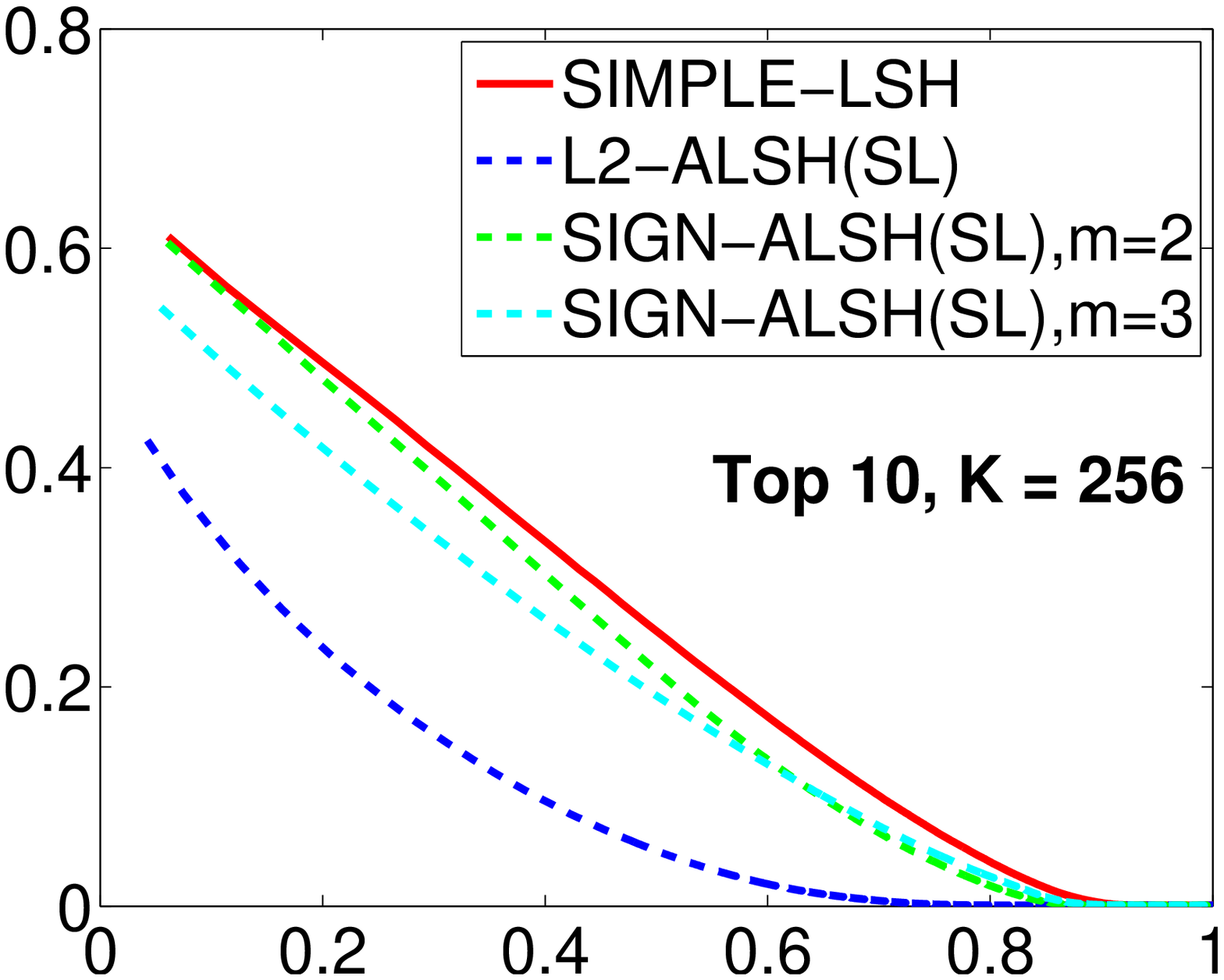}
\setlength{\epsfxsize}{0.235\textwidth}
\epsfbox{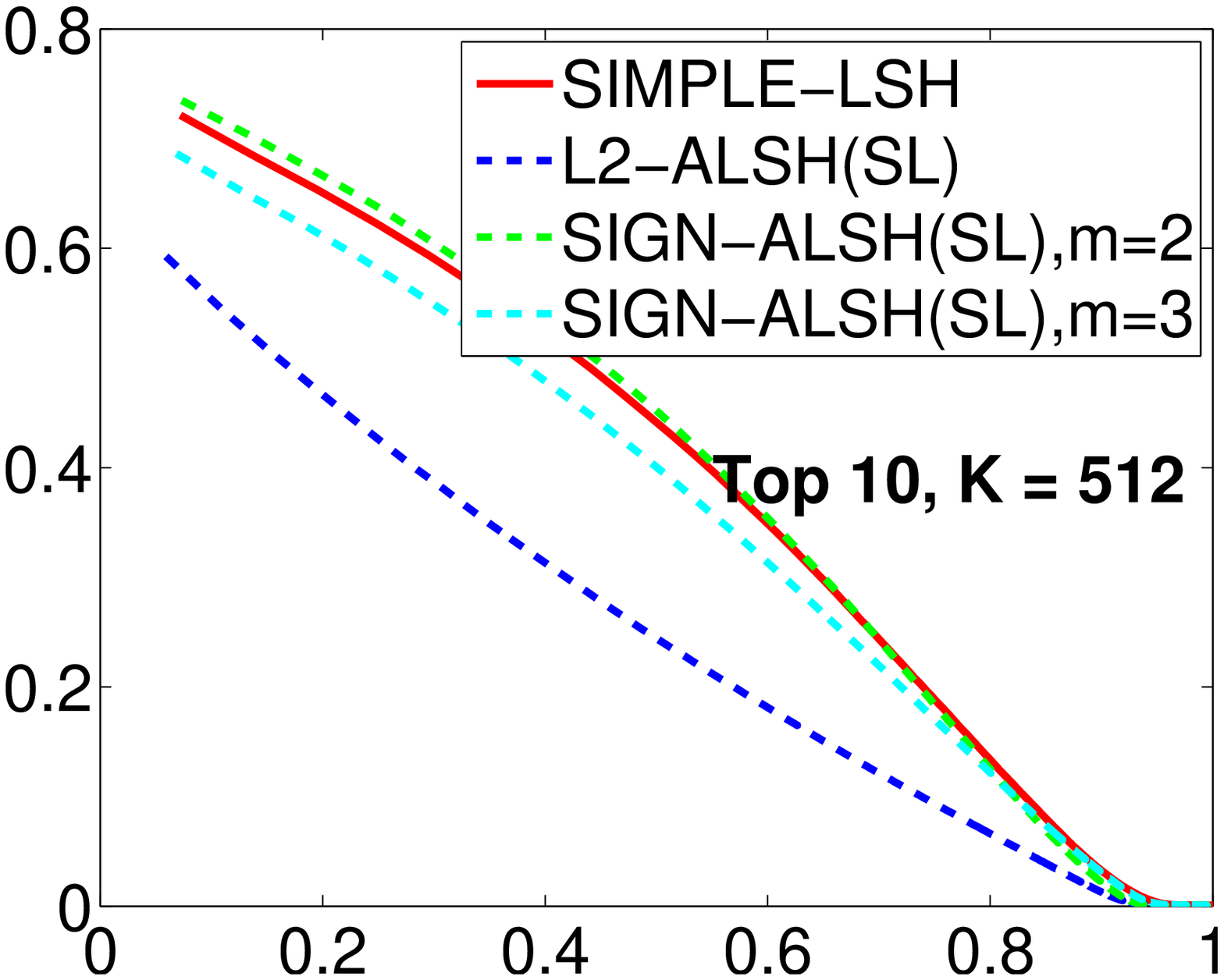}
}
\vspace{0.1in}
\hbox{ \centering \hspace{-0.2in}
\setlength{\epsfxsize}{0.25\textwidth}
\epsfbox{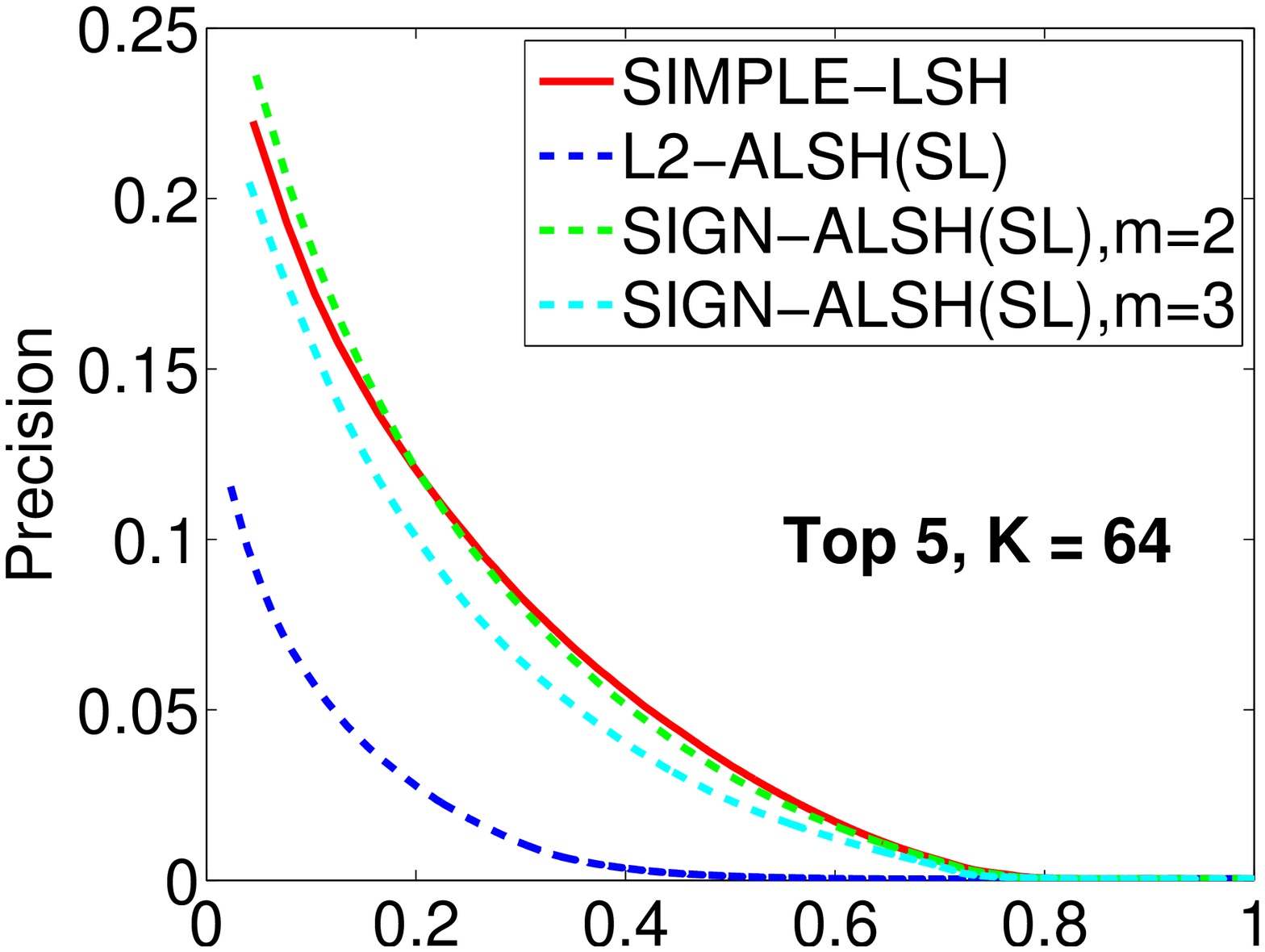}
\setlength{\epsfxsize}{0.235\textwidth}
\epsfbox{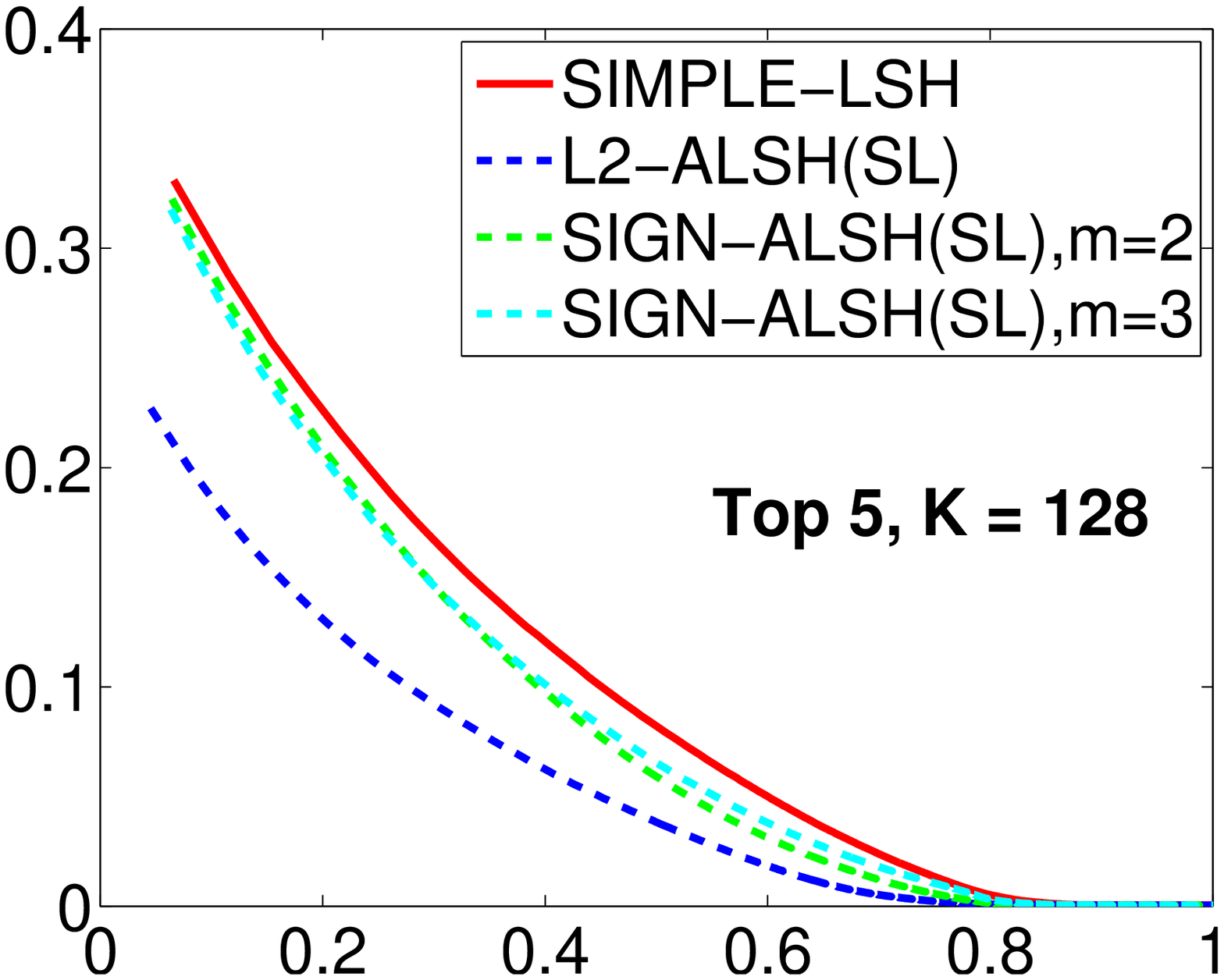}
\setlength{\epsfxsize}{0.235\textwidth}
\epsfbox{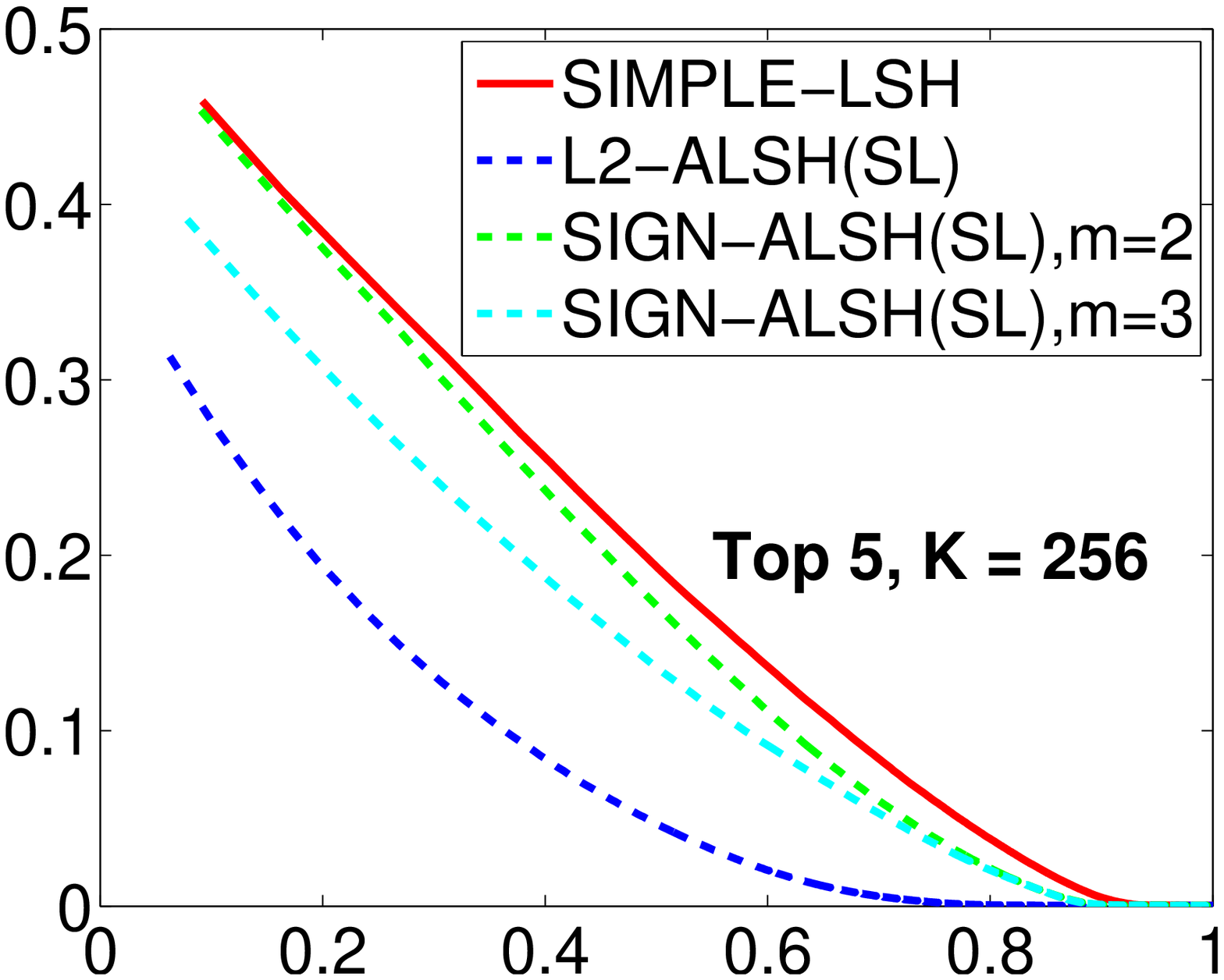}
\setlength{\epsfxsize}{0.235\textwidth}
\epsfbox{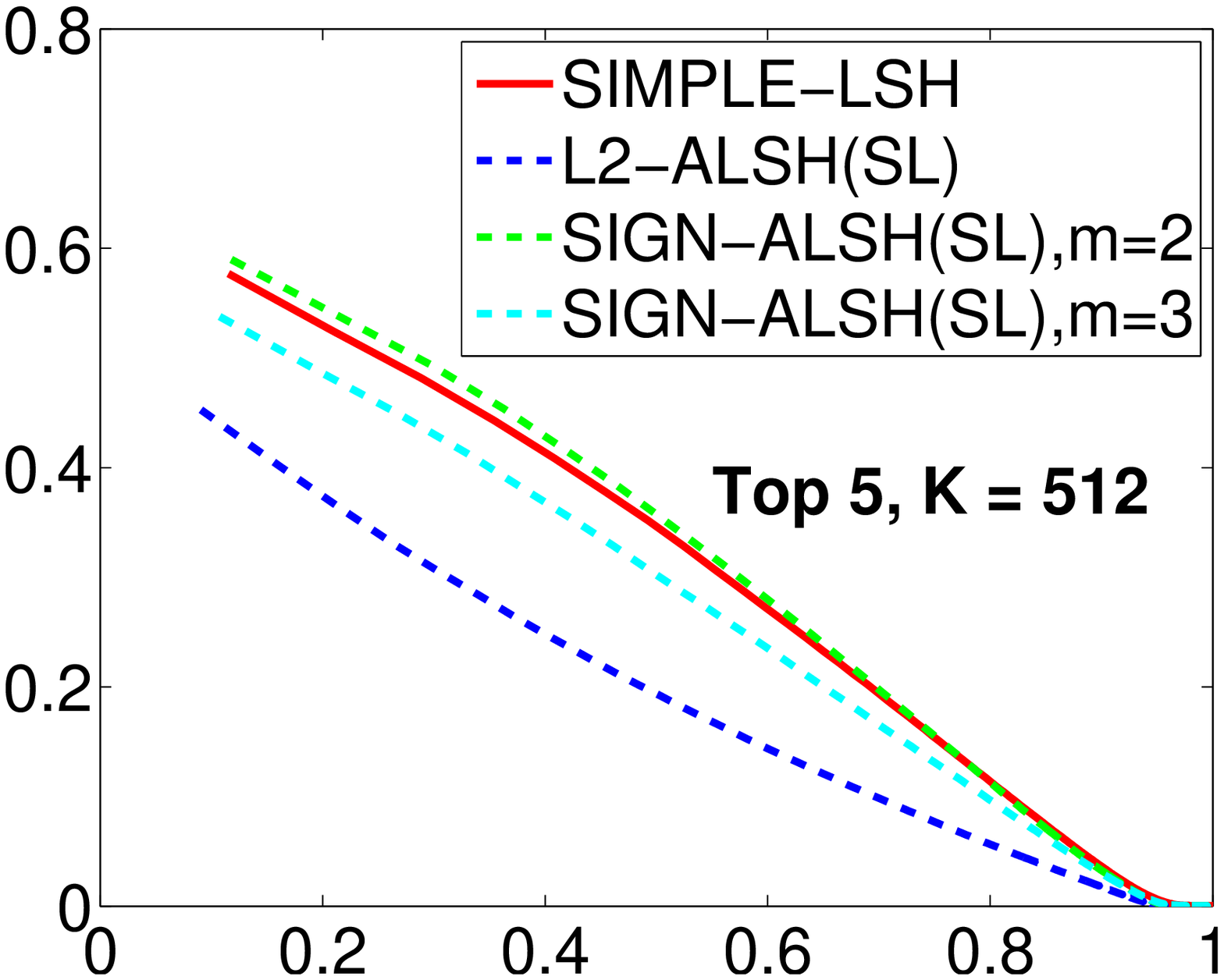}
}
\vspace{0.1in}
\hbox{ \centering \hspace{-0.2in}
\setlength{\epsfxsize}{0.25\textwidth}
\epsfbox{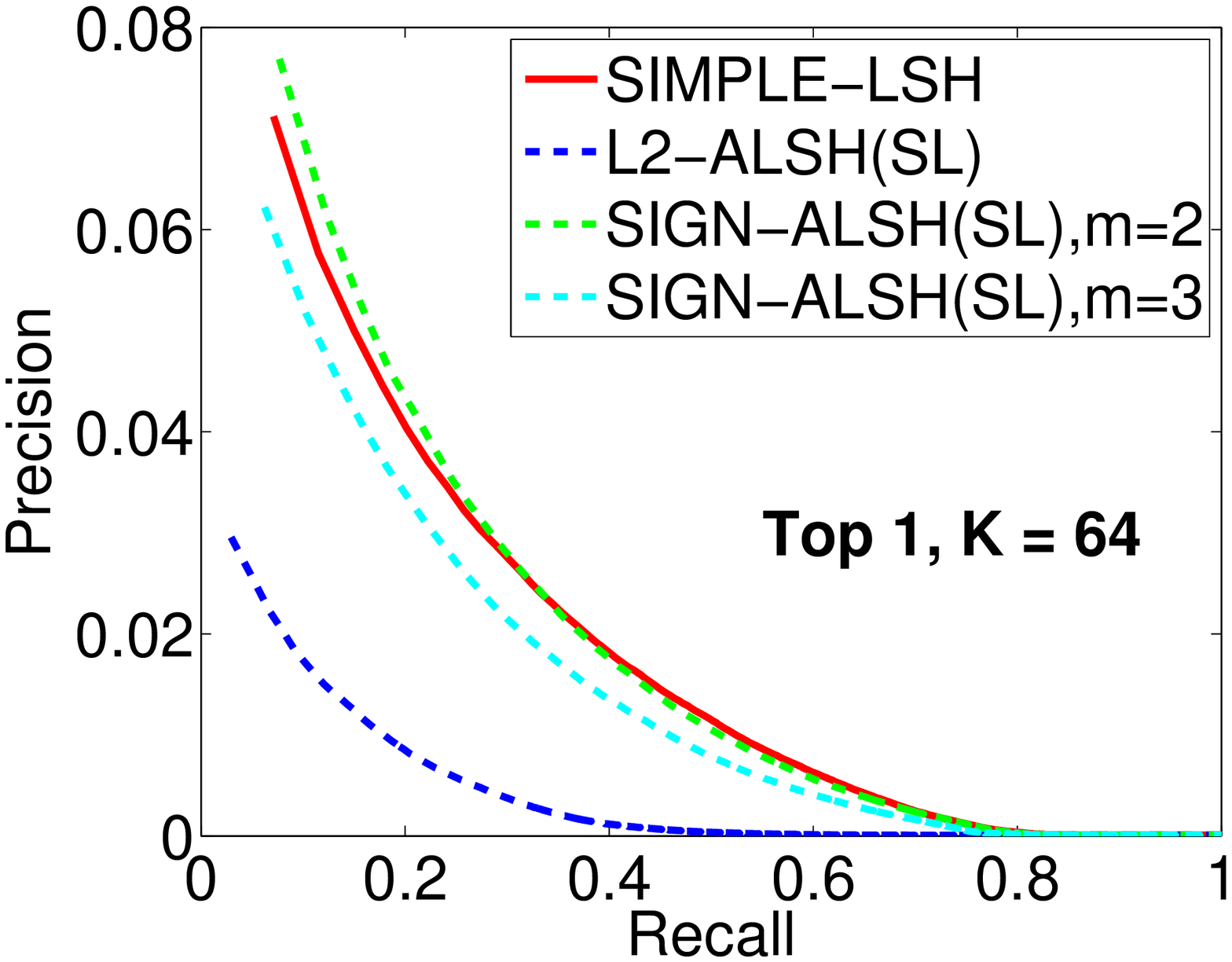}
\setlength{\epsfxsize}{0.235\textwidth}
\epsfbox{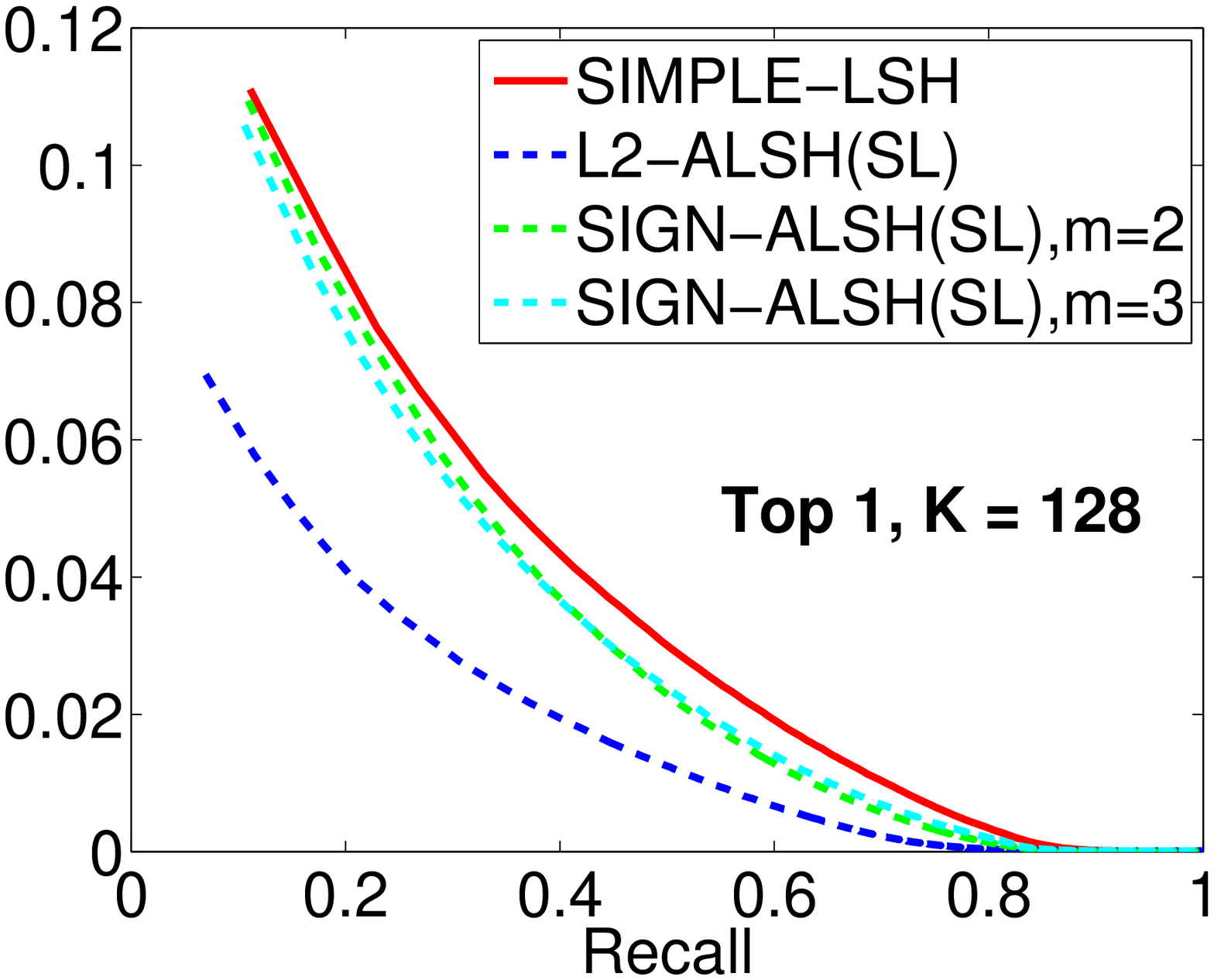}
\setlength{\epsfxsize}{0.235\textwidth}
\epsfbox{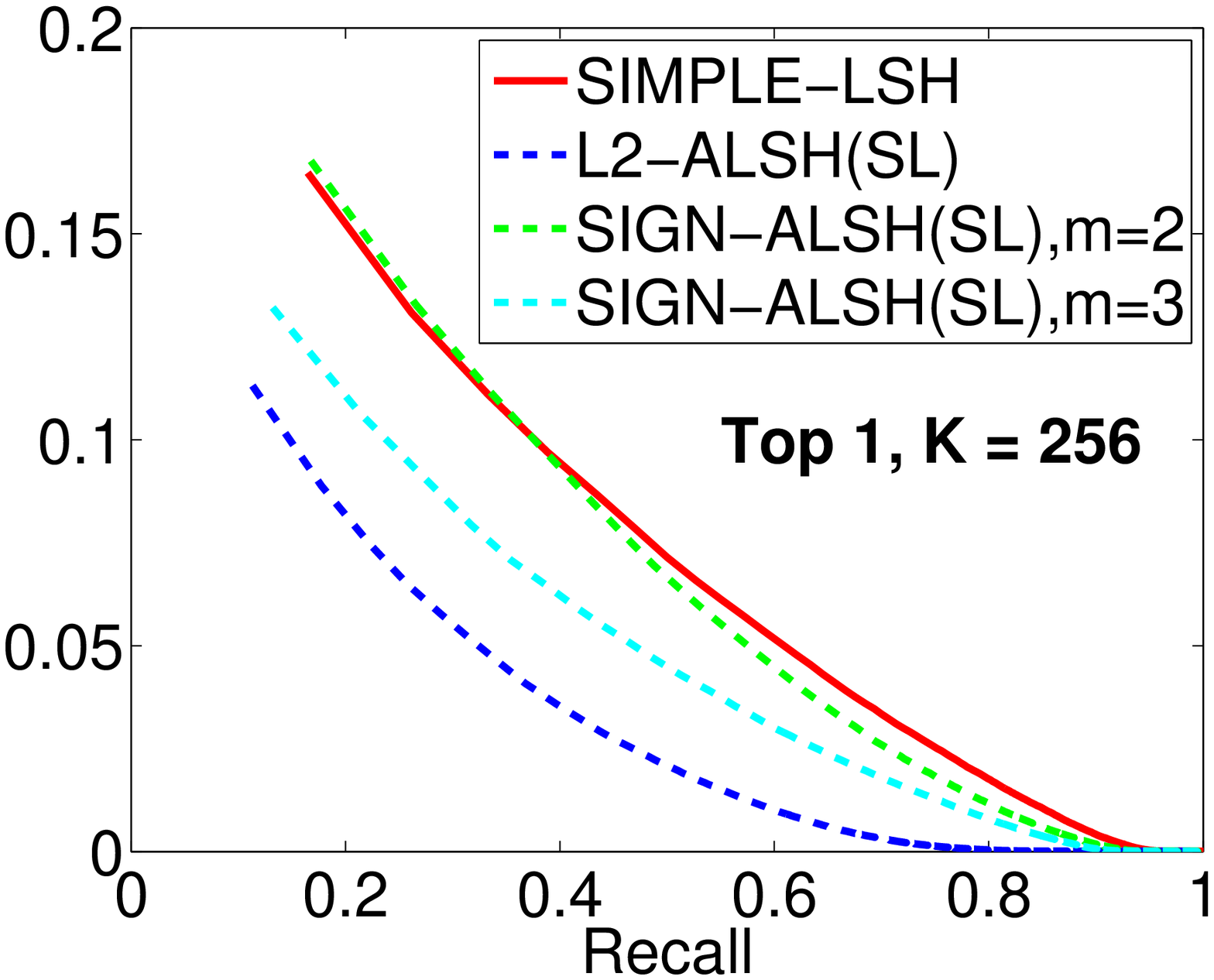}
\setlength{\epsfxsize}{0.235\textwidth}
\epsfbox{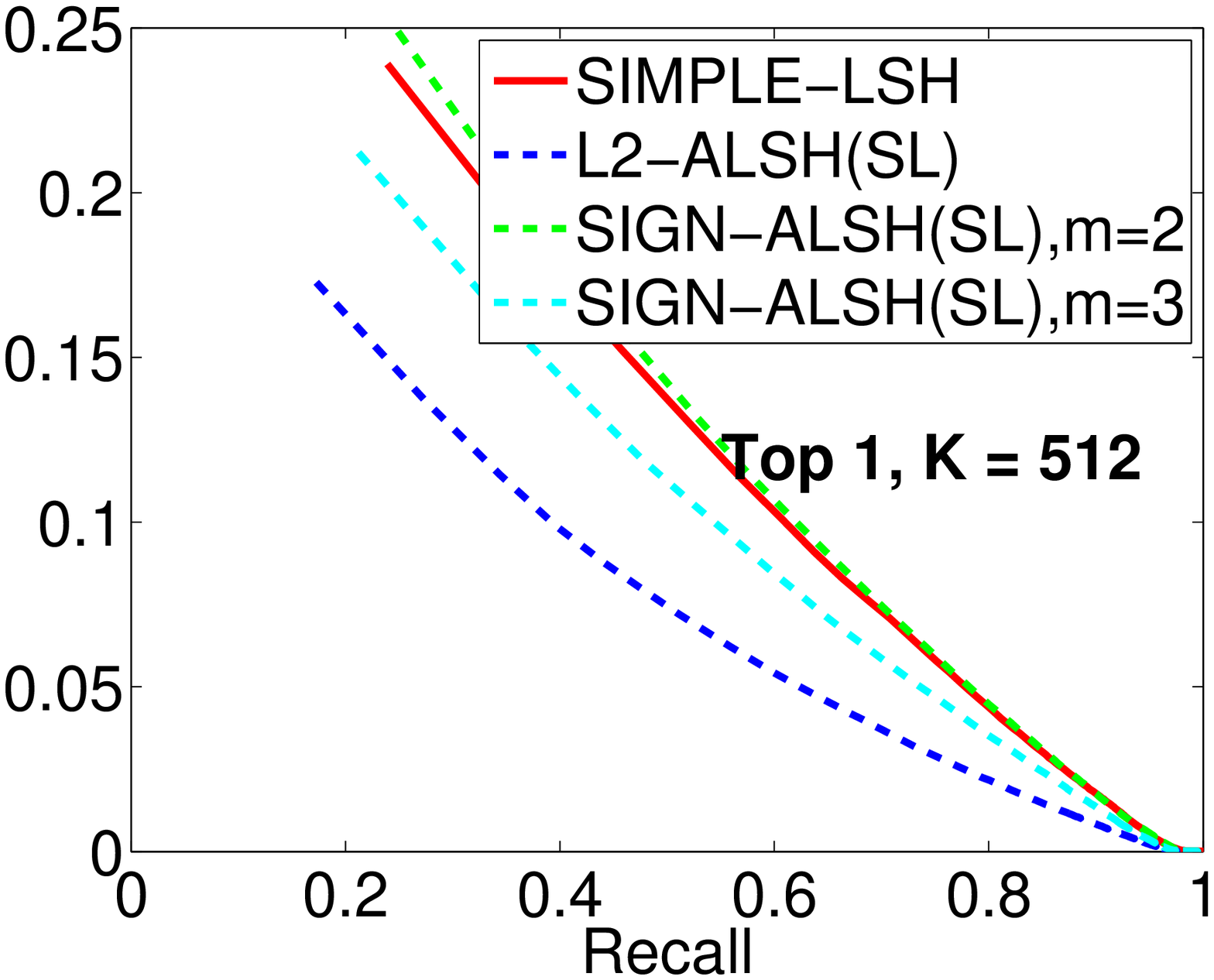}
}
\vspace{0.1in}
\caption{\small\textbf{Movielens}:  Precision-Recall curves (higher is better)
  of retrieving top $T$ items by hash code of length $K$. \textsc{simple-lsh}
  is parameter-free.  For \textsc{l2-alsh(sl)}, we fix the parameters
  $m=3$, $U=0.84$, $r=2.5$ and
  for \textsc{sign-alsh(sl)} we used two
different settings of the parameters: $m=2$, $U=0.75$ and $m=3$, $U=0.85$. \label{fig:movielens}
}
\end{figure*}
%%%%%%%%%%%%%%%%%%%%%%%%%%%%			Empirical Evaluation
\subsection{Empirical Evaluation}
We also compared the hash functions empirically, following the exact
same protocol as \citet{shrivastava14}, using two collaborative
filtering datasets, Netflix and Movielens 10M.

For a given user-item matrix $Z$, we followed the pureSVD procedure
suggested by \citet{cremonesi10}: we first subtracted the overall
average rating from each individual rating and created the matrix $Z$
with these average-subtracted ratings for observed entries and zeros
for unobserved entries.  We then take a rank-$f$ approximation (top
$f$ singular components, $f=150$ for Movielens and $f=300$ for
Netflix) $Z\approx W\Sigma R^\top=Y$ and define $L=W\Sigma$ so that
$Y=LR^\top$.  We can think of each row of $L$ as the vector
presentation of a user and each row of $R$ as the presentation for an
item.

The database $S$ consists of all rows $R_j$ of $R$ (corresponding to
movies) and we use each row $L_i$ of $L$ (corresponding to users)
as a query.  That is, for each user $i$ we would like to find the top
$T$ movies, i.e.~the $T$ movies with highest $\inner{L_i,R_j}$, for
different values of $T$.

To do so, for each hash family, we generate hash codes of length $K$,
for varying lengths $K$, for all movies and a random selection of 60000
users (queries).  For each user, we sort movies in ascending order of
hamming distance between the user hash and movie hash, breaking up
ties randomly.  For each of several values of $T$ and $K$ we calculate
precision-recall curves for recalling the top $T$ movies, averaging
the precision-recall values over the 60000 randomly selected users.

In Figures \ref{fig:netflix} and \ref{fig:movielens}, we plot
precision-recall curves of retrieving top $T$ items by hash code of
length $K$ for Netflix and Movielens datasets where $T\in\{1,5,10\}$
and $K\in \{64,128,256,512\}$.  For \textsc{l2-alsh(sl)} we used
$m=3,U=0.83,r=2.5$, suggested by the authors and used in their
empirical evaluation.  For \textsc{sign-alsh(sl)} we used two
different settings of the parameters suggested by
\citet{shrivastava14b}: $m=2,U=0.75$ and $m=3,U=0.85$.
\textsc{simple-lsh} does not require any parameters.\removed{ For
  \textsc{$L_2$-lsh} we use $U=0.2,r=2$, which worked fairly well in
  all experiments.}

As can be seen in the Figures, \textsc{simple-lsh} shows a dramatic
empirical improvement over \textsc{l2-alsh(sl)}.  Following the
presentation of \textsc{simple-lsh} and the comparison with
\textsc{l2-alsh(sl)}, \citet{shrivastava14b} suggested the modified
hash \textsc{sign-alsh(sl)}, which is based on random projections, as
is \textsc{simple-lsh}, but with an asymmetric transform similar to
that in \textsc{l2-alsh(sl)}.  Perhaps not surprising,
\textsc{sign-alsh(sl)} does indeed perform almost the same as
\textsc{simple-lsh} (\textsc{simple-lsh} has only a slight advantage
on Movielens), however: (1) \textsc{simple-lsh} is simpler, and uses a
single symmetric lower-dimensional transformation $P(x)$; (2)
\textsc{simpler-lsh} is universal and parameter free, while
\textsc{sign-alsh(sl)} requires tuning two parameters (its authors
suggest two different parameter settings for use).  Therefor, we see
no reason to prefer \textsc{sign-alsh(sl)} over the simpler symmetric
option.

%%%%%%%%%%%%%%%%%%%%%%%%%%%%
%%%%%%%%%%%%%%%%%%%%%%%%%%%%			5. X<=1,Y<=1
%%%%%%%%%%%%%%%%%%%%%%%%%%%%
\section{Unnormalized Queries}\label{sec:sphere}

In the previous Section, we exploited asymmetry in the MIPS problem
formulation, and showed that with such asymmetry, there is no need for
the hash itself to be asymmetric.  In this Section, we consider LSH
for inner product similarity in a more symmetric setting, where we
assume no normalization and only boundedness.  That is, we ask whether
there is an LSH or ALSH for inner product similarity over 
$\XX_\bullet=\YY_\bullet=\left\{x\;\middle|\;\norm{x}\leq 1\right\}$.
Beyond a theoretical interest in the need for asymmetry in this fully
symmetric setting, the setting can also be useful if we are interested
in using sets $\XX$ and $\YY$ interchangeably as query and data sets.
In user-item setting for example, one might be also interested in
retrieving the top users interested in a given item without the need
to create a separate hash for this task.  

We first observe that there is no symmetric LSH for this setting.  We
therefore consider asymmetric hashes.  Unfortunately, we show that
neither $\textsc{l2-alsh(sl)}$ (nor $\textsc{sign-alsh(sl)}$) are ALSH
over $\XX_\bullet$.  Instead, we propose a parameter-free asymmetric
extension of \textsc{simple-lsh}, which we call \textsc{simple-alsh},
and show that it is a universal ALSH for inner product similarity over
$\XX_\bullet$.

To summarize the situation, if we consider the problem {\em
  asymmetrically}, as in the previous Section, there is no need for
the hash to be asymmetric, and we can use a single hash function.  But
if we insist on considering the problem {\em symmetrically}, we do
indeed have to use an {\em asymmetric} hash.
%%%%%%%%%%%%%%%%%%%%%%%%%%%%			No symmetric LSH
\subsection{No symmetric LSH}
We first show we do not have a symmetric LSH:
%%%%%%%%%%%%%%%%%%%%%%%%%%%%			Theorem 3: No symmetric LSH
\begin{theorem}
  For any $0<S\leq1$ and $0<c<1$ there is no $(S,cS)$-LSH (by
  Definition \ref{def:lsh2}) for inner product similarity over
  $\XX_\bullet=\YY_\bullet=\left\{x\;\middle|\;\norm{x}\leq
    1\right\}$.
\end{theorem}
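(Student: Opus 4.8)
The plan is to exploit the one feature that fundamentally distinguishes a symmetric hash from an asymmetric one: for any random function $h$ and any single point $x$, the self-collision probability $\PP_h[h(x)=h(x)]$ is identically $1$, since $h(x)$ always equals itself. An LSH, by contrast, is required to push the collision probability down to at most $p_2<1$ whenever the similarity falls to $cS$ or below. The inner-product ``similarity'' of a point with itself is $\sm(x,x)=\norm{x}^2$, and inside the unit ball this self-similarity can be made small. These two demands are irreconcilable, and that is the contradiction I would extract.

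Concretely, I would argue by contradiction. Suppose the hash is an $(S,cS,p_1,p_2)$-LSH over $\XX_\bullet=\YY_\bullet$ with $p_1>p_2$. Since $0<c<1$ and $0<S\leq 1$ give $cS<1$, I can pick a witness point $x\in\XX_\bullet$ with small self-similarity — for instance $x$ with $\norm{x}^2=cS$, so that $\norm{x}=\sqrt{cS}<1$ and $x$ is a legitimate (indeed interior) element of the ball; the point $x=0$ works equally well. For such $x$ we have $\sm(x,x)=\norm{x}^2\leq cS$, so the second bullet of Definition \ref{def:lsh2} (applied with the \emph{same} point on both sides) forces $\PP_h[h(x)=h(x)]\leq p_2$. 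But the left-hand side is exactly $1$, whence $p_2\geq 1$ and therefore $p_1>1$, which is impossible since $p_1$ is a probability. This contradiction holds uniformly for every $S,c$ in the stated range, giving the full non-existence claim.

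Since the argument is this short, there is no computational obstacle to speak of; the only thing to get right is conceptual, namely why the same reasoning does \emph{not} also kill the asymmetric hash constructed later in Section \ref{sec:sphere}. The entire contradiction rests on the identity $\PP[h(x)=h(x)]=1$, that is, on applying one and the same function to $x$ on both sides of the collision event — precisely what Definition \ref{def:lsh2} mandates. An ALSH built from two distinct maps $f,g$ evaluates $\PP[f(x)=g(x)]$, which need not equal $1$, so it slips through exactly this gap. I would therefore flag that the result is sharp in attributing the impossibility to symmetry alone, and take care that the chosen witness lies inside the ball so that the similarity threshold $cS$ is genuinely attainable on the diagonal.
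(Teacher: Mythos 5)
Your proof is correct and is essentially the paper's own argument: both hinge on the fact that a symmetric hash satisfies $\PP_h[h(x)=h(x)]=1$ while the self-similarity $\sm(x,x)=\norm{x}^2$ can be made $\leq cS$ inside the unit ball, forcing $p_2\geq 1$. The only (cosmetic) difference is that the paper additionally exhibits a second point $y=\tfrac{1}{c}x$ with $q^\top y = S$ so that the first bullet of Definition~\ref{def:lsh2} certifies $p_1\leq 1$, whereas you appeal to ``$p_1$ is a probability''; strictly speaking $p_1$ is only a threshold, so you should add the one-line observation that some pair in $\XX_\bullet\times\YY_\bullet$ attains similarity at least $S$ (e.g.\ any unit vector paired with itself), which is what actually pins $p_1\leq 1$ and completes the contradiction with $p_1>p_2\geq 1$.
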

\begin{proof}
  The same argument as in \citet[][Theorem 1]{shrivastava14} applies:
  Assume for contradiction $h$ is an $(S,cS,p_1,p_2)$-LSH (with
  $p_1>p_2$).  Let $x$ be a vector such that $\norm{x}=cS<1$.  Let
  $q=x\in\XX_\bullet$ and $y=\frac{1}{c}x\in\XX_\bullet$.  Therefore,
  we have $q^\top x= cS$ and $q^\top y=S$. However, since $q=x$,
  $\PP_h(h(q)=h(x))=1\leq p_2 < p_1=\PP_h(h(q)=h(y))\leq 1$ and we get
  a contradiction.
\end{proof}
%%%%%%%%%%%%%%%%%%%%%%%%%%%%			L2-ALSH(SL)
\subsection{L2-ALSH(SL)}
We might hope \textsc{l2-alsh(sl)} is a valid ALSH here.  Unfortunately,
whenever $S<(c+1)/2$, and so in particular for all $S<1/2$, it is not:
%%%%%%%%%%%%%%%%%%%%%%%%%%%%			Theorem 4: No L2-ALSH for S<0.5
\begin{theorem}\label{thm:no-l2alsh}
  For any $0<c<1$ and any $0<S<(c+1)/2$, there are no $U,m$ and $r$
  such that \textsc{l2-alsh(sl)} is an $(S,cS)$-ALSH for inner
  product similarity over
  $\XX_\bullet=\YY_\bullet=\left\{x\;\middle|\;\norm{x}\leq
    1\right\}$.
\end{theorem}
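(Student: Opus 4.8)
The plan is to reuse the structure of the proof of Lemma~\ref{lem:l2alshbound}: since the collision probability of \textsc{l2-alsh(sl)} is $\FF_r(\norm{P(x)-Q(y)})$ with $\FF_r$ strictly decreasing \citep{datar04}, it suffices to exhibit a single pair with inner product $\geq S$ that is mapped \emph{farther} apart than some pair with inner product $\leq cS$. Because $\FF_r$ is decreasing, such a configuration forces the high-similarity collision probability (which must be $\geq p_1$) to fall strictly below the low-similarity one (which must be $\leq p_2$), giving $p_1\leq \FF_r(\text{larger dist}) < \FF_r(\text{smaller dist}) \leq p_2$ and contradicting $p_1>p_2$. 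The new ingredient compared with Lemma~\ref{lem:l2alshbound} is that the query is no longer normalized, so I may choose its norm freely in $[0,1]$, and this extra freedom is exactly what breaks the hash in the stated regime.

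First I would record the squared-distance identity obtained by the same telescoping of the $1/2$-padding coordinates used in Lemma~\ref{lem:l2alshbound}. Writing $t=\norm{Ux}$, one has $\sum_{k=1}^m (t^{2^k}-\tfrac12)^2 = t^{2^{m+1}}-t^2+\tfrac{m}{4}$, so that
\begin{equation*}
\norm{P(x)-Q(y)}^2 = \norm{y}^2+\tfrac{m}{4}+(U\norm{x})^{2^{m+1}}-2U\,x^\top y .
\end{equation*}
With this identity I would choose the two pairs. For the high-similarity pair take unit vectors $x,y$ with $x^\top y=S$ (possible since $S<(c+1)/2<1$), giving squared distance $D_{\text{high}}=1+\tfrac{m}{4}+U^{2^{m+1}}-2US$. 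For the low-similarity pair take a unit vector $x'$ and set $y'=cS\,x'$, so that $\norm{y'}=cS\leq 1$ and $(x')^\top y'=cS$, giving $D_{\text{low}}=(cS)^2+\tfrac{m}{4}+U^{2^{m+1}}-2UcS$. Both pairs lie in $\XX_\bullet\times\YY_\bullet$ and meet the required inner-product constraints exactly.

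Finally I would compare the two distances. The $\tfrac{m}{4}$ and $U^{2^{m+1}}$ terms cancel, leaving
\begin{equation*}
D_{\text{high}}-D_{\text{low}} = \big(1-(cS)^2\big)-2US(1-c).
\end{equation*}
The hypothesis $S<(c+1)/2$ is equivalent to $2S<c+1$; multiplying by $(1-c)>0$ gives $2S(1-c)<(c+1)(1-c)=1-c^2$, and since $U<1$ and $S^2\leq 1$ this yields the chain $2US(1-c)<2S(1-c)<1-c^2\leq 1-(cS)^2$, hence $D_{\text{high}}>D_{\text{low}}$. Thus the high-similarity pair is mapped strictly farther than the low-similarity pair, delivering the contradiction. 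The step requiring the most care is the distance identity (getting the telescoping of the padding right) together with choosing the pairs so that the cancellation isolates precisely the quantity $1-(cS)^2$ versus $2US(1-c)$ that the assumption $S<(c+1)/2$ controls; once the pairs are fixed, the remaining estimate is the short chain above.
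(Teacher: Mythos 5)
Your proposal is correct and follows essentially the same route as the paper's proof: the same two witness pairs (unit vectors with inner product $S$ for the high-similarity pair, and a unit data vector with query $cSx'$ for the low-similarity pair), the same telescoped squared-distance identity, and the same use of $S<(c+1)/2$ via $2S(1-c)<1-c^2\leq 1-(cS)^2$ before invoking monotonicity of $\FF_r$. The only difference is cosmetic: you spell out the telescoping and the final inequality chain more explicitly than the paper does.
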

\begin{proof}
  Let $q_1$ and $x_1$ be unit vectors such that
  $q_1^\top x_1=S$. Let $x_2$ be a unit vector and define $q_2=cSx_2$.
  \removed{We want to show that$\norm{P(x_2)-Q(q_2)}^2\leq
    \norm{P(x_1)-Q(q_1)}^2$.} For any $U$ and $m$:
\begin{align*}
\norm{P(x_2)-Q(q_2)}^2 &= \norm{q_2}+\frac{m}{4}+ \norm{Ux_2}^{2^{m+1}}-2q_2^\top x\\
&= c^2S^2+\frac{m}{4}+ U^{2^{m+1}}-2cSU\\
&\leq 1+\frac{m}{4}+ U^{2^{m+1}}-2SU\\
&=\norm{P(x_2)-Q(q_2)}^2
\end{align*}
where the inequality follows from $S<(c+1)/2$. Now, the same arguments as in Lemma \ref{lem:l2alshbound} using monotonicity of
collision probabilities in $\norm{P(x)-Q(q)}$ establish
\textsc{ls-alsh(sl)} is not an $(S,cS)$-ALSH.
\end{proof}
In Appendix~\ref{supp:sign}, we show a stronger negative result for
\textsc{sign-alsh(sl)}: for any $S>0$ and $0<c<1$, there are no $U,m$
such that \textsc{sign-alsh(sl)} is an $(S,cS)-ALSH$.

%%%%%%%%%%%%%%%%%%%%%%%%%%%%			SIMPLE-ALSH
\subsection{SIMPLE-ALSH}
Fortunately, we can define a variant of $\textsc{simple-lsh}$, which
we refer to as \textsc{simple-alsh}, for this more general case where
queries are not normalized.  We use the pair of transformations:
\begin{eqnarray}
  \label{eq:simpleP}
  P(x) &=& \big[x;\sqrt{1-\|x\|_2^2}; 0\big]\\
  \nonumber
  Q(x) &=& \big[x;0;\sqrt{1-\|x\|_2^2}\big]
  \nonumber
\end{eqnarray}
and the random mappings $f(x)=h_a(P(x))$, $g(y)=h_a(Q(x))$, where
$h_a(z)$ is as in \eqref{eq:ha}. 
It is clear that by these definitions, we always have that for all $x,y\in\XX_\bullet$, $P(x)^\top Q(y)=x^\top y$ and $\norm{P(x)}=\norm{Q(y)}=1$.
%%%%%%%%%%%%%%%%%%%%%%%%%%%%			Theorem 5: SIMPLE-ALSH is universal
\begin{theorem}\label{thm:smiple-alsh}
\textsc{simple-alsh} is a universal ALSH over $\XX_\bullet=\YY_\bullet=\left\{x\;\middle|\;\norm{x}\leq
    1\right\}$.  That is, for every $0<S,c<1$, it is an $(S,cS)$-ALSH
  over $\XX_\bullet,\YY_\bullet$.
\end{theorem}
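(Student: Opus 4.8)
The plan is to reduce this to exactly the same argument used for \textsc{simple-lsh} in Theorem \ref{thm:smiple-lsh}, exploiting the two facts noted immediately before the statement: for all $x,y\in\XX_\bullet$ we have $P(x)^\top Q(y)=x^\top y$ and $\norm{P(x)}=\norm{Q(y)}=1$. The crucial observation is that the asymmetric lifting sends \emph{both} points onto the unit sphere (now in $\RR^{d+2}$), while the inner product taken \emph{across} the two maps reproduces the original inner product $x^\top y$. This is precisely the configuration in which the sign-of-random-projection collision probability applies, and it is the whole reason the extra coordinate is split into two slots, one filled by $P$ and the other by $Q$.

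First I would invoke the Goemans--Williamson identity \citep{goemans95}: for any two unit vectors $u,v$ and $a\sim\NN(0,I)$,
$$\PP[\sign(a^\top u)=\sign(a^\top v)]=1-\frac{\cos^{-1}(u^\top v)}{\pi}.$$
Applying this with $u=P(x)$ and $v=Q(y)$, and substituting $\norm{P(x)}=\norm{Q(y)}=1$ together with $P(x)^\top Q(y)=x^\top y$, yields
$$\PP_{(f,g)}[f(x)=g(y)]=\PP[h_a(P(x))=h_a(Q(y))]=1-\frac{\cos^{-1}(x^\top y)}{\pi}$$
for every $x,y\in\XX_\bullet$. Note that $x^\top y\in[-1,1]$ by Cauchy--Schwarz (since $\norm{x},\norm{y}\leq 1$), so $\cos^{-1}$ is well defined on the entire relevant range.

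Next I would fix arbitrary $0<S,c<1$ and set $p_1=1-\cos^{-1}(S)/\pi$ and $p_2=1-\cos^{-1}(cS)/\pi$. Since $t\mapsto 1-\cos^{-1}(t)/\pi$ is strictly increasing on $[-1,1]$ (because $\cos^{-1}$ is decreasing there), the collision probability is a monotone function of $x^\top y$. Hence $x^\top y\geq S$ forces $\PP_{(f,g)}[f(x)=g(y)]\geq p_1$, while $x^\top y\leq cS$ forces $\PP_{(f,g)}[f(x)=g(y)]\leq p_2$; the same monotonicity together with $cS<S$ gives $p_1>p_2$. Thus \textsc{simple-alsh} is an $(S,cS,p_1,p_2)$-ALSH over $\XX_\bullet,\YY_\bullet$, and since $S,c$ were arbitrary it is a universal ALSH.

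There is essentially no hard step here, since all the substance has been front-loaded into the two identities $P(x)^\top Q(y)=x^\top y$ and $\norm{P(x)}=\norm{Q(y)}=1$ established just before the theorem. The only point requiring a moment's care---and the closest thing to an obstacle---is that, unlike the normalized-query setting of Theorem \ref{thm:smiple-lsh}, here inner products can be negative, so one must confirm that both the Goemans--Williamson formula and the monotonicity of $1-\cos^{-1}(\cdot)/\pi$ hold on the full interval $[-1,1]$ rather than only on $[0,1]$. Both do, which is all that is needed.
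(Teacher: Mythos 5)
Your proof is correct and follows essentially the same route as the paper's: use $P(x)^\top Q(y)=x^\top y$ and $\norm{P(x)}=\norm{Q(y)}=1$ to apply the Goemans--Williamson collision probability, then conclude by monotonicity of $1-\cos^{-1}(\cdot)/\pi$. Your added remark that the formula and monotonicity must be checked on all of $[-1,1]$ (since inner products can now be negative) is a small point of care the paper glosses over, but it does not change the argument.
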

\begin{proof}
  The choice of mappings ensures that for all $x,y\in\XX_\bullet$ we
  have $P(x)^\top Q(y)=x^\top y$ and $\norm{P(x)}=\norm{Q(y)}=1$, and
  so $\PP[h_a(P(x))=h_a(Q(y))] =1-\frac{\cos^{-1}(q^\top x)}{\pi}$.
  As in the proof of Theorem \ref{thm:smiple-lsh}, monotonicity of
  $1-\frac{\cos^{-1}(x)}{\pi}$ establishes the desired ALSH properties.
\end{proof}

~\citet{shrivastava14c} also showed how a modification of \textsc{simple-alsh} can be used for searching similarity measures such as set containment and weighted Jaccard similarity.

%%%%%%%%%%%%%%%%%%%%%%%%%%%%
%%%%%%%%%%%%%%%%%%%%%%%%%%%%			7. Conclusion
%%%%%%%%%%%%%%%%%%%%%%%%%%%%
\section{Conclusion}

We provide a complete characterization of when symmetric and
asymmetric LSH are possible for inner product similarity:
\begin{itemize*}
\item Over $\RR^d$, no symmetric nor asymmetric LSH is possible.
\item For the MIPS setting, with normalized queries $\norm{q}=1$ and
  bounded database vectors $\norm{x}\leq 1$, a universal symmetric LSH
  is possible.
\item When queries and database vectors are bounded but not normalized,
  a symmetric LSH is not possible, but a universal asymmetric LSH is.
  Here we see the power of asymmetry.
\end{itemize*}
This corrects the view of \citet{shrivastava14}, who used the
nonexistence of a symmetric LSH over $\RR^d$ to motivate an asymmetric
LSH when queries are normalized and database vectors are bounded, even
though we now see that in these two settings there is actually no
advantage to asymmetry.  In the third setting, where an asymmetric
hash is indeed needed, the hashes suggested by
\citet{shrivastava14,shrivastava14b} are not ALSH, and a
different asymmetric hash is required (which we provide).
Furthermore, even in the MIPS setting when queries are normalized (the
second setting), the asymmetric hashes suggested by
\citet{shrivastava14,shrivastava14b} are {\em not} universal
and require tuning parameters specific to $S,c$, in contrast to
\textsc{simple-lsh} which is symmetric, parameter-free and universal.

It is important to emphasize that even though in the MIPS setting an
asymmetric hash, as we define here, is not needed, an asymmetric view
of the problem {\em is} required.  In particular, to use a symmetric
hash, one {\em must} normalize the queries but not the database
vectors, which can legitimately be viewed as an asymmetric operation
which is part of the hash (though then the hash would not be, strictly
speaking, an ALSH).  In this regard \citet{shrivastava14} do
indeed successfully identify the need for an asymmetric view of MIPS,
and provide the first practical ALSH for the problem.

\subsubsection*{Acknowledgments}
This research was partially funded by NSF award IIS-1302662.

\bibliographystyle{apa}
\bibliography{ref}
\appendix

\section{Another variant}\label{supp:sign}

To benefit from the empirical advantages of random projection hashing,
\citet{shrivastava14b} also proposed a modified asymmetric LSH, which
we refer to here as \textsc{sign-alsh(sl)}.  \textsc{sign-alsh(sl)}
uses two different mappings $P(x)$, $Q(q)$, similar to those of
\textsc{l2-alsh(sl)}, but then uses a random projection hash $h_a(x)$,
as is the one used by \textsc{simple-lsh}, instead of the quantized
hash used in \textsc{l2-alsh(sl)}.  In this appendix we show that
our theoretical observations about \textsc{l2-alsh(sl)}
are also valid for \textsc{sign-alsh(sl)}.

\textsc{sign-alsh(sl)} uses the pair of mappings:
\begin{equation}
  \label{eq:theirPQ2}
  \begin{aligned}
    P(x)&=[Ux;1/2-\norm{Ux}^2;\dots;1/2-\norm{Ux}^{2^m}]\\
    Q(y)&=[y;0;0;\dots;0],
  \end{aligned}
\end{equation}
where $m$ and $U$ are parameters, as in \textsc{l2-alsh(sl)}.
\textsc{sign-alsh(ls)} is then given by
$f(x)=h_a(P(x))$, $g(y)=h_a(Q(x))$, where $h_a$ is the random projection
hash given in \eqref{eq:ha}.  \textsc{sign-alsh(ls)} therefor depends
on two parameters, and uses a binary alphabet $\Gamma=\{\pm 1\}$.

In this section, we show that, like \textsc{l2-alsh(ls)},
\textsc{sign-alsh(ls)} is not a universal ALSH over
$\XX_\bullet,\YY_\circ$, and moreover for any $S>0$ and $0<c<1$ it is not an $(S,cS)$-ALSH over
$\XX_\bullet=\YY_\bullet$:
%%%%%%%%%%%%%%%%%%%%%%%%%%%%				Lemma 1: SIGN-ALSH bounds
\begin{lemma}
For any $m,U,r,$ and for any $0<S<1$ and
$$
\min\bigg\{\sqrt{1-\frac{U^{2^{m+1}}(1-S^{2^{m+1}})}{U^{2^{m+1}}+m/4}},\frac{\sqrt[2^{m+1}]{\frac{(m/2)}{2^{m+1}-2}}}{SU}\bigg\} \leq c < 1
$$
\textsc{sign-alsh(sl)} is not an $(S, cS)$-ALSH for inner product similarity over $\XX_\bullet=\left\{ x \middle| \norm{x}\leq 1 \right\}$ and
  $\YY_\circ=\left\{ q \middle| \norm{q}=1 \right\}$.
\end{lemma}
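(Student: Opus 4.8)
The plan is to mirror Lemma~\ref{lem:l2alshbound}, but for the random-projection hash, whose collision probability (as in the proof of Theorem~\ref{thm:smiple-lsh}) is $\PP[h_a(u)=h_a(v)] = 1-\frac{\cos^{-1}(u^\top v/(\norm{u}\norm{v}))}{\pi}$, a monotonically \emph{increasing} function of the cosine similarity rather than a decreasing function of Euclidean distance. The enabling first step is a direct evaluation of $\norm{P(z)}$. Writing $w=\norm{Uz}$ and expanding $(1/2-w^{2^k})^2 = 1/4 - w^{2^k} + w^{2^{k+1}}$, the middle terms telescope and one obtains the clean identity $\norm{P(z)}^2 = m/4 + (U\norm{z})^{2^{m+1}}$ for every $z$. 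Since $Q(q)=[q;0;\dots;0]$ has $\norm{Q(q)}=\norm{q}=1$ and $P(z)^\top Q(q)=U\,z^\top q$, the quantity that drives every collision probability is the cosine similarity
$$\gamma(z) \;=\; \frac{U\,z^\top q}{\sqrt{m/4 + (U\norm{z})^{2^{m+1}}}}.$$
To contradict the ALSH property it then suffices to produce, for one fixed unit query $q$, a ``near'' point $x$ with $x^\top q\ge S$ and a ``far'' point $y$ with $y^\top q\le cS$ for which $\gamma(y)\ge\gamma(x)$: monotonicity then forces $p_1\le p_2$.

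For the near point I would take the worst admissible choice, a unit vector $x$ with $x^\top q=S$, which minimizes $\gamma$ over near points and gives $\gamma(x)=US/\sqrt{m/4+U^{2^{m+1}}}$. For far points I would restrict to multiples $y=\beta q$ with $0\le\beta\le cS$, so that $\gamma(y)=\phi(U\beta)$ where $\phi(w)=w/\sqrt{m/4+w^{2^{m+1}}}$. The crucial observation---and the reason the statement carries a minimum of two expressions---is that $\phi$ is \emph{not} monotone: its derivative vanishes at a unique interior maximizer $w^\star=\big(\tfrac{m/2}{2^{m+1}-2}\big)^{1/2^{m+1}}$, with $\phi$ increasing before $w^\star$ and decreasing after. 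Whether this maximizer is reachable within $\beta\le cS$ (equivalently whether $w^\star\le UcS$) is exactly what selects between two constructions.

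I would then run both. If $UcS\ge w^\star$, i.e.\ $c\ge w^\star/(SU)$ (the second threshold), the maximizer is attained inside the feasible set at $y=(w^\star/U)q$, which is legitimate since $\norm{y}=w^\star/U\le cS<1$; then $\gamma(x)=US/\sqrt{m/4+U^{2^{m+1}}}\le US/\sqrt{m/4+(US)^{2^{m+1}}}=\phi(US)\le\phi(w^\star)=\gamma(y)$ using $S\le1$, so the contradiction is automatic with no extra condition on $c$. Otherwise $\phi$ is increasing on $[0,UcS]$ and the best far point is $y=cSq$; here $\gamma(y)\ge\gamma(x)$ becomes, after squaring, $\tfrac{m}{4}(1-c^2)\le U^{2^{m+1}}\big(c^2-c^{2^{m+1}}S^{2^{m+1}}\big)$, and relaxing $c^{2^{m+1}}\le1$ and solving for $c$ yields precisely the first threshold $c\ge\sqrt{1-\tfrac{U^{2^{m+1}}(1-S^{2^{m+1}})}{U^{2^{m+1}}+m/4}}$. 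Since $c\ge\min$ of the two thresholds guarantees that at least one construction is in force, the ALSH property fails either way.

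The main obstacle I anticipate is not the algebra but correctly identifying the worst far point. It is tempting to always use $y=cSq$, as in the $L_2$ case, but because $\phi$ turns over at $w^\star$, for large $UcS$ a strictly \emph{shorter} parallel vector collides with $q$ more often than $cSq$ does; locating $w^\star$, checking that $(w^\star/U)q$ is simultaneously admissible and strictly better, and recognizing that this is exactly the second branch of the minimum is the delicate part. The telescoping identity for $\norm{P(z)}^2$ is the other place to be careful, though it is routine once the square is expanded.
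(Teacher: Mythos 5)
Your proof is correct and follows essentially the same route as the paper's: the same near point (a unit vector with $x^\top q=S$), the same two far points ($cSq$ for the first threshold and $(\alpha_m/U)q$ for the second, where $\alpha_m=\sqrt[2^{m+1}]{(m/2)/(2^{m+1}-2)}$), and the same comparison of cosine similarities via the monotonicity of the sign-hash collision probability. Your observation that $\alpha_m/U$ is the maximizer of $\phi(w)=w/\sqrt{m/4+w^{2^{m+1}}}$ is exactly the (unstated) origin of the paper's second threshold, so this is the same argument with the motivation made explicit.
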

\begin{proof}
  Assume for contradiction that:
  $$
  \sqrt{1-\frac{U^{2^{m+1}}(1-S^{2^{m+1}})}{U^{2^{m+1}}+m/4}} \leq c < 1
  $$
   and \textsc{sign-alsh(sl)} is an $(S,cS)$-ALSH.  For any query
  point $q\in \YY_\circ$, let $x\in \XX_\bullet$ be a vector s.t.~$q^\top x=S$ and
  $\|x\|_2=1$ and let $y=cSq$, so that $q^\top y = cS$. We have that:
\begin{align*}
\frac{( P(y)^\top Q(q) )^2}{\norm{P(y)}^2} &= \frac{c^2S^2U^2}{m/4+\norm{y}^{2^{m+1}}}\\
&= \frac{c^2S^2U^2}{m/4+(cSU)^{2^{m+1}}}\\
\intertext{Using $1-\frac{U^{2^{m+1}}(1-S^{2^{m+1}})}{U^{2^{m+1}}+m/4}\leq c^2 < 1$:}
&>  \frac{c^2S^2U^2}{m/4+(SU)^{2^{m+1}}}\\
&\geq \frac{S^2U^2}{m/4+U^{2^{m+1}}}\\
&=\frac{( P(x)^\top Q(q) )^2}{\norm{P(x)}^2}
\end{align*}
 The monotonicity of $1-\frac{\cos^{-1}(x)}{\pi}$ establishes a contradiction. To get the other bound on $c$, let $\alpha_m = \sqrt[2^{m+1}]{\frac{(m/2)}{2^{m+1}-2}}$ and assume for contradiction that:
  $$
  \frac{\alpha_m}{SU}=\frac{\sqrt[2^{m+1}]{\frac{(m/2)}{2^{m+1}-2}}}{SU}\leq c < 1
  $$
  and \textsc{sign-alsh(sl)} is an $(S,cS)$-ALSH.  For any query
  point $q\in \YY_\circ$, let $x\in \XX_\bullet$ be a vector s.t.~$q^\top x=S$ and
  $\|x\|_2=1$ and let $y=(\alpha_m/U)q$.
By the monotonicity of $1-\frac{\cos^{-1}(x)}{\pi}$, to get a contradiction is enough to show that 
$$
\frac{P(x)^\top Q(q)}{\norm{P(x)}}\leq \frac{P(y)^\top Q(q)}{\norm{P(y)}}
$$
We have:
\begin{align*}
\frac{( P(y)^\top Q(q) )^2}{\norm{P(y)}^2} &= \frac{\alpha_{m}^2}{m/4+\norm{\alpha_{m}}^{2^{m+1}}}\\
&= \frac{\sqrt[2^{m}]{\frac{(m/2)}{2^{m+1}-2}}}{m/4+\frac{(m/2)}{2^{m+1}-2}}\\
\intertext{Since this is the maximum value of the function $f(U)=U^2/(m/4+U^{2^{m+1}})$:}
&\geq \frac{U^2}{m/4+U^{2^{m+1}}}\\
&\geq \frac{S^2U^2}{m/4+U^{2^{m+1}}}\\
&=\frac{( P(x)^\top Q(q) )^2}{\norm{P(x)}^2}
\end{align*}
which is a contradiction.
\end{proof}

%%%%%%%%%%%%%%%%%%%%%%%%%%%%			Corollary 1 : SIGN-ALSH not universal
\begin{cor}
  For any $U,m$ and $r$, \textsc{sign-alsh(sl)} is not a universal ALSH
  for inner product similarity over $\XX_\bullet=\left\{ x \middle|
    \norm{x}\leq 1 \right\}$ and $\YY_\circ=\left\{ q \middle| \norm{q}=1
  \right\}$.  Furthermore, for any $c<1$, and any choice of $U,m,r$
  there exists $0<S<1$ for which \textsc{sign-alsh(sl)} is not an
  $(S,cS)$-ALSH over $\XX_\bullet,\YY_\circ$, and for any $S<1$ and any choice of
  $U,m,r$ there exists $0<c<1$ for which \textsc{sign-alsh(sl)} is not
  an $(S,cS)$-ALSH over $\XX_\bullet,\YY_\circ$.
\end{cor}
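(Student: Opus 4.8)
The plan is to read off all three assertions from the lower bound on $c$ in the preceding Lemma, by studying how its right-hand side, which I will write as
\[
c^*(S) = \min\left\{\sqrt{1-\tfrac{U^{2^{m+1}}(1-S^{2^{m+1}})}{U^{2^{m+1}}+m/4}},\ \tfrac{\alpha_m}{SU}\right\}, \qquad \alpha_m = \sqrt[2^{m+1}]{\tfrac{m/2}{2^{m+1}-2}},
\]
behaves as a function of $S\in(0,1)$ for fixed $U,m$. The Lemma guarantees that whenever $c^*(S)\le c<1$ the hash fails to be an $(S,cS)$-ALSH, so each clause reduces to exhibiting, under its own quantifiers, a pair $(S,c)$ with $c\in[c^*(S),1)$.

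First I would dispatch the easy direction, which yields both non-universality and the last clause. For any fixed $S<1$ the radical term of $c^*(S)$ is strictly below $1$, since $1-S^{2^{m+1}}>0$ forces the radicand strictly below $1$; hence $c^*(S)<1$ and the interval $[c^*(S),1)$ is nonempty. Picking any $c$ in it makes the hash fail at $(S,cS)$. Taking one such pair already shows the hash is not universal, and doing this for an arbitrary prescribed $S<1$ proves the final clause (``for any $S<1$ there exists $c$'').

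The remaining clause --- for any prescribed $c<1$ produce an $S$ with $c^*(S)\le c$ --- is where the work lies, and it is the step I expect to be the main obstacle. Unlike \textsc{l2-alsh(sl)}, where the analogous threshold diverges to $-\infty$ as $S\to0$ so the claim is immediate, here the two terms defining $c^*$ move in opposite directions: the radical term is increasing in $S$, rising from its $S\to0$ value $\sqrt{(m/4)/(U^{2^{m+1}}+m/4)}$ up to $1$, while $\alpha_m/(SU)$ is decreasing, falling from $+\infty$ down to $\alpha_m/U$. The plan is therefore to verify these two monotonicities explicitly, conclude that $\inf_{S\in(0,1)}c^*(S)$ is attained in the limits and equals $\min\{\sqrt{(m/4)/(U^{2^{m+1}}+m/4)},\ \alpha_m/U\}$, and then choose $S$ near whichever endpoint realizes this infimum so that $c^*(S)\le c$.

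The delicate point, which is why I single this out as the crux, is that for fixed $U,m$ this infimum is a fixed positive quantity (both candidate values are bounded away from $0$), so the Lemma alone only places the failing thresholds in $[\inf_S c^*,1)$. Driving the conclusion down to \emph{every} $c<1$ thus requires either exploiting the full range of $S$ more carefully or supplementing the Lemma's bound with a separate argument for the small-$c$ regime; settling exactly how far down in $c$ the failure can be pushed, and confirming the location of the infimum through the monotonicity calculation above, is the part I would carry out in full detail.
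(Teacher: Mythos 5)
Your handling of the first and third assertions is correct and is exactly the (implicit) derivation the paper intends: the corollary is stated without proof, to be read off from the preceding Lemma just as the analogous corollary for \textsc{l2-alsh(sl)} is read off from Lemma~\ref{lem:l2alshbound}. For fixed $S<1$ the radical term is strictly below $1$ because $1-S^{2^{m+1}}>0$, so $[c^*(S),1)$ is nonempty and both non-universality and the ``for any $S$ there exists $c$'' clause follow. Your monotonicity analysis and your formula $\inf_{S}c^*(S)=\min\bigl\{\sqrt{(m/4)/(U^{2^{m+1}}+m/4)},\,\alpha_m/U\bigr\}>0$ are also correct.

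The crux you isolate, however, is not a step you left unfinished---it is a genuine defect in the corollary itself, and no supplementary argument will close it. The paper's Lemma only gives a \emph{sufficient} condition for failure, and here (unlike the \textsc{l2-alsh(sl)} case, where the threshold on $c$ diverges to $-\infty$ as $S\to 0$) the sufficient condition is vacuous for $c$ below the positive infimum you computed. Worse, a direct computation shows the middle clause is false there. The collision probability of \textsc{sign-alsh(sl)} is a strictly increasing function of the cosine similarity $U q^\top x\big/\sqrt{m/4+(U\norm{x})^{2^{m+1}}}$ (using $\norm{Q(q)}=1$ and the telescoping identity $\norm{P(x)}^2=m/4+\norm{Ux}^{2^{m+1}}$). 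Over $x\in\XX_\bullet$ with $q^\top x\geq S$ this quantity is minimized at $q^\top x=S$, $\norm{x}=1$, giving $US/\sqrt{m/4+U^{2^{m+1}}}$; over $y\in\XX_\bullet$ with $q^\top y\leq cS$ it is maximized at $y=cSq$, giving $UcS/\sqrt{m/4+(UcS)^{2^{m+1}}}$. Hence the hash \emph{is} an $(S,cS)$-ALSH over $\XX_\bullet,\YY_\circ$ precisely when
\begin{equation*}
m/4+(UcS)^{2^{m+1}} \;>\; c^2\bigl(m/4+U^{2^{m+1}}\bigr),
\end{equation*}
and this holds for \emph{every} $S\in(0,1)$ as soon as $c^2\leq (m/4)\big/(m/4+U^{2^{m+1}})$. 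So for any fixed $U,m$ and any $c$ below that bound there is no $S$ witnessing failure; the honest conclusion is that only the first and last assertions of the corollary can be proved, and your instinct to refuse to wave your hands over the middle one was the right call.
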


%%%%%%%%%%%%%%%%%%%%%%%%%%%%			Lemma 2 : SIGN-ALSH is not ALSH for q<=1
\begin{lemma}
For any $S>0$ and $0<c<1$ there are no $U$ and $m$ such that \textsc{sign-alsh(sl)} is an $(S,cS)$-ALSH for inner product similarity over $\XX_\bullet=\YY_\bullet=\left\{x\;\middle|\;\norm{x}\leq
    1\right\}$.
\end{lemma}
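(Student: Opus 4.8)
The plan is to mirror the earlier nonexistence arguments (in particular Theorem~\ref{thm:no-l2alsh} and the preceding lemma in this appendix), exploiting the fact that \textsc{sign-alsh(sl)} uses the random projection hash $h_a$ of \eqref{eq:ha}, whose collision probability is $\PP[h_a(P(x))=h_a(Q(q))] = 1-\frac{1}{\pi}\cos^{-1}\!\big(\tfrac{P(x)^\top Q(q)}{\norm{P(x)}\,\norm{Q(q)}}\big)$ and is therefore monotonically increasing in the cosine similarity between $P(x)$ and $Q(q)$. Hence it suffices to produce a $cS$-similar pair $(x_2,q_2)$ and an $S$-similar pair $(x_1,q_1)$ in $\XX_\bullet=\YY_\bullet$ whose cosine similarities satisfy $\frac{P(x_2)^\top Q(q_2)}{\norm{P(x_2)}\,\norm{Q(q_2)}} > \frac{P(x_1)^\top Q(q_1)}{\norm{P(x_1)}\,\norm{Q(q_1)}}$: then the $cS$-pair collides with strictly higher probability than the $S$-pair, contradicting $p_1>p_2$.

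First I would record the three ingredients. Since $Q(q)=[q;0;\dots;0]$ we have $P(x)^\top Q(q)=U\,x^\top q$ and $\norm{Q(q)}=\norm{q}$. The key computation is $\norm{P(x)}^2$: the telescoping sum $\sum_{i=1}^m(1/2-\norm{Ux}^{2^i})^2 = m/4 - \norm{Ux}^2 + \norm{Ux}^{2^{m+1}}$ cancels the $\norm{Ux}^2$ term, giving $\norm{P(x)}^2 = m/4 + U^{2^{m+1}}\norm{x}^{2^{m+1}}$. Consequently the cosine similarity equals $\frac{U\,(x^\top\hat q)}{\sqrt{m/4+U^{2^{m+1}}\norm{x}^{2^{m+1}}}}$ with $\hat q=q/\norm{q}$, and in particular it does \emph{not} depend on $\norm{q}$. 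This scale-invariance in the query is exactly the leverage the unnormalized setting provides: we may rescale $q$ to realize any target inner product without altering the collision probability.

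Then I would build the two pairs as in Theorem~\ref{thm:no-l2alsh}. For the $S$-pair take unit vectors $x_1,q_1$ with $x_1^\top q_1=S$, giving cosine similarity $US/\sqrt{m/4+U^{2^{m+1}}}$. For the $cS$-pair take a unit vector $x_2$ and set $q_2=cS\,x_2\in\XX_\bullet$, so that $x_2^\top q_2=cS$ while $\hat q_2=x_2$ and the cosine similarity is $U/\sqrt{m/4+U^{2^{m+1}}}$. Since $0<S<1$, the $cS$-pair has the strictly larger (and positive) cosine similarity, so by monotonicity of $1-\frac{\cos^{-1}(\cdot)}{\pi}$ its collision probability is strictly larger, yielding $p_2>p_1$ --- a contradiction. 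The argument invokes no property of $U$ or $m$, which is why it rules out \emph{all} parameter settings simultaneously.

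The main obstacle is the algebraic heart of the second paragraph: recognizing and verifying the telescoping collapse of $\norm{P(x)}^2$ to $m/4+U^{2^{m+1}}\norm{x}^{2^{m+1}}$, and then seeing that the resulting cosine similarity is invariant to $\norm{q}$. This is precisely what lets the $cS$-pair attain the same ``shape'' as a perfectly aligned pair (cosine $U/\sqrt{m/4+U^{2^{m+1}}}$) while still having inner product only $cS$; everything else is a one-line comparison. One caveat worth noting: the construction needs an $S$-similar pair of unit vectors to exist, i.e.\ $S\le 1$, which is the only relevant range since inner products of vectors in $\XX_\bullet$ never exceed $1$.
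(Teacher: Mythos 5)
Your proof is correct and follows essentially the same route as the paper's: the identical pairs $(x_1,q_1)$ with $x_1^\top q_1=S$ and $(x_2,q_2)=(x_2,cS\,x_2)$, the observation that the cosine similarity $\frac{P(x)^\top Q(q)}{\norm{P(x)}\norm{Q(q)}}$ is invariant to $\norm{q}$ so the $cS$-pair achieves cosine $U/\sqrt{m/4+U^{2^{m+1}}}\geq US/\sqrt{m/4+U^{2^{m+1}}}$, and the conclusion via monotonicity of $1-\cos^{-1}(\cdot)/\pi$. You in fact supply more detail than the paper (the telescoping evaluation of $\norm{P(x)}^2$ and the explicit caveat that $S\leq 1$ is needed to realize the $S$-pair, an assumption the paper also makes implicitly).
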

\begin{proof}
Similar to the proof of Theorem \ref{thm:no-l2alsh}, for any $S>0$ and $0<c<1$, let $q_1$ and $x_1$ be unit vectors such that
  $q_1^\top x_1=S$. Let $x_2$ be a unit vector and define $q_2=cSx_2$. For any $U$ and $m$:
\begin{align*}
\frac{P(x_2)^\top Q(q_2) }{\norm{P(x_2)}\norm{Q(q_2)}} &= \frac{cSU}{cS\sqrt{ m/4+\norm{U}^{2^{m+1}}}}\\
&= \frac{U}{\sqrt{ m/4+\norm{U}^{2^{m+1}}}}\\
&\geq\frac{SU}{\sqrt{ m/4+\norm{U}^{2^{m+1}}}}\\
&=\frac{P(x_1)^\top Q(q_1) }{\norm{P(x_1)}\norm{Q(q_1)}} 
\end{align*}  
Now, the same arguments as in Lemma \ref{lem:l2alshbound} using monotonicity of
collision probabilities in $\norm{P(x)-Q(q)}$ establish
\textsc{sign-alsh(sl)} is not an $(S,cS)$-ALSH.
\end{proof}

%%%%%%%%%%%%%%%%%%%%%%%%%%%%
%%%%%%%%%%%%%%%%%%%%%%%%%%%%				Max-norm and Margin Complexity
%%%%%%%%%%%%%%%%%%%%%%%%%%%%
\section{Max-norm and margin complexity}\label{supp:max}
\subsubsection*{Max-norm}
The max-norm (aka $\gamma_2\!\!:\!\!\ell_1\!\!\rightarrow \!\!\ell_\infty$ norm) is defined as \citep{srebro05b}:
\begin{equation}\label{eq:maxnorm}
\maxnorm{X} = \min_{X=UV^\top} \max(\|U\|_{2,\infty}^2, \|V\|_{2,\infty}^2)
\end{equation}
where $\|U\|_{2,\infty}$ is the maximum over $\ell_2$ norms of rows of matrix $U$, i.e. $\|U\|_{2,\infty}= \max_i \norm{U[i]}$. 

For any pair of sets $(\{x_i\}_{1\leq i \leq n},\{y_i\}_{1\leq i \leq
  m})$ and hashes $(f,g)$ over them, let P be the collision
probability matrix, i.e. $P(i,j)=\PP[f(x_i)=g(y_j)]$. In the following
lemma we prove that $\maxnorm{P} \leq 1$:
\begin{lemma}
For any two sets of objects and hashes over them, if $P$ is the collision probability matrix, then $\maxnorm{P} \leq 1$.
\end{lemma}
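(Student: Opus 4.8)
The plan is to prove the bound \emph{constructively}, by exhibiting a single factorization $P=UV^\top$ in which every row of $U$ and every row of $V$ has Euclidean norm exactly $1$. Since $\maxnorm{P}$ is an infimum (over all factorizations) of $\max(\|U\|_{2,\infty}^2,\|V\|_{2,\infty}^2)$, producing one factorization with unit-norm rows immediately yields $\maxnorm{P}\le 1$.

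First I would write each entry of $P$ as an expectation over the randomness of the hash. Let $\mu$ denote the distribution over pairs $(f,g)$ (equivalently, over an underlying seed $\omega$, with $f_\omega,g_\omega$ the resulting deterministic maps), so that
$$
P(i,j)=\PP[f(x_i)=g(y_j)]=\EE_\omega\big[\mathbf{1}[f_\omega(x_i)=g_\omega(y_j)]\big].
$$
The key observation is the identity $\mathbf{1}[a=b]=\sum_{\gamma\in\Gamma}\mathbf{1}[a=\gamma]\,\mathbf{1}[b=\gamma]$, valid because $a$ and $b$ each take exactly one value in $\Gamma$. Substituting it turns the collision indicator into a sum of products, and hence turns $P(i,j)$ into an inner product of feature vectors.

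Concretely, I would work in the Hilbert space $\HH=L^2(\Omega\times\Gamma)$, where $\Omega$ carries the measure $\mu$ and $\Gamma$ the counting measure, and define
$$
u_i(\omega,\gamma)=\mathbf{1}[f_\omega(x_i)=\gamma],\qquad v_j(\omega,\gamma)=\mathbf{1}[g_\omega(y_j)=\gamma].
$$
A direct computation gives $\inner{u_i,v_j}=\EE_\omega\big[\sum_{\gamma}\mathbf{1}[f_\omega(x_i)=\gamma]\,\mathbf{1}[g_\omega(y_j)=\gamma]\big]=P(i,j)$, so collecting the $u_i$ and $v_j$ as the rows of $U$ and $V$ yields $P=UV^\top$. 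For the norms, note that for each fixed $\omega$ exactly one $\gamma$ satisfies $f_\omega(x_i)=\gamma$, so $\sum_{\gamma}u_i(\omega,\gamma)^2=1$ pointwise; integrating against $\mu$ gives $\norm{u_i}^2=\int 1\,d\mu=1$, and likewise $\norm{v_j}=1$. Hence $\|U\|_{2,\infty}^2=\|V\|_{2,\infty}^2=1$, and $\maxnorm{P}\le 1$ follows.

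The only point requiring care, and the only thing beyond routine algebra, is that the distribution over hashes need not be finitely supported; for this reason I set the argument up in $L^2$ from the start rather than with explicit finite matrices, after which the norm computation is immediate because the total mass of $\mu$ is $1$ and the alphabet indicators sum to $1$ pointwise. If one prefers to stay literally with finite-dimensional factorizations, the same vectors can be written out as finite matrices whenever $\Omega$ and $\Gamma$ are (countably) discrete, which already covers the hashes used in the paper.
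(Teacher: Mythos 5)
Your proof is correct, and while it rests on the same underlying identity as the paper's --- writing the collision indicator as $\mathbf{1}[a=b]=\sum_{\gamma\in\Gamma}\mathbf{1}[a=\gamma]\mathbf{1}[b=\gamma]$, i.e.\ factoring through the alphabet indicators --- it handles the expectation over the hash differently, and this is a genuine divergence. The paper fixes a realization $(f,g)$, factors the resulting $0$/$1$ biclustering matrix $\kappa_{f,g}=R_fR_g^\top$ to get $\maxnorm{\kappa_{f,g}}\leq 1$, and then passes to $P=\EE[\kappa_{f,g}]$ via convexity of the max-norm and Jensen's inequality. You instead fold the randomness into the feature space itself, working in $L^2(\Omega\times\Gamma)$ so that a \emph{single} factorization $P=UV^\top$ with unit-norm rows emerges and no convexity argument is needed. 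What your route buys is directness (one explicit factorization, no appeal to Jensen for a norm over a possibly continuous distribution of matrices); what it costs is that the ambient space is a priori infinite-dimensional, whereas the paper's definition of $\maxnorm{\cdot}$ is via finite matrix factorizations. You flag this, but your patch (assuming $\Omega$ countable) is weaker than necessary: since there are only $n+m$ vectors $u_i,v_j$, they span a subspace of dimension at most $n+m$, and expressing them in an orthonormal basis of that span yields a finite factorization with the same inner products and the same row norms, with no discreteness assumption on $\Omega$. With that one-line remark in place of your closing caveat, the argument is complete.
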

\begin{proof}
For each $f$ and $g$, define the following biclustering matrix:
\begin{equation}
\kappa_{f,g}(i,j)=
\begin{cases}
1 & f(x_i)=g(y_j)\\
0 & \text{otherwise.}
\end{cases}
\end{equation}
For any function $f:\ZZ \rightarrow \Gamma$, let $R_f\in \{0,1\}^{n \times |\Gamma|}$ be the indicator of the values of function $f$:
\begin{equation}
R_{h}(i,\gamma)=
\begin{cases}
1 & h(x_i)=\gamma\\
0 & \text{otherwise,}
\end{cases}
\end{equation}
and define $R_g \in \{0,1\}^{m\times |\Gamma|}$ similarly. 
It is easy to show that $\kappa_{f,g} = R_f R_g^\top$ and since
$\|R_f\|_{2,\infty}=\|R_g\|_{2,\infty}=1$, by the definition of the
max-norm, we can conclude that $\maxnorm{\kappa_{f,g}}\leq 1$.  But
the collision probabilities are given by $P=\EE[\kappa_{f,g}]$, and so
by convexity of the max-norm and Jensen's inequality,
$\maxnorm{P}=\maxnorm{\EE[\kappa_{f,g}]}\leq
\EE[\maxnorm{\kappa_{f,g}}]\leq 1$.
\end{proof}
It is also easy to see that $1_{n\times n}=RR^\top$ where $R=1_{n\times 1}$. Therefore for any $\theta\in \RR$,
$$
\maxnorm{ \theta_{n \times n} } = \theta \maxnorm{ 1_{n \times n} } \leq |\theta|
$$
\subsubsection*{Margin complexity}

For any sign matrix $Z$, the margin complexity of $Z$ is defined as:
\begin{align}
\min_{Y} &\;\;\;\maxnorm{Y}\\
\nonumber
\text{s.t.} &\;\;\;Y(i,j)X(i,j) \geq 1\;\;\; \forall i,j
\end{align}

Let $Z\in \{\pm 1 \}^{N\times N}$ be a sign matrix with +1 on and above the diagonal and -1 below it. \citet{forster03} prove that the margin complexity of matrix $Z$ is $\Omega(\log N)$.

\end{document}